\documentclass[pdflatex,sn-mathphys-ay]{sn-jnl}

\usepackage{graphicx}%
\usepackage{multirow}%
\usepackage{amsmath,amssymb,amsfonts}%
\usepackage{amsthm}%
\usepackage{mathrsfs}%
\usepackage[title]{appendix}%
\usepackage{xcolor}%
\usepackage{textcomp}%
\usepackage{manyfoot}%
\usepackage{booktabs}%
\usepackage{algorithm}%
\usepackage{algorithmicx}%
\usepackage{algpseudocode}%
\usepackage{listings}%
\usepackage[utf8]{inputenc}
\usepackage[T1]{fontenc}

\usepackage{mathtools, bm, bbm}
\usepackage{geometry}
\usepackage{graphicx, subcaption, xcolor, array, tikz-cd}

\usepackage{hyperref}
\usepackage{braket}

\usepackage[capitalize]{cleveref}

\theoremstyle{thmstyleone}%
\newtheorem{thm}{Theorem}
\newtheorem{prop}{Proposition}
\newtheorem{lemma}{Lemma}

\theoremstyle{thmstyletwo}%
\newtheorem{rem}{Remark}

\theoremstyle{thmstylethree}%
\newtheorem{definition}{Definition}%

\graphicspath{{figures/}}

\input{defs.tex}

\makeatletter

\def\acknowname{Acknowledgements}%
\def\acknow#1{\ifx#1\empty\else\def\@acknow{\par\addvspace{10pt}{\keywordfont{\bfseries\acknowname:} #1\par}}\fi}%
\def\@acknow{}%

\def\keywordname{Keywords}%
\def\keywords#1{\ifx#1\empty\else\def\@keywords{\par\addvspace{10pt}{\keywordfont{\bfseries\keywordname:} #1\par}}\fi}%
\def\@keywords{}%

\def\printkeywords{\ifx\@keywords\empty\else\@keywords\fi\par%
    \ifx\@acknow\empty\else\@acknow\fi\par}
\makeatother

\begin{document}
\title{Revisiting the Sliced Wasserstein Kernel for Persistence Diagrams: a Figalli--Gigli approach.}


\author[1]{\fnm{Marc} \sur{Janthial} \email{marc.janthial@polytechnique.org}}

\author[2]{\fnm{Théo} \sur{Lacombe} \email{theo.lacombe@univ-eiffel.fr}}

\affil[1]{\orgname{École polytechnique}, \orgaddress{\city{Palaiseau}, \country{France}}}

\affil[2]{\orgdiv{Laboratoire d’Informatique Gaspard Monge}, \orgname{Univ. Gustave Eiffel, CNRS, LIGM, F-77454}, \orgaddress{\city{Marne-la-Vallée}, \country{France}}}


\abstract{ 
    The Sliced Wasserstein Kernel (SWK) for persistence diagrams was introduced in \citep{pmlr-v70-carriere17a} as a powerful tool to implicitly embed persistence diagrams in a Hilbert space with reasonable distortion. 
    This kernel is built on the intuition that the Figalli--Gigli distance---that is the partial matching distance routinely used to compare persistence diagrams---resembles the Wasserstein distance used in the optimal transport literature, and that the later could be sliced to define a positive definite kernel on the space of persistence diagrams. This efficient construction nonetheless relies on ad-hoc tweaks on the Wasserstein distance to account for the peculiar geometry of the space of persistence diagrams. 
    
    In this work, we propose to revisit this idea by directly using the Figalli--Gigli distance instead of the Wasserstein one as the building block of our kernel. On the theoretical side, our sliced Figalli--Gigli kernel (SFGK) shares most of the important properties of the SWK of Carrière et al., including distortion results on the induced embedding and its ease of computation, while being more faithful to the natural geometry of persistence diagrams. In particular, it can be directly used to handle infinite persistence diagrams and persistence measures. On the numerical side, we show that the SFGK performs as well as the SWK on benchmark applications.
}

\keywords{Topological data analysis, Optimal Transport, Sliced Wasserstein distance, Kernel methods}

\acknow{Authors thank Clément Bonet for fruitful discussions. The work of TL is supported by the Agence Nationale de la Recherche (ANR) under grant ANR-24-CE23-7711 (project TheATRE). This work was done when MJ was intern at Laboratoire d’Informatique Gaspard Monge (Univ. Gustave Eiffel, CNRS).   
}



\maketitle

\section{Introduction}
Topological Data Analysis (TDA) is an emerging field in data analysis which aims to design topological descriptors of complex structured objects. 
Its main tools are built upon persistent homology theory, and the most well-known descriptor it produces is called the persistence diagram (PD). 
PDs enjoy strong stability properties with respect to perturbation of the data \citep{stability_Chazal}, and have found many applications in several fields such as computer graphics \citep{pascucci2010topological,carriere2015stable,tierny2017topology}, material science \citep{hiraoka2016hierarchical,saadatfar2017pore,olejniczak2023topological}, computational biology \citep{bukkuri2021applications,aukerman2022persistent,chung2024morphological}, to name a few.  
Nevertheless, their use in those applications is not straightforward. 
Indeed, PDs take the form of point clouds with multiplicities in $\bR^2$ and are typically compared using transport-like metrics \citep{FG10, DL20}, which are quite expensive to compute in practice \citep{peyré2020computationaloptimaltransport}. 
Furthermore, the space of PDs equipped with such metrics is not Hilbert \citep{turner_frechet_2014, turner_same_2020, bubenik_embeddings_2020}, preventing their direct use in learning methods which require that structure on the descriptor space (e.g.~PCA, SVM). 
A workaround explored in the literature consists in defining kernels on that space to map PDs to vectors in a (possibly infinite-dimensional) Hilbert space. 
Several contributions have been made using this approach \citep{reininghaus-pss, JMLR:v18:17-317, JMLR:v16:bubenik15a} by defining an explicit embedding of PDs in a Hilbert space. 

Particularly inspiring for this work, \citet{pmlr-v70-carriere17a} proposed the \textit{Sliced Wasserstein Kernel} (SWK) based on the \textit{Sliced Wasserstein distance} used in computational optimal transport \citep{rabin2011wasserstein}. 
That distance enjoys important stability properties, is fairly easy to compute and---of crucial importance---is provably conditionally negative definite on the space of finite PDs and can therefore be used to define a (Gaussian or Laplace) kernel on the space of persistence diagrams. 
Their construction is presented in more details in \cref{subsubsec:kernel-for-PD}. 

\bmhead*{Contributions} In this work, we introduce a new approach to defining sliced distances on the space of persistence diagrams that remains faithful to its underlying geometry. 
Building on the framework of \citet{DL20}, which relies on the Figalli–Gigli metric~\citep{FG10}—a variant of the Wasserstein distance specifically designed to account for the distinguished role of the diagonal—we define the \emph{Sliced Figalli–Gigli distance} between persistence diagrams.
A key feature of the resulting distance is that, unlike its Sliced Wasserstein counterpart, it naturally extends to infinite persistence diagrams and, more generally, to persistence measures. This places $\SFG$ within a broader and more intrinsic geometric framework for comparing topological descriptors, rather than restricting it to the finite-diagram setting.
We show that this increased generality does not come at the expense of either theoretical guarantees or computational tractability. In particular, we establish stability results for $\SFG$ comparable\footnote{and, to some extent, more general as we derive them for any exponent $p \geq 1$ while \citet{pmlr-v70-carriere17a} restrict the analysis to $p=1$.} to those known for the Sliced Wasserstein distance, and we propose efficient algorithms for its computation. Our numerical experiments further demonstrate that kernels derived from $\SFG$ achieve empirical performance on par with that of the Sliced Wasserstein kernel. 
Overall, the $\SFG$ distance and its associated kernel can be viewed as a principled refinement of the Sliced Wasserstein kernel of~\citet{pmlr-v70-carriere17a}, providing a unified framework that applies to a wider class of persistence-based representations while preserving both theoretical soundness and practical efficiency.

\bmhead*{Outline.} In \cref{sec:background}, we introduce the necessary background on persistence diagrams and kernel methods before reviewing related works in the literature. 
In \cref{sec:SFG} we define the \textit{Sliced Figalli--Gigli} distance and study its theoretical properties. 
In \cref{sec:expe}, we give algorithms for efficient computation of that new distance and discuss the experimental performances of the resulting kernel in numerical applications. 

\section{Background}\label{sec:background}
\subsection{Persistent homology and persistence diagrams} \label{sect:pers-hom}
Persistent homology is a machinery based on algebraic topology used to define stable descriptors of real-valued functions on topological spaces. 
Persistence diagrams (PDs) are one of the most well-known of these descriptors and they, roughly speaking, encode information about \textit{topological components} (such as loops, connected components, enclosed surfaces...) of the underlying function and topological space (see \citep{edelsbrunner2010computational} for an introduction).

More precisely, given a topological space $X$ and a function $f: X \to \bR$, define $H_k(t)$ to be the $k$-th homology group of $X_t \coloneqq f^{-1}([-\infty, t])$ over an arbitrary field. 
For $s \leq t$, the inclusion $X_s \hookrightarrow X_t$ induces a map $\iota_{s,t}$ between the homology groups $H_k(s)$ and $H_k(t)$. By functoriality of homology, these maps satisfy that for all $r \leq s \leq t$ the following diagram commutes:

\begin{equation}\label{dgrm:pers-homology}
\begin{tikzcd}
H_k(r) \arrow[rr, "\iota_{r,t}"] \arrow[dr, "\iota_{r,s}", swap] & & H_k(t) \\
& H_k(s) \arrow[ur, "\iota_{s,t}"'] &
\end{tikzcd}
\end{equation}
The collection of the homology groups $(H_k(t))_{t\in\bR}$ along with the maps $(\iota_{s,t})_{s,t \in \bR}$ satisfying \eqref{dgrm:pers-homology} form a \textit{persistence module} $M_f$. 
It has been shown \citep{botnan2019decompositionpersistencemodules} that under general assumptions this persistence module can be uniquely decomposed over a family $(k_I)_{I\subset \bR}$ indexed over the intervals of $\bR$. 
Given the decomposition $M_f \coloneqq \bigoplus_j k_{I_j}$, the \textbf{persistence diagram} $D$ of $f$ is the multiset of the endpoints of the intervals $I_j$ and is therefore a multiset supported on the open half-plane $\groundspace \coloneqq \{(b,d) \in \bR^2, b < d\}$. 
Following \citet{chazal:hal-01330678}, one may equivalently think of PDs as \textbf{locally finite measures supported on $\groundspace$ of the form $\sum_xn_x\delta_x$}, where $\delta_x$ denotes the Dirac mass located at $x \in \groundspace$ and $n_x \in \bN$ denotes the multiplicity of $x$. 
We adopt this perspective in the rest of this work.

A point $(b,d)$ being in $D$ can be interpreted as a $k$-dimensional homology component appearing (being "born") in $X_b$ and disappearing ("dying") in $X_d$. 
As such the persistence diagram encodes the birth and death of \textit{topological components} (connected component, loop, void \dots) at all scales. 
One of the main advantages of PDs is their stability with respect to perturbation of the data \citep{stability_Chazal} which has motivated their use in several machine learning tasks.
The interested reader may refer to \citep{edelsbrunner2010computational,oudot_persistence} for more comprehensive introductions. 

\subsection{Distances between persistence diagrams}

Treating PDs as locally finite measures supported on $\groundspace$, distances between PDs can be built following \citep{DL20}, relying on the formalism introduced by \citet{FG10}. 
Let $\cM(\groundspace)$ be the space of non-negative Radon measures supported on $\groundspace$, that is the space of non-negative measures $\mu$ supported on $\groundspace$ such that for every compact subset $A \subset \groundspace$, one has $\mu(A) < +\infty$. 
Given $\mu \in \cM(\groundspace)$, define the $p$-persistence of $\mu$ as
\begin{equation} 
\mathrm{Pers}_p(\mu) \coloneqq \left(\int_\groundspace d(x,\thediag)^p \mu(x) \right)^{\frac{1}{p}}, 
\end{equation}
where $d(x,\thediag)$ denote the Euclidean distance between a point $x \in \groundspace$ and its orthogonal projection onto the diagonal $\thediag \coloneqq \{b = d \}$. 
Let the set of measures with finite persistence, called the space of \emph{persistence measures}, be defined as
\begin{equation} \cM^p(\groundspace) \coloneqq \{\mu \in\cM(\groundspace) : \mathrm{Pers}_p(\mu) < +\infty\}. 
\end{equation}

With this formalism, the set of persistence diagrams $\cD(\groundspace)$ is the subset of $\cM(\groundspace)$ consisting of \emph{point measures}, i.e.~locally finite Radon measures of the form $\sum_{i \in I} \delta_{x_i}$ where $\delta_{x_i}$ denote the Dirac mass at $x_i \in \groundspace$, and $I$ is a (possibly countably infinite) set of indices. 
Let also $\cD^p(\groundspace) \coloneqq \cD(\groundspace) \cap \cM^p(\groundspace)$. 
The Figalli--Gigli distance $\FG_p$ is then defined on $\cM^p(\groundspace)$ as:

\begin{equation}  \tag{$\FG_p$} \label{eq:def-FG} 
    \FG_p(\mu, \nu) \coloneqq \left( \inf_{\gamma \in \Adm(\mu,\nu)} \iint_{\overline{\groundspace} \times \overline{\groundspace}}d(x,y)^p \dd \gamma(x,y)\right)^{\frac{1}{p}}.
\end{equation}

where $d$ is again the Euclidean distance\footnote{Any other $q$-norm could be used seamlessly for $q > 1$.} on $\bR^2$ and the set of admissible transport plans $\Adm(\mu,\nu)$ is the set of Radon measures supported on $\overline{\groundspace} \times \overline{\groundspace}$ satisfying that for all Borel subsets $A, B \subset \groundspace$ ;
\begin{equation} \gamma(A \times \overline{\groundspace}) = \mu(A) \quad \text{ and } \quad \gamma(\overline{\groundspace} \times B) = \nu(B).
\end{equation}

\begin{rem}\label{rem:comparison}
This formalism slightly differs from the usual definitions of PD metrics found in the TDA literature, where PDs are encoded as multi-sets on $\groundspace$, and for which \eqref{eq:def-FG} is replaced by
    \begin{equation} \tag{$d_p$}
    \label{eq:def-std-pd-dist} 
        d_p(\mu,\nu) \coloneqq \left(\inf_{\gamma \in \Gamma(\mu,\nu)} \sum_{x \in \mspt(\mu) \cup \thediag} d(x, \gamma(x))^p\right)^{\frac{1}{p}},
    \end{equation}
    where $\Gamma(\mu,\nu)$ is the set of bijections between $\mspt(\mu) \cup \thediag$ and $\mspt(\nu) \cup \thediag$, where $\mspt$ denote the support of a point measure where points are counted with multiplicity. 
    \citet{DL20} prove that \eqref{eq:def-std-pd-dist} and $\FG_p(\mu,\nu)$ coincide when $\mu,\nu \in \cD^p(\groundspace)$, the latter presenting the advantage of being well-defined on persistence measures, a more general class of objects which proved to be useful in TDA through various works \citep{chazal2018density,divol2021estimation,cao2022approximating,wu2024estimation}.
\end{rem}

\begin{rem}
These distances resemble the Wasserstein distance between probability measures introduced in the optimal transport literature (see \citep{santambrogio}), defined as 
\begin{equation} \label{eq:def-Wasserstein-distance} \tag{$\rW_p$}
\rW_p(\mu,\nu) = \left(\inf_{\gamma \in \Pi(\mu,\nu)}\iint_{\groundspace \times \groundspace}d(x,y)^p\dd \gamma(x,y)\right)^{\frac{1}{p}},
\end{equation}
where the set of admissible transport plans $\Pi(\mu,\nu)$ is in this case the set of measures supported on $\groundspace \times \groundspace$ with marginals $\mu$ and $\nu$.

This similarity led the TDA community to refer to \eqref{eq:def-std-pd-dist} as the ``Wasserstein distance between persistence diagrams'' \citep{turner_frechet_2014,berwald2018computing,skraba2020wasserstein,bubenik2022virtual}. 
We stress that \eqref{eq:def-std-pd-dist} nonetheless differs from \eqref{eq:def-Wasserstein-distance}: the Wasserstein distance is a distance between non-negative measures with the \emph{same (finite) total mass}, where the whole mass of a source measure $\mu$ must be transported exactly to the target measure $\nu$. 
In particular, in contrast to \eqref{eq:def-std-pd-dist}, \eqref{eq:def-Wasserstein-distance} does not allow one to transport an arbitrary (possibly infinite) amount of mass to and from the diagonal $\thediag$. 
The Sliced Wasserstein Kernel of Carrière et al.~\citep{pmlr-v70-carriere17a} is built on the actual Wasserstein distance \eqref{eq:def-Wasserstein-distance} in dimension one, while the kernel we introduce in this work relies on \eqref{eq:def-FG}. 
To avoid confusion, we will refer to \eqref{eq:def-FG} (and \eqref{eq:def-std-pd-dist}) as the Figalli--Gigli distance between persistence measures (in particular diagrams).
\end{rem}

\subsection{Kernel methods} \label{subsec:kernel-methods}

\bmhead*{Positive definite kernels.}
Given a set $X$ and a symmetric function $k \colon X \times X \to \bR$, $k$ is said to be a positive semi-definite kernel if for every input $(x_1, \dots, x_n) \in X^n$ the matrix $(k(x_i, x_j))_{ij} \in \bR^{n \times n}$ is itself positive semi-definite. 
Positive semi-definite kernels (simply referred to as kernels for the sake of concision) are useful as they are equivalent to feature maps: for every kernel $k$ there exists a Hilbert space $\cH$, called the Reproducing Kernel Hilbert Space (RKHS), and a feature map $\phi : X \to \cH$ for which the following holds:
\begin{equation} 
\forall x, y \in X, \quad \langle\phi(x), \phi(y)\rangle_\cH = k(x, y).
\end{equation}
As such, for all downstream learning tasks relying only on values of the inner product between elements of $\phi(X)$ (including many learning methods such as PCA, SVM, Ridge regression, etc.) only the value of $k(x,y)$ is necessary and no explicit knowledge of the embeddings $\phi(x),\phi(y)$ is required. 
This idea is known as the ``kernel trick''.

\bmhead*{Conditionally negative definite kernels.} A common way of defining kernels is, given a Hilbert space with distance $d$, to set $k_\sigma(x,y) \coloneqq \exp(-\frac{d(x,y)^2}{\sigma^2})$ for $\sigma > 0$. 
More generally, \citep[Theorem 3.2.2]{Berg1984} states that setting $k_\sigma(x,y) \coloneqq \exp(-\frac{f(x,y)}{\sigma^2})$ only yields a kernel if the function $f : X \times X \to \bR$ is conditionally negative definite, that is:
\begin{equation}
    \forall x_1, \dots, x_n \in X, \forall a_1, \dots, a_n \in \bR, \quad \sum_{i=1}^n a_i = 0\implies \sum_{i,j} a_ia_jf(x_i,x_j) \leq 0.
\end{equation}
As shown in \citep{curvature}, if the square of the distance function of a metric space $(X,d)$ is conditionally negative definite, then $X$ is flat (i.e.~isometrically embeddable in a Hilbert space) or $\mathrm{CAT}(0)$. 
However since $\cD^p$ is not $\mathrm{CAT}(k)$ for any $k > 0$ \citep{turner_frechet_2014}, $d_p^2$ is not c.n.d for any $p$. 
This motivates the search for other ways to define kernels on PDs that would nonetheless be faithful to the geometry induced by the Figalli--Gigli distance. 

\subsection{Related works} \label{subs:related-works}

\subsubsection{The Sliced Wasserstein distance for probability measures}

While the Wasserstein distance between probability measures \eqref{eq:def-Wasserstein-distance} is expensive to compute (typically of order $O(N^3)$ for two measures supported on $N$ points), the problem is known to become much simpler in dimension $1$: in that case, the optimal transport plan $\gamma$ is supported on the graph of the (unique) monotone map matching the quantiles of $\mu$ to that of $\nu$, i.e.~$x \mapsto F_\nu^{-1} \circ F_\mu(x)$, where $F_\rho(x) \coloneqq \rho((-\infty,x])$ for a probability distribution $\rho$ supported on $\bR$. 
See \citep[Ch.~2]{santambrogio} for an extensive overview. 
In particular, if $\mu$ (resp.~$\nu$) is supported on $N$ points $x_1 < \dots < x_N$ (resp.~$y_1 < \dots < y_N$) with uniform weight $\frac{1}{N}$, one simply has $\rW_p(\mu,\nu)^p = \sum_{i=1}^n |x_i - y_i|^p$ ; reducing the computational cost to $O(N \log(N))$ (that of sorting the points). 

Building on this idea, \citet{rabin2011wasserstein} introduce the \emph{Sliced Wasserstein distance} between probability measures:
\begin{equation} \label{eq:def-sliced-W-proba}
    \mathrm{SW}_p(\mu,\nu)^p \coloneqq \int_{\theta \in S^{d-1}} W_p(\pi_\theta \# \mu, \pi_\theta \# \nu)^p \dd \theta,
\end{equation}
where $\pi_\theta \colon \bR^d \to \bR$ is the projection $x \mapsto \braket{x,\theta}$, and $\#$ denotes the pushforward operator, i.e.~$\pi_\theta \# \mu(A) = \mu(\pi_\theta^{-1}(A))$ for all Borel set $A \subset \bR$. 
Approximating the integral by sampling $K$ direction on the sphere $S^{d-1}$, one obtains a Wasserstein-like distance between probability measures that can be computed in $O(K N \log(N))$, a significant improvement over \eqref{eq:def-Wasserstein-distance}. 

The Sliced Wasserstein distance has attracted attention on the theoretical side \citep{nadjahi2020statistical,xi2022distributional,park2025geometry} and has found several applications in machine learning \citep{deshpande2018generative,nadjahi2021sliced}. 
In particular \citet{kolouri2016sliced} observed that $(\mu,\nu) \mapsto \exp \left( - \frac{\mathrm{SW}_2(\mu,\nu)^2 }{2 \sigma^2} \right)$ define a Gaussian-like kernel on the set of probability measures. 

Several extensions have been proposed, for instance by replacing the average by a maximum in \eqref{eq:def-sliced-W-proba} \citep{deshpande2019max, kolouri2019generalized,boedihardjo2025sharp}, and adapting it non-Euclidean geometries \citep{bonet2023sliced, bonet2023hyperbolic, bonet2025sliced}, where projections on lines are replaced by (Busemann) projection on geodesics. 
We will take inspiration of this framework\footnote{Though that literature focuses on negatively curved manifolds, which unfortunately does not cover our framework.} when defining the Sliced Figalli--Gigli distance between persistence diagrams. 

\subsubsection{Kernels for Persistence Diagrams}\label{subsubsec:kernel-for-PD}
Several kernels on persistence diagrams have been proposed in the literature, typically divided in two classes. 
The first consists of defining kernels using explicit feature maps: given any map $\phi \colon \cD \to \bR^d$, one can build by definition a kernel by setting $(\mu,\nu) \mapsto \braket{\phi(\mu),\phi(\nu)}$. This includes for instance \citep{JMLR:v16:bubenik15a,adams2017persistence}.
Closer to our purpose, the second class consists of defining implicit embeddings by directly building kernels on $\cD$. 
For the sake of completeness, we briefly review the first two kernels of that type, before presenting in details the Sliced Wasserstein Kernel for persistence diagrams, from which our work is vastly inspired.

\bmhead*{The persistence ``Weighted Gaussian'' (PWG) and ``Scale-Space'' (PSS) kernels.} 
These two kernels are the first examples of PSD kernels on PDs defined through the kernel operator $k \colon \cD^p(\groundspace) \times \cD^p(\groundspace) \to \bR$, instead of relying on an explicit embedding of PDs in a Hilbert space. 
The PWG kernel is introduced in \citep{pmlr-v48-kusano16, JMLR:v18:17-317} and is defined in the following way. 
Let $C > 0$, $\mu$ be a persistence diagram, $k_\rho$ be the gaussian kernel on $\groundspace \times \groundspace$ with parameter $\sigma > 0$ and $\cH_\sigma$ the corresponding RKHS.
Define $\alpha_\mu \coloneqq \sum_{x \in \mspt(\mu)} \arctan(Cd(x,\thediag)^p)k_\sigma(\cdot, x) \in \cH_\sigma$ to be the kernel mean embedding of $\mu$ weighted by the distance of its points to the diagonal.
Eventually, define for $\tau > 0$
\begin{equation} k_\mathrm{PWG}(\mu_1, \mu_2) \coloneqq \exp\left(-\frac{\|\alpha_{\mu_1} - \alpha_{\mu_2}\|^2}{\tau^2}\right).
\end{equation}
The weighting of the embedding is motivated by the intuition that generators far from the diagonal carry more information than those that are close to it which typically reflect noise in the data.

The PSS kernel is defined in \citep{reininghaus-pss} as the scalar product in $L^2(\overline{\groundspace})$ of the two solutions of the heat diffusion equations with Dirac sources located at every point of the diagrams, namely:
\begin{equation}
    k_\mathrm{PSS}(\mu_1, \mu_2) \coloneqq \frac{1}{8\pi\sigma}\sum_{\substack{p \in \mspt(\mu_1 )\\ q \in \mspt(\mu_2)}}\exp\left(-\frac{\|p-q\|^2}{8\sigma} \right) - \exp\left(-\frac{\|p - \bar q\|^2}{8\sigma}\right),
\end{equation}
where $\bar q$ is the symmetric of $q$ with respect to the diagonal, and $\sigma > 0$. 

\bmhead*{The Sliced Wasserstein Kernel (SWK).} 

While PWG and PSS kernels can be proved to be stable, few is known about there discriminative power. 
To improve on this, \citet{pmlr-v70-carriere17a} introduced a \emph{Sliced Wasserstein kernel for persistence diagrams}, proposing an adaptation of the Sliced Wasserstein distance defined in \eqref{eq:def-sliced-W-proba} to the context of persistence diagrams.

More precisely, given two persistence diagrams $\mu, \nu$ and $\theta \in \bS$, let $\mu_\theta \coloneqq \pi_\theta\#(\mu + \pi_\Delta\#\nu)$ and $\nu_\theta \coloneqq \pi_\theta\#(\nu + \pi_\Delta\#\mu)$ where $\pi_\theta$ is the orthogonal projection on the line $l_\theta$ going through $(0,0)$ with angle $\theta$ to the diagonal and $\pi_\Delta$ is the orthogonal projection on the diagonal. This corresponds to projecting all the points of one diagram on the diagonal and then projecting those points along with the ones from the other diagram on $l_\theta$. The kernel is then defined as:
\begin{equation}
k_\mathrm{SW}(\mu, \nu) \coloneqq \exp\left(-\frac{\SW(\mu, \nu)}{\sigma^2}\right),
\end{equation}
where $\SW(\mu, \nu) = \int_\bS \rW_1(\mu_\theta, \nu_\theta) \dd \theta$. 
This does yield a valid kernel since the Wasserstein distance on 1-dimensional measures is the $L^1$ distance between their quantile functions and is therefore c.n.d. 
This special form of the Wasserstein distance in one dimension also allows for efficient computation of this kernel as computing the quantile function of a 1-dimensional empirical measure boils down to sorting its points.

The authors also obtain bounds relating the $\SW$ distance and the usual $d_1$ distance for finite persistence diagrams. 
Namely,
\begin{equation}\label{eq:result-carriere-stability}
    \frac{d_1(\mu,\nu)}{1 + 2N(N-1)} \leq \SW(\mu,\nu) \leq 2\sqrt2 d_1(\mu,\nu)
\end{equation}
for any persistence diagrams $\mu,\nu \in \cD(\groundspace)$ with less than $N$ points. 
This kernel proved to perform better in various tasks compared to the PSS and PWG kernels while also being more computationally efficient.

\begin{rem}
The $\SW$ distance is only defined for finite persistence diagrams and it is unclear whether the construction proposed in \citep{pmlr-v70-carriere17a} can be generalized to infinite persistence diagrams or persistence measures. 
In particular, projected measures $\mu_\theta$ may not be Radon and as such the Wasserstein distance between them would be undefined. 
The formalism we introduce in this work, while substantially similar, presents the advantage of being faithfully defined for arbitrary persistence diagrams and measures. 
Furthermore, the bounds derived by \citet{pmlr-v70-carriere17a} only consider the exponent $p=1$, while the ones we present in this work hold for any $p \geq 1$. 
\end{rem}

\section{The Sliced Figalli--Gigli distance and the induced kernel}\label{sec:SFG}
Following the formalism introduced in \citep{DL20}, we propose to adopt a similar approach as in \citep{pmlr-v70-carriere17a} by \textit{slicing} the $\FG_p$ distance using \emph{geodesics} emanating from the diagonal $\thediag$.

\subsection{Motivation \& Defintion}
Consider $x,y \in \groundspace$. Observe that as $x$ and $y$ approach $\thediag$, one has $\FG_p(\delta_x,\delta_y) \to 0$.
Therefore, equipping $\cM^p(\groundspace)$ with the distance $\FG_p$ invites us to identify all points of the diagonal $\thediag$. 
More formally, $\FG_p$ induces a distance function $\Delta_p$ on $\overline{\groundspace}$ defined by
\begin{equation}
\Delta_p^p(x,y) \coloneqq \FG_p^p(\delta_x, \delta_y) = \min(d(x,y)^p, d(x, \pi(x))^p + d(y,\pi(y))^p),
\end{equation}
where $\pi$ is the orthogonal projection on the diagonal.
This distance is related to the natural distance induced by $d$ on $\tilde\groundspace \coloneqq \groundspace \cup \{\thediag\}$ the quotient of the closed half-plane $\overline{\groundspace}$ by $\thediag$. Indeed, $d$ induces a function $\tilde d$ on $\tilde\groundspace$ which satisfies $\tilde d(x,\thediag) = d(x, \pi(x))$. We then define a distance $\rho$ on $\tilde \Omega$ by setting:
\begin{equation}
    \rho(x, y) = \min(\tilde{d}(x,y), \tilde{d}(x, \partial\Omega) + \tilde{d}(y, \partial\Omega)).
\end{equation}
As such, $\tilde\Omega$ seems to better represent the geometry of the space of persistence diagrams endowed with $\FG_p$. 
Following ideas introduced in \citep{bonet2023leveraging}, it would then seem natural to define a \textit{Sliced Figalli--Gigli} metric by projecting on geodesics of $\tilde\groundspace$ passing through a given origin $O$. 
Since the point $\thediag$ plays a particular role in the structure of $\tilde\groundspace$ we consider geodesics passing through this point. 
We have the following straightforward result.
\begin{prop}[Geodesics in $\tilde\groundspace$]
    Let $x, y \in \tilde\groundspace$,
    \begin{enumerate}
    \item If $\rho(x,y) = \tilde{d}(x, y)$ the geodesic from $x$ to $y$ is the straight line from $x$ to $y$, i.e.,
    \begin{equation}
        \gamma_{x,y}(t) = (1-t)x + ty.
    \end{equation}
    \item If $\rho(x,y) = \tilde{d}(x, \partial\Omega) + \tilde{d}(y, \partial\Omega) = \|x - \pi(x)\| + \|y - \pi(y)\|$, the geodesic from $x$ to $y$ goes through the diagonal and we have
    \begin{equation}
        \gamma_{x,y} \colon
        \left\{
        \begin{array}{ll}
             t \in [0, \alpha_x] \mapsto (1 - \frac{t}{\alpha_x})x + \frac{t}{\alpha_x}\thediag \\[8pt]
             t \in[\alpha_x, 1] \mapsto (1 - \frac{t - \alpha_x}{\alpha_y})y + \frac{1 - t}{\alpha_y}\thediag
        \end{array}
        \right.
    \end{equation}
    where $\alpha_x = \frac{\|x - \pi(x)\|}{\|x - \pi(x)\| + \|y - \pi(y)\|}$ and $\alpha_y = 1 - \alpha_x$.
\end{enumerate}
\end{prop}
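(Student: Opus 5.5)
We verify directly that each proposed curve $\gamma_{x,y}\colon[0,1]\to\tilde\groundspace$ is a constant-speed geodesic. That is, we check that $\gamma_{x,y}(0)=x$, $\gamma_{x,y}(1)=y$, and
\begin{equation*}
\rho(\gamma_{x,y}(s),\gamma_{x,y}(t)) = |t-s|\,\rho(x,y) \quad \text{for all } s,t\in[0,1].
\end{equation*}
By the triangle inequality for $\rho$, it suffices to prove the upper bound $\rho(\gamma(s),\gamma(t))\le |t-s|\rho(x,y)$ for all $s\le t$. If the inequality were strict for some pair, then chaining $0\le s\le t\le 1$ would give $\rho(x,y)<\rho(x,y)$.

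\textbf{Case 1: $\rho(x,y)=\tilde d(x,y)$.} Let $z_s=(1-s)x+sy$, which lies in $\overline\groundspace$ by convexity. Since $\rho\le\tilde d\le d$ on representatives, we get
\begin{equation*}
\rho(z_s,z_t)\le \|z_s-z_t\| = |t-s|\,\|x-y\|.
\end{equation*}
It remains to check that $\tilde d(x,y)=\|x-y\|$ in this case. A priori $\tilde d(x,y)$ is the quotient distance, which could route through the diagonal; but that alternative is exactly the second term of the minimum defining $\rho$. So when the minimum is attained by the first term, the straight segment realises it.

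\textbf{Case 2: $\rho(x,y)=\|x-\pi(x)\|+\|y-\pi(y)\|$.} On $[0,\alpha_x]$ the curve is the segment from $x$ to $\pi(x)$, which is identified with the point $\thediag$, traversed at speed $\|x-\pi(x)\|/\alpha_x=\rho(x,y)$. On $[\alpha_x,1]$ it is the segment from $\pi(y)\sim\thediag$ to $y$, also traversed at speed $\rho(x,y)$. Here I would read the second formula as the reparametrised segment $\frac{t-\alpha_x}{\alpha_y}\,y+\bigl(1-\frac{t-\alpha_x}{\alpha_y}\bigr)\thediag$, and correct the typo if necessary. Distances to the diagonal scale linearly along these segments. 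Hence:
\begin{itemize}
\item if $s,t$ lie in the same piece, the Euclidean bound applies as in Case 1;
\item if $s\le\alpha_x\le t$, we use the second term of the minimum:
\begin{equation*}
\rho(\gamma(s),\gamma(t))\le \tilde d(\gamma(s),\thediag)+\tilde d(\gamma(t),\thediag) = (\alpha_x-s)\rho(x,y)+(t-\alpha_x)\rho(x,y) = (t-s)\rho(x,y).
\end{equation*}
\end{itemize}

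\textbf{Main obstacle.} The delicate point is not the verification but the notion of "the" geodesic. When the two terms of the minimum are equal, both curves are geodesics, so uniqueness fails. I would therefore state the result as "a geodesic is given by". If uniqueness is wanted when the minimum is strict, I would add a short argument: any midpoint $m$ must satisfy equality in the triangle inequality for $\rho$, which forces it onto the Euclidean segment, by strict convexity of the Euclidean norm, or onto the diagonal route. This is the step I would expect to need the most care, since $\rho$ is not strictly convex near $\thediag$.
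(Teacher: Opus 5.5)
Your proposal is correct. The paper gives no argument for this proposition; it is announced as a ``straightforward result''. There is therefore no competing route to compare with. Your reduction to the one-sided bound $\rho(\gamma(s),\gamma(t))\le |t-s|\,\rho(x,y)$, upgraded to equality via the triangle inequality for $\rho$, is exactly the omitted verification, and both cases go through.

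Your reading of the second branch is the right one. As printed, it equals ``$y+\thediag$'' at $t=\alpha_x$ and $0$ at $t=1$. It should be $\frac{t-\alpha_x}{\alpha_y}\,y+\frac{1-t}{\alpha_y}\,\thediag$, with $\thediag$ represented by $\pi(y)$, and by $\pi(x)$ on the first branch; any other diagonal representative gives a curve that is too long. The linearity you use in the cross term holds because $d(z,\thediag)=(z_2-z_1)/\sqrt2$ is affine on $\overline\groundspace$.

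Two minor points. First, your justification that $\tilde d(x,y)=\|x-y\|$ in Case 1 is muddled. $\tilde d$ is not the quotient metric: if it were, $\rho=\tilde d$ identically, and Case 1 would wrongly include pairs for which the route through $\thediag$ is shorter. It is just the Euclidean distance on $\groundspace$ extended by $\tilde d(x,\thediag)=d(x,\pi(x))$, while $\rho$ is the quotient metric. So the identity holds by definition.

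Second, your caveat on uniqueness is justified. In the tie case both curves are geodesics, so ``the geodesic'' is only accurate when the minimum is strict. Your midpoint sketch closes in the strict cases using $\|x-m\|\ge \langle x-m,n\rangle = d(x,\thediag)-d(m,\thediag)$ with $n=(-1,1)/\sqrt2$. In strict Case 1 this rules out the mixed and through-diagonal options for a point $m$ with $\rho(x,m)+\rho(m,y)=\rho(x,y)$. In strict Case 2 it forces equality, i.e.\ $x-m\in\bR_{\ge0}\,n$, i.e.\ $m\in[x,\pi(x)]$, and symmetrically for $y$. The degenerate endpoints $x=\thediag$ or $y=\thediag$, where $\alpha_x$ or $\alpha_y$ vanishes, only need the obvious reinterpretation.
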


As such, the geodesics originating from $\thediag$ in $\tilde\groundspace$ are of the form $G_t \coloneqq \{(t,t) + s(-1,1), s > 0\} \cup \{\thediag\}$ (which we call the geodesic originating from $\thediag$ with parameter $t$) for $t\in \bR$. 
We then have the following result.
\begin{prop}[Projections on geodesics]
    Let $t \in \bR$. The projection on $G_t$ is given by:
    \begin{equation} \label{eq:pi-t}
    \forall z \in \tilde\groundspace, \quad
    \pi_t(z) =
    \begin{cases}
        (t,t) + \frac{z_2-z_1}{2}(-1,1) &\text{if } z_1\leq t \leq z_2 \\
        \thediag &\text{otherwise.}
    \end{cases}
    \end{equation}
\end{prop}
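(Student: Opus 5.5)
We have to determine, for each $z=(z_1,z_2)\in\groundspace$ (with $z_1<z_2$), the point $w\in G_t$ minimizing $\rho(z,w)$, and show it equals \eqref{eq:pi-t}. We also set $\pi_t(\thediag)=\thediag$, which is trivial. Parametrize $G_t$ by $w_s=(t,t)+s(-1,1)$ for $s>0$, together with $w_0:=\thediag$. The distance to the diagonal is $d(w_s,\thediag)=s\sqrt2$, and $d(z,\thediag)=(z_2-z_1)/\sqrt2$. Since $\rho$ is a minimum of two terms, I would minimize each term separately over $w\in G_t$ and then compare the two minima.

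\textbf{First term.} Minimize $\tilde d(z,w_s)$ over $s\ge 0$. This is the Euclidean projection of $z$ onto the closed half-line $\{(t,t)+s(-1,1): s\ge0\}$. Writing $z=(m,m)+\frac{z_2-z_1}{2}(-1,1)$ with $m=(z_1+z_2)/2$, the direction $(-1,1)$ is orthogonal to $(1,1)$. Hence
\[
d(z,w_s)^2 = 2(m-t)^2 + 2\Bigl(s-\tfrac{z_2-z_1}{2}\Bigr)^2 .
\]
Because $z_2>z_1$, the minimum is attained at $s^*=(z_2-z_1)/2>0$, with value $\sqrt2\,|m-t|$.

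\textbf{Second term.} Minimize $d(z,\thediag)+d(w,\thediag)$. This is minimized at $w=\thediag$ (that is, $s=0$), with value $d(z,\thediag)=(z_2-z_1)/\sqrt2$. The point $\thediag$ also realizes the first term there, since $\tilde d(z,\thediag)=d(z,\thediag)$.

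\textbf{Comparison.} The optimal point is $w_{s^*}$ exactly when $\sqrt2|m-t|\le (z_2-z_1)/\sqrt2$, that is, $|m-t|\le (z_2-z_1)/2$. This is equivalent to $z_1\le t\le z_2$. In that case $w_{s^*}=(t,t)+\frac{z_2-z_1}{2}(-1,1)$, which matches \eqref{eq:pi-t}. Otherwise $\thediag$ is strictly closer, so $\pi_t(z)=\thediag$. On the boundary $t\in\{z_1,z_2\}$ the two candidates tie, and the formula chooses $w_{s^*}$. I expect only this tie-breaking, and the proper handling of the point $\thediag$ in the quotient, to require care. Everything else is the elementary orthogonal decomposition above.
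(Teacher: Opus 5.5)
Your proof is correct and is essentially the paper's argument. Both decompose $z$ orthogonally along $(1,1)$ and $(-1,1)$ to get the Euclidean projection $(t,t)+\frac{z_2-z_1}{2}(-1,1)$ onto the half-line, with squared distance $2\bigl(\frac{z_1+z_2}{2}-t\bigr)^2$, and compare it with the squared distance $\frac{(z_2-z_1)^2}{2}$ to the diagonal. You are in fact more explicit than the paper, which only records the tie case $t\in\{z_1,z_2\}$: you justify splitting $\min_{w}\rho(z,w)$ into the two terms of $\rho$, and you derive the strict inequality on each side of the tie.
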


\begin{proof}
    Let $t \in \bR$, $G_t$ be the geodesic originating from $\thediag$ with parameter $t$ and $\cG_t$ its pullback in $\bR^2$ i.e $\cG_t \coloneqq \{(t,t) + s (-1,1), s\geq 0\}$. The projection of $z = (z_1, z_2) \in \groundspace$ onto $\cG_t$ is then $\Pi_t(z) \coloneqq (t,t) + \frac{z_2- z_1}{2}(-1, 1)$. We then have
\begin{equation}
\|z-\Pi_t(z)\|^2 = 2\left(\frac{z_1+z_2}{2} - t \right)^2 \ \mathrm{and} \ \|z - \pi(z)\|^2 = \frac{(z_2 -z_1)^2}{2}.
\end{equation}
Therefore,
\begin{equation} 
    \|z-\Pi_t(z)\|^2 = \|z-\pi(z)\|^2 \iff (z_1-t)^2 + (z_2 - t)^2 = (z_2 - z_1)^2 \iff t = z_1 \text{ or }\ t = z_2.
\end{equation}
The equation for $\pi_t$ follows.
\end{proof}

\begin{figure}[!h]
        \centering
        \includegraphics[width = 0.49\linewidth]{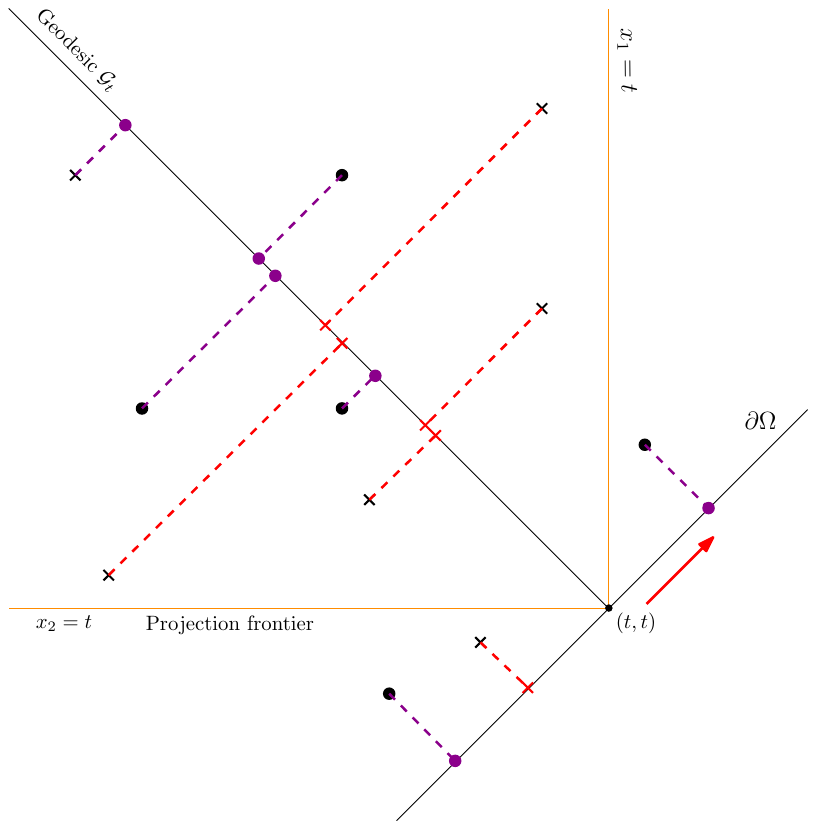}
            \caption{Projections of two measures $\mu$ (circles) and $\nu$ (crosses)  and their respective projections on the geodesic with parameter $t$. Points outside of the region delimited by $x_1 = t$ and $x_2 = t$ (which we denote $\groundspace_t$) are projected onto $\thediag$. As $t$ describes $\bR$, $\groundspace_t$ \textit{slides} across the upper half-plane and captures information about different parts of the diagram. Observe also that a point $x$ belongs to $\groundspace_t$ for $t$ belonging to an interval of length proportional to $d(x,\thediag)$, hence the need for the renormalization we introduce.}
        \label{fig:proj-n-SFG}
\end{figure}

The following lemma shows that the projection onto geodesics preserves the Radon structure of the measures.
\begin{lemma}
    Let $\mu \in \cM(\groundspace)$ and $t \in \bR$. Then for any compact $A \subset G_t \setminus \{\thediag\}$, $\pi_t\#\mu(A) < + \infty$.
\end{lemma}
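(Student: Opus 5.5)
The plan is to show that the preimage under $\pi_t$ of a compact subset of $G_t \setminus \{\thediag\}$ is contained in a compact subset of $\groundspace$. The claim then follows because $\mu$ is Radon on $\groundspace$. Concretely, $\pi_t\#\mu(A) = \mu(\pi_t^{-1}(A))$, and it suffices to exhibit a compact $K \subset \groundspace$ with $\pi_t^{-1}(A) \cap \groundspace \subset K$. Points of $\thediag$ carry no $\mu$-mass since $\mu$ is supported on $\groundspace$. Moreover, $\thediag \notin A$, so only the first branch of \eqref{eq:pi-t} contributes.

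First I will parametrize $A$. Every point of $G_t \setminus \{\thediag\}$ has the form $(t,t) + s(-1,1)$ with $s > 0$. The map $s \mapsto (t,t)+s(-1,1)$ is a homeomorphism from $(0,+\infty)$ onto $G_t\setminus\{\thediag\}$; in the quotient topology of $\tilde\groundspace$, neighbourhoods of $\thediag$ correspond to strips near the diagonal, so this is consistent. Hence a compact $A$ corresponds to a compact set of parameters contained in some $[s_0, s_1]$ with $0 < s_0 \le s_1 < \infty$.

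Next I will describe the preimage. By \eqref{eq:pi-t}, $z \in \groundspace$ satisfies $\pi_t(z) \in A$ only if $z_1 \le t \le z_2$ and $\frac{z_2 - z_1}{2} \in [s_0, s_1]$. Therefore
\begin{equation*}
\pi_t^{-1}(A) \cap \groundspace \subset K \coloneqq \{ z \in \bR^2 : z_1 \le t \le z_2,\ 2 s_0 \le z_2 - z_1 \le 2 s_1\}.
\end{equation*}
The set $K$ is closed. It is bounded, since $t - 2s_1 \le z_1 \le t$ and $t \le z_2 \le t + 2 s_1$. It lies in $\groundspace$ because $z_2 - z_1 \ge 2s_0 > 0$. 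So $K$ is a compact subset of $\groundspace$, and thus $\pi_t\#\mu(A) \le \mu(K) < +\infty$. I should also note that $\pi_t$ is Borel measurable on $\groundspace$, so that the pushforward makes sense. This holds because $\pi_t$ is piecewise continuous on the Borel sets $\{z_1 \le t \le z_2\}$ and its complement.

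The only delicate point is the first step: correctly identifying compact subsets of $G_t \setminus \{\thediag\}$ in the quotient topology, and in particular ensuring they stay away from $\thediag$, i.e.\ $s_0 > 0$. This is exactly what keeps $K$ away from the diagonal, where $\mu$ may have infinite mass. Once this is settled, the remaining steps are the direct computation above.
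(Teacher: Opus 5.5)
Your proof is correct and follows the same route as the paper: contain $\pi_t^{-1}(A)$ in a compact subset of $\groundspace$, then use that $\mu$ is Radon. It is in fact more careful than the paper's version. The paper's set $C$ uses only the upper bound $M=\sup_{x\in A} d(x,\thediag)$, so it accumulates at $(t,t)\in\thediag$ and is not compact in $\groundspace$ (e.g.\ $\mu=\sum_{n\ge1}\delta_{(t-1/n,\,t)}$ gives $\mu(C)=+\infty$). Your lower bound $z_2-z_1\ge 2s_0>0$, which comes from compactness of $A$ in $G_t\setminus\{\thediag\}$, is exactly what makes that step valid.
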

\begin{proof}
Let $M \coloneqq \sup(\{d(x,\thediag), x \in A\}$. Observe that since $A \subset G_t\setminus\thediag$, we have $\pi_t^{-1}(A) \subset C \coloneqq \{z \in \groundspace, t-\sqrt2 M \leq z_1 \leq t \text{ and }  t \leq z_2 \leq t + \sqrt2 M \}$ which is a compact subset of $\groundspace$. Hence, $\pi_t\#\mu(A) = \mu(\pi_t^{-1}(A)) < +\infty$.
\end{proof}
\begin{rem}
    The above Lemma shows that the pushforward of any Radon measure $\mu \in \cM(\groundspace)$ by the projection $\pi_t$ is still a Radon measure when restricting it to $\cG_t \setminus \thediag$. The fact that it is not a Radon measure on $\cG_t$ (since the point $\thediag$ may have infinite mass) won't come into question as we will only consider the restriction of the pushforward measure to $\cG_t \setminus \thediag$ later on (which we still denote by $\pi_t \# \mu$).
\end{rem}

One may then want to consider the quantity $
    \left(\int_{t\in\bR} \FG_p^p(\pi_t\#\mu, \pi_t\#\nu)\dd t\right)^{\frac{1}{p}} $ to define a distance on $\cM^p(\groundspace)$.
However, given $\mu \in \cM(\groundspace)$, we have
\begin{align}
       \int_{t\in\bR}\FG_p^p(\pi_t\#\mu, \emptyset)\dd t
        &= \int_{t\in\bR}\left(\int_{\Omega}d(\pi_t(z), \thediag)^p\dd \mu(z)\right)\dd t \\
        &=\int_{\Omega}(z_2-z_1)d(z, \thediag)^p\dd \mu(z) \\
        &=\sqrt{2}\mathrm{Pers}_{p+1}(\mu),
\end{align}
and therefore the above quantity is only well-defined on $\cM^{p+1}(\groundspace)$ and not on $\cM^p(\groundspace)$. 
To circumvent this, we introduce the following renormalization. 
Given $\mu \in \cM^p(\groundspace)$ define $\tilde\mu$ by setting for any Borel set $A$
\begin{equation} \label{eq:def-normalized-measure}
\tilde\mu(A) \coloneqq \int_A \frac{\dd\mu(x)}{d(x,\thediag)}.
\end{equation}
This way, $\tilde\mu \in \cM^{p+1}(\Omega)$ since $\tilde\mu(A) < +\infty$ for all compact sets of $\groundspace$ (every compact of the open half-plane $\groundspace$ is at a positive distance from $\thediag$) and $\mathrm{Pers}_{p+1}(\tilde\mu) = \mathrm{Pers}_p(\mu) < +\infty$. 
This yields the following definition.

\begin{definition} [The Sliced Figalli--Gigli distance] \label{def:SFG-n}
    Let $p \geq 1$ and $\mu, \nu \in \cM^p(\groundspace)$ and $\tilde\mu,\tilde\nu$ their normalization following \eqref{eq:def-normalized-measure}. 
    The Sliced Figalli--Gigli (SFG) distance between $\mu$ and $\nu$ is defined as
    \begin{equation}  \label{eq:def-SFG} \tag{$\SFG_p$}
    \SFG_p(\mu,\nu) \coloneqq \frac{1}{\sqrt2}\left(\int_\bR \FG_p^p(\pi_t\# \tilde\mu, \pi_t \# \tilde\nu)\dd t\right)^{\frac{1}{p}}.
    \end{equation}
\end{definition}

Note that given $\mu \in \cM^p(\groundspace)$ we have
\begin{align} \label{eq:sfg-empty}
     \SFG_p^p(\mu, \emptyset) &= \frac{1}{\sqrt2}\int_{t\in\bR}\left(\int_{\Omega}d(\pi_t(z), \thediag)^p\frac{\dd\mu(z)}{d(z, \thediag)}\right)\dd t \\
     & = \frac{1}{\sqrt2}\int_{\Omega}(z_2-z_1)d(z, \thediag)^{p-1}\dd \mu(z) \\
     & = \mathrm{Pers}_{p}(\mu),
\end{align}
    and therefore by Hölder's inequality,
    \begin{align}
        \SFG_p(\mu, \nu) & \leq \frac{1}{\sqrt2}\left(\int_{t\in\bR}(\FG_p(\pi_t\#\mu, \emptyset) + \FG_p(\pi_t\#\nu, \emptyset))^p\dd t\right)^\frac{1}{p} \\
        &\leq \SFG_p(\mu, \emptyset) + \SFG_p(\nu, \emptyset) < +\infty,
    \end{align}
    ensuring that \eqref{eq:def-SFG} is well defined for persistence diagrams and measures in $\cM^p(\groundspace)$. 

Equation \eqref{eq:sfg-empty} immediately gives the following proposition.

\begin{prop} \label{prop:sfg-pers}
    Let $\mu \in \cM^p(\groundspace)$, and $\emptyset$ denote the empty diagram. One has
    \begin{equation} 
    \SFG_p(\mu, \emptyset) = \FG_p(\mu,\emptyset) = \mathrm{Pers}_p(\mu).
    \end{equation}
\end{prop}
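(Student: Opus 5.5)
Proposition~\ref{prop:sfg-pers} consists of two equalities, and I will prove them separately: $\FG_p(\mu,\emptyset) = \mathrm{Pers}_p(\mu)$, and $\SFG_p(\mu,\emptyset) = \mathrm{Pers}_p(\mu)$.

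\emph{The identity $\FG_p(\mu,\emptyset)=\mathrm{Pers}_p(\mu)$.} When the target is the empty measure, the marginal constraint on the second factor only involves Borel subsets of $\groundspace$. Hence any admissible $\gamma\in\Adm(\mu,\emptyset)$ is supported on $\overline{\groundspace}\times\thediag$, and its first marginal restricted to $\groundspace$ equals $\mu$. The cost then satisfies the pointwise bound $d(x,y)\ge d(x,\thediag)$ for every $y\in\thediag$, so integrating gives $\iint d^p\,\dd\gamma \ge \int_\groundspace d(x,\thediag)^p\,\dd\mu(x)=\mathrm{Pers}_p(\mu)^p$. Conversely, the plan $\gamma=(\mathrm{id},\pi)\#\mu$, which sends each point to its orthogonal projection on the diagonal, is admissible and attains this value exactly. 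So the infimum equals $\mathrm{Pers}_p(\mu)^p$.

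\emph{The identity $\SFG_p(\mu,\emptyset)=\mathrm{Pers}_p(\mu)$.} Here I rigorously justify the computation \eqref{eq:sfg-empty}. First, for each fixed $t$, I apply the equality just proved, with $\groundspace$ replaced by the slice $G_t$, to the measure $\pi_t\#\tilde\mu$ restricted to $G_t\setminus\{\thediag\}$. The distance on $G_t$ is the restriction of $\rho$, and the distance to $\thediag$ is $d(\cdot,\thediag)$, so the same argument gives $\FG_p^p(\pi_t\#\tilde\mu,\emptyset)=\int d(\pi_t z,\thediag)^p\,\dd\tilde\mu(z)$ (change of variables for the pushforward). By \eqref{eq:pi-t}, $d(\pi_t z,\thediag)=d(z,\thediag)$ when $z_1\le t\le z_2$ and $0$ otherwise.

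Next, I integrate over $t$. The integrand is nonnegative and measurable, so Tonelli's theorem allows exchanging the $t$- and $z$-integrals. This gives $\int_\groundspace (z_2-z_1)\, d(z,\thediag)^{p}\,\dd\tilde\mu(z)$. Substituting $\dd\tilde\mu = \dd\mu/d(z,\thediag)$ from \eqref{eq:def-normalized-measure} and using $z_2-z_1=\sqrt2\, d(z,\thediag)$ yields $\sqrt2\,\mathrm{Pers}_p(\mu)^p$.

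\emph{Normalization.} The definition places the factor $1/\sqrt2$ outside the $p$-th root. For the final equality to read $\SFG_p=\mathrm{Pers}_p$ exactly, that constant has to be understood as normalizing $\SFG_p^p$ (as it is written in \eqref{eq:sfg-empty}), i.e.\ $(\tfrac{1}{\sqrt2}\int\cdot\,\dd t)^{1/p}$. I will state the normalization this way; otherwise the result holds only up to the factor $2^{1/(2p)-1/2}$.

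\emph{Main obstacle.} The main subtlety is not the computation but the admissibility bookkeeping on $G_t$. Points of $\groundspace$ outside the strip $\{z_1\le t\le z_2\}$ are sent to the point $\thediag$, which may carry infinite mass. This mass must be discarded, as in the Remark following the Lemma, and the lower-bound argument must be checked to work on the quotient geodesic, where transport to $\thediag$ costs exactly $d(\cdot,\thediag)$.
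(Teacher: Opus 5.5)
Your proposal is correct and follows the paper's own route: the paper derives the proposition directly from the computation \eqref{eq:sfg-empty}, which uses the cost to the empty diagram on each geodesic, exchanges the integrals, and uses $z_2-z_1=\sqrt2\,d(z,\thediag)$. You additionally prove $\FG_p(\mu,\emptyset)=\mathrm{Pers}_p(\mu)$ and justify the use of Tonelli, both of which the paper takes for granted. Your normalization remark is also right: \eqref{eq:sfg-empty} implicitly uses $\SFG_p^p=\frac{1}{\sqrt2}\int\cdots$, which is inconsistent with the factor $\frac{1}{\sqrt2}$ placed outside the $p$-th root in Definition~\ref{def:SFG-n}, and the last line of \eqref{eq:sfg-empty} should read $\mathrm{Pers}_p(\mu)^p$.
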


Eventually, we have the following central proposition. 

\begin{prop}\label{prop:SFG-dist}
    $\SFG_p$ is a distance on $\cM^p(\groundspace)$.
\end{prop}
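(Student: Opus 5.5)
We check the metric axioms one at a time. Finiteness of $\SFG_p$ on $\cM^p(\groundspace)$ has already been shown above. Symmetry and non-negativity follow directly from the corresponding properties of $\FG_p$, applied for each $t$. The only nontrivial points are the triangle inequality and definiteness: $\SFG_p(\mu,\nu)=0$ should imply $\mu=\nu$.

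\textbf{Triangle inequality.} Fix $t\in\bR$. The measures $\pi_t\#\tilde\mu$, $\pi_t\#\tilde\nu$, $\pi_t\#\tilde\lambda$ are Radon on $G_t\setminus\{\thediag\}$ by the Lemma above. The geodesic $G_t$ is isometric to a half-line whose endpoint $\thediag$ plays the role of the diagonal. So $\FG_p$ restricted to measures on $G_t$ is a Figalli--Gigli distance on the half-line, and it satisfies the triangle inequality by \citep{FG10,DL20}. Hence
\[
\FG_p(\pi_t\#\tilde\mu,\pi_t\#\tilde\nu)\le \FG_p(\pi_t\#\tilde\mu,\pi_t\#\tilde\lambda)+\FG_p(\pi_t\#\tilde\lambda,\pi_t\#\tilde\nu)
\]
holds pointwise in $t$. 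We then take $L^p(\dd t)$ norms and apply Minkowski's inequality. All three functions of $t$ lie in $L^p$, by the Hölder bound above together with \cref{prop:sfg-pers}.

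\textbf{Definiteness.} This is the main step. If $\SFG_p(\mu,\nu)=0$, then $\FG_p(\pi_t\#\tilde\mu,\pi_t\#\tilde\nu)=0$ for almost every $t$. Since $\FG_p$ is a distance, this gives $\pi_t\#\tilde\mu=\pi_t\#\tilde\nu$ on $G_t\setminus\{\thediag\}$ for a.e.\ $t$. We then recover $\tilde\mu$ from this family of projections.

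For a point $z=(z_1,z_2)$ with $z_1\le t\le z_2$, the projection is $\pi_t(z)=(t,t)+\frac{z_2-z_1}{2}(-1,1)$, so it records only the persistence $s=z_2-z_1$. Consequently $\pi_t\#\tilde\mu$, read as a measure in the variable $s>0$, is the pushforward by $z\mapsto z_2-z_1$ of $\tilde\mu$ restricted to the quadrant $Q_t=\{z_1\le t\le z_2\}$. Evaluating on sets $[s_0,\infty)$, we obtain for a.e.\ $t$ and all $s_0>0$ that $\tilde\mu$ and $\tilde\nu$ agree on $Q_t\cap\{z_2-z_1\ge s_0\}$. These sets have finite mass, since they lie in a compact subset of $\groundspace$.

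Taking differences over $t<t'$, and using right-continuity in $t$ of the finite measures involved (which upgrades the a.e.\ statement to every $t$), we get equality of $\tilde\mu$ and $\tilde\nu$ on sets of the form $\{z_1\le t',\ z_2\ge t'\}\setminus\{z_1\le t,\ z_2\ge t\}$, intersected with persistence bands. Combining these strips, as in an inclusion--exclusion, we obtain equality on rectangles $(a,b]\times(c,d]$ with $b\le c$. Such rectangles form a $\pi$-system generating the Borel sets of $\groundspace$ away from the diagonal. Since $\tilde\mu,\tilde\nu$ are Radon on $\groundspace$, Dynkin's lemma gives $\tilde\mu=\tilde\nu$.

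Finally, $\mu$ has density $d(\cdot,\thediag)>0$ with respect to $\tilde\mu$, and likewise $\nu$ with respect to $\tilde\nu$, so $\mu=\nu$. The delicate part is making the rectangle reconstruction rigorous, in particular checking that the quadrant family $Q_t$ together with the persistence thresholds really generates the rectangles.

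If that combinatorial step is messy, a cleaner fallback is a Radon-transform argument. The function $t\mapsto \pi_t\#\tilde\mu(\{s\ge s_0\})=\tilde\mu(Q_t\cap\{z_2-z_1\ge s_0\})$ is the cumulative function of $\tilde\mu$ integrated against indicator kernels that separate points of $\groundspace$. Uniqueness then follows from a monotone class argument.
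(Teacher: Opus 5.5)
Your treatment of symmetry, non-negativity and the triangle inequality is fine: the pointwise triangle inequality for the one-dimensional $\FG_p$ on each geodesic, followed by Minkowski in $L^p(\dd t)$, works. Reducing definiteness to injectivity of $\tilde\mu\mapsto(\pi_t\#\tilde\mu)_t$ is also exactly the paper's reduction. The gap is in the reconstruction step. Write $Q_t=\{z_1\le t\le z_2\}$. For $t<t'$ the sets $Q_t$ and $Q_{t'}$ are \emph{not nested}: one has $Q_{t'}\setminus Q_t=\{t<z_1\le t'\le z_2\}$ and $Q_t\setminus Q_{t'}=\{z_1\le t\le z_2<t'\}$. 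So for a persistence band $S$, the difference $\tilde\mu(Q_{t'}\cap S)-\tilde\mu(Q_t\cap S)$ only gives the mass born in $(t,t']$ minus the mass dying in $[t,t')$. You never obtain equality on $Q_{t'}\setminus Q_t$ itself, and no finite inclusion--exclusion of such data separates births from deaths. Your fallback fails for the same reason. The family $\{Q_t\cap\{z_2-z_1\ge s_0\}\}$ is not a $\pi$-system, since $Q_t\cap Q_{t'}=\{z_1\le t,\ z_2\ge t'\}$ is not of that form. Separating points does not make a class of sets measure-determining.

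No cleverer combinatorial argument can close this gap. Take $a>0$, $0<c<2a$, $\mu=\sum_{k\in\mathbb{Z}}\delta_{((2k-1)a,(2k+1)a)}$ and $\nu$ its translate by $(c,c)$. Both are Radon on $\groundspace$ and all their points have persistence $2a$. For all but countably many $t$, $Q_t$ contains exactly one point of each. Hence $\pi_t\#\tilde\mu=\pi_t\#\tilde\nu$ for a.e.\ $t$, while $\mu\neq\nu$. Everything your argument uses holds here (Radon measures, finite values on the sets $Q_t\cap S$), so it would ``prove'' $\mu=\nu$.

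What rules this example out is $\mathrm{Pers}_p(\mu)<\infty$. Your proof invokes it only to get finiteness of set values, and even there the stated reason is wrong. The set $Q_t\cap\{z_2-z_1\ge s_0\}$ is unbounded; its finiteness comes from Markov's inequality, since $\int d(z,\thediag)^{p+1}\dd\tilde\mu=\mathrm{Pers}_p(\mu)^p<\infty$.

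The missing idea is the one the paper uses. In coordinates $u=\frac{z_2-z_1}{2}$, $v=\frac{z_1+z_2}{2}$, the data on the slice at height $u$ is the window function $t\mapsto\eta_u([t-u,t+u])$. This determines $\eta_u$ only up to a signed measure invariant under translation by $2u$. Finite total mass of $\eta_u$, a consequence of finite persistence, forces that ambiguity to vanish. For instance, $\eta_u((-\infty,x])=\sum_{k\ge0}\eta_u((x-2(k+1)u,x-2ku])$ expresses the distribution function as a series of window values, which converges precisely when $\eta_u$ has finite mass.

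The paper implements this as follows. It differentiates $t\mapsto\langle\varphi,\tilde\mu_t\rangle$ in the distributional sense, which yields $\int\varphi(u)[\psi(v-u)-\psi(v+u)]\,\dd(\mu-\nu)=0$. It then disintegrates in $u$ to get $\eta_u=\tau_{2u}\#\eta_u$ and concludes from finite mass.

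As a minor point, $t\mapsto\tilde\mu(Q_t\cap S)$ is not right-continuous at mass with $z_2=t$, since $Q_t$ is closed; this is harmless.
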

\begin{proof}
    The fact that $\FG_p$ is a distance immediately implies that $\SFG_p$ is non-negative and satisfies the triangle inequality. 
    Eventually, let $\mu, \nu \in \cM^p(\groundspace)$ such that $\SFG_p(\mu,\nu) = 0$. 
    This implies that their projections (after normalization) on $\cG_t \setminus \{\thediag\}$ coincide for almost every $t \in \bR$. 
    The injectivity of that transform is proved in the following lemma, yielding $\mu = \nu$. 
\end{proof}

\begin{lemma}\label{lem:injectivity-transform}
    Let $\mu, \nu \in \cM^p(\groundspace)$ and let $(\tilde\mu_t)_{t \in \bR}, (\tilde\nu_t)_{t \in \bR}$ be their respective transform by the map $\mu \mapsto \pi_t \# \mu\restriction_{\cG_t \setminus \{\thediag\}}$. 
    Assume that $t$-a.e.~(with respect to the Lebesgue measure on $\bR$), one has $\tilde\mu_t = \tilde\nu_t$. 
    Then $\mu = \nu$. 
\end{lemma}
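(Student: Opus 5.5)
Since $\pi_t$ is given explicitly by \eqref{eq:pi-t}, I would check that the transform determines $\mu$ on a generating family of sets. Identify $\cG_t\setminus\{\thediag\}$ with $(0,+\infty)$ through the coordinate $s=\frac{z_2-z_1}{2}$. Then, for every Borel $B\subset(0,+\infty)$,
\begin{equation}
\tilde\mu_t(B) = \mu\bigl(\{z\in\groundspace : z_1\le t\le z_2,\ \tfrac{z_2-z_1}{2}\in B\}\bigr).
\end{equation}
Take $B=[s_0,+\infty)$ with $s_0>0$; the set above is then relatively compact away from the diagonal only up to the constraint $z_1\le t\le z_2$, so I would use a bounded window $B=[s_0,s_1]$ instead, which makes the preimage compact and its mass finite. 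Fix such $0<s_0<s_1$ and set $A_t \coloneqq \{z : z_1\le t\le z_2,\ s_0\le \frac{z_2-z_1}{2}\le s_1\}$. The hypothesis gives $\mu(A_t)=\nu(A_t)$ for a.e.~$t$.

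\textbf{Step 1: integrate against test functions in $t$.} Multiply by $\phi(t)$ with $\phi\in C_c(\bR)$ and integrate; Fubini/Tonelli applies because everything is non-negative and locally finite. This gives
\begin{equation}
\int_{\groundspace} \mathbbm{1}_{[s_0,s_1]}\Bigl(\tfrac{z_2-z_1}{2}\Bigr)\,\Phi(z)\,\dd\mu(z) = \text{same with } \nu,
\qquad \Phi(z)\coloneqq\int_{z_1}^{z_2}\phi(t)\dd t .
\end{equation}
In the coordinates $m=\frac{z_1+z_2}{2}$ and $s=\frac{z_2-z_1}{2}$ we have $\Phi = F(m+s)-F(m-s)$, where $F$ is an antiderivative of $\phi$. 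Since $s_0,s_1$ are arbitrary, the measures $\mu$ and $\nu$ agree when tested against $g(s)\,[F(m+s)-F(m-s)]$ for bounded Borel $g$ supported in a compact subset of $(0,\infty)$.

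\textbf{Step 2: conclude by density.} Choose $\phi=\psi'$ with $\psi\in C_c^\infty(\bR)$. Any such $\psi$ integrates to zero derivative, so this is legitimate, and it gives $F=\psi$. The test family then becomes $g(s)[\psi(m+s)-\psi(m-s)]$. Working in Fourier with $\psi$ ranging over a rich class, or simply taking $\psi$ supported in a small interval $I$, we get $\psi(m+s)-\psi(m-s)=\psi(m+s)$ whenever $m-s\notin I$.

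The \textbf{main obstacle} is this separation step. I would handle it by localizing. Fix a compact rectangle $K=\{m\in[a,b],\ s\in[s_0,s_1]\}$ with $s_0>0$, and take $\operatorname{supp}\psi\subset[a+s_0,\,b+s_1]$. For the term $\psi(m-s)$ to be nonzero we would need $m-s\ge a+s_0$, i.e.~$m\ge a+s_0+s$. By splitting $K$ into thin vertical strips of width less than $2s_0$ and choosing $\psi$ supported near $m+s$ for $m$ in one strip, the cross term is forced to vanish for the relevant $(m,s)$. What remains identifies integrals of $g(s)\psi(m+s)$, that is, test functions of the form $h(m+s)g(s)$. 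The variables $(m+s,s)=(z_2,s)$ form a linear change of coordinates, so products $h(z_2)g(s)$ with $h,g$ continuous and compactly supported span a dense subalgebra of $C_c(\groundspace)$ by Stone--Weierstrass. Both $\mu$ and $\nu$ are Radon on $\groundspace$, so they coincide.

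A cleaner alternative, if the localization becomes messy, is to note that $t\mapsto\mu(A_t)$ known for a.e.~$t$ and all windows determines the distributional derivative in $t$. Differentiating $\Phi$ in $t$ through $F$ shows that $\psi(z_2)$ and $\psi(z_1)$ terms appear separately once $g$ is supported away from $0$. I would fall back on this if needed.
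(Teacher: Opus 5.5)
Up to the identity
\[
\int g(s)\,\bigl[\psi(m+s)-\psi(m-s)\bigr]\,\dd(\mu-\nu)(m,s)=0
\qquad\text{for all } g\in C_c((0,+\infty)),\ \psi\in C_c^\infty(\bR),
\]
you follow the paper exactly: this is its identity \eqref{eq:interm-step-lemma} with $(u,v)=(s,m)$. The gap is the separation step that comes after it.

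Your test functions localise only in $s$ (through $g$) and in $m\pm s$ (through $\psi$). Nothing localises $m$ itself, so fixing a rectangle $K$ or thin strips in $m$ does not change the identity. When $\operatorname{supp}\psi$ is short, the two terms do live on disjoint regions, $\{m+s\in\operatorname{supp}\psi\}$ and $\{m-s\in\operatorname{supp}\psi\}$. But both regions are integrated against $\eta=\mu-\nu$, and their contributions can cancel. So you never obtain $\int g(s)\psi(m+s)\,\dd\eta=0$ on its own.

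Your fallback has the same defect. The distributional $t$-derivative of $t\mapsto\int g\,\dd\tilde\mu_t$ is the push-forward of $g(s)\mu$ by $z\mapsto z_1$ minus its push-forward by $z\mapsto z_2$. Only this difference is ever visible, never the two push-forwards separately.

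Your Step 2 uses only local finiteness of $\mu,\nu$, yet the lemma is false for general Radon measures, so some ingredient is missing. Fix $s_\star>0$ and set $\mu=\sum_{k\in\mathbb{Z}}\delta_{((2k-1)s_\star,(2k+1)s_\star)}$ and $\nu=\sum_{k\in\mathbb{Z}}\delta_{(2ks_\star,(2k+2)s_\star)}$. Both are locally finite on $\groundspace$. For every $t\notin s_\star\mathbb{Z}$, exactly one atom of each satisfies $z_1\le t\le z_2$, so $\tilde\mu_t=\tilde\nu_t$ is a unit mass at height $s_\star$ on $\cG_t$. Yet $\int h(z_2)g(s)\,\dd(\mu-\nu)=h(s_\star)g(s_\star)\neq0$ whenever $\operatorname{supp}h\subset(0,2s_\star)$, $h(s_\star)\neq0$ and $g(s_\star)\neq0$. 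This contradicts the conclusion of your Step 2.

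What excludes such periodic configurations is finite persistence: $\mu,\nu\in\cM^p(\groundspace)$ gives
\[
|\eta|(\{s\ge s_0\})\le(\sqrt2\,s_0)^{-p}\bigl(\mathrm{Pers}_p^p(\mu)+\mathrm{Pers}_p^p(\nu)\bigr)<\infty .
\]
The missing step is to use this, as the paper does. Disintegrate $\eta$ along $s$. The arbitrariness of $g$ then gives, for a.e.~$s$, $\tau_{s}\#\eta_s=\tau_{-s}\#\eta_s$, so the fibre $\eta_s$ is invariant under translation by $2s$ in $m$. A finite signed measure invariant under a nonzero translation vanishes: consecutive period cells would all carry the same $|\eta_s|$-mass.

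A Fourier route also works. Extend the identity to $\psi=e^{i\xi\cdot}$ by dominated convergence, which again needs $g(s)\eta$ to be finite, and then exploit that $\sin(\xi s)$ has only isolated zeros. In either route some use of finite mass is unavoidable.
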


\begin{proof}
    To alleviate notation in this proof, we consider the change of coordinates $(x_1,x_2) \mapsto \left( \frac{y-x}{2}, \frac{x+y}{2} \right)\eqqcolon (u,v)$, and keep the same notation ($\mu,\tilde\mu_t, \groundspace, \thediag$, etc.) with this new coordinate system, see \cref{fig:new-coordinate} for an illustration. 

    \begin{figure}
        \centering
        \includegraphics[width=0.6\linewidth]{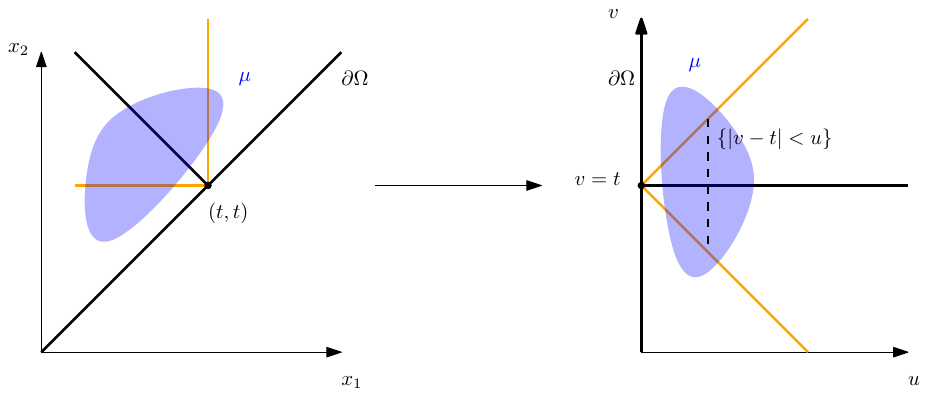}
        \caption{Change of coordinates used in the proof of \cref{lem:injectivity-transform}.}
        \label{fig:new-coordinate}
    \end{figure}

    Let $\varphi \in \cC^\infty_c((0,+\infty))$ be a smooth, compactly supported, test function. 
    One has
    \begin{equation}
        \braket{\varphi,\tilde\mu_t} = \int_{(u,v) \in \groundspace} \varphi(u) \mathbf{1}_{(v-u,v+u)}(t) \dd \mu(u,v),
    \end{equation}
    where $\mathbf{1}_{(v-u, v+u)}$ is the indicator function of the set $\{|v-t| < u\}$. 
    The first order derivative of that function with respect to $t$ in a distributional sense is $t \mapsto \int \varphi(u) [\delta_{v-u}(t) - \delta_{v+u}(t)] \dd \mu$. Therefore, considering another test function $\psi \in \cC^\infty_c(\bR)$, one has
    \begin{equation}
        \int \psi'(t) \braket{\varphi,\tilde\mu_t} \dd t = \int \varphi(u) [\psi(v-u) - \psi(v+u)] \dd \mu(u,v).
    \end{equation}
    Therefore, the equality for almost every $t \in \bR$ between $\tilde\mu_t$ and $\tilde\nu_t$ implies that for any $\varphi,\psi \in \cC^\infty_c((0,+\infty)) \times \cC^\infty_c(\bR)$, one has 
    \begin{equation}\label{eq:interm-step-lemma}
    \int \varphi(u) [\psi(v-u) - \psi(v+u)] \ \dd (\mu-\nu)(u,v) = 0.
    \end{equation}
    Let $\eta \coloneqq \mu - \nu$, let $\pi$ be the first marginal of $|\eta|(u,v)$ and consider the disintegration $\dd \eta = \dd \eta_u \otimes \dd \pi(u)$. 
    Note that the condition $\mu,\nu \in \cM^p(\groundspace)$ imposes that $\eta_u$ has finite total mass on $\bR$ for $\pi$-a.e.~$u > 0$. 
    From \eqref{eq:interm-step-lemma}, denoting $\tau_u$ the translation $w \mapsto w + u$, we have for all test function $\varphi,\psi \in \cC^\infty_c((0,+\infty)) \times \cC^\infty_c(\bR)$, 
    \begin{equation}
        \int_{u > 0} \varphi(u) \left[ \int \psi(w) \dd \tau_{-u} \# \eta_u(w) - \int \psi(w)  \dd \tau_u \# \eta_u(w) \right] \dd \pi(u) = 0.
    \end{equation}
    Therefore, we deduce that for $\pi$-a.e.~$u > 0$, one has $\int \psi(w) \dd [\tau_{-u}\# \eta_u - \tau_u \# \eta_u](w) = 0$ for all test function $\psi$, and thus that $\eta_u = \tau_{2u} \eta_u$. 
    Eventually, $\eta_u$ being of finite total mass, this translation invariance imposes $\eta_u = 0$ for $\pi$-a.e.~$u > 0$, thus $\eta = 0$, that is $\mu = \nu$. 
\end{proof}

\begin{rem}[On the choice of the projection] \label{rem:choice-of-projection}
The projection on the geodesic $\cG_t$ has no need to be the orthogonal projection. 
In particular, one may choose another map $z \mapsto \tilde\pi_t(z)$ that has nicer properties than the one defined above. 
For this reason, we study in \cref{appendix:continuous_proj} a \textit{continuous projection} which, contrary to the orthogonal projection, is Lipschitz continuous. 
One may hope that this increased regularity would lead to better theoretical guarantees, but our study did not showcase any significant improvement of this alternative projection over the naive one we present here. 
We still include this discussion in the appendix for the sake of completeness.
\end{rem}

\subsection{The Sliced Figalli--Gigli kernel}
In this section, we define the kernel associated to the $\SFG_p$ distance. 
As mentioned in \cref{subsec:kernel-methods}, we first need to prove that the $\SFG_p$ distance is c.n.d.~which, similarly to what is done in \citep{pmlr-v70-carriere17a}, will be a consequence of the special form of the $\FG_p$ distance in one dimension.

\subsubsection{The \(\FG_p\) problem in 1D}

The main result of this subsection, \cref{prop:FG-1D}, is to show that in dimension one, for Radon measures supported on the open half-line $\bR_{> 0} = (0,+\infty)$, the Figalli--Gigli distance admits a simple closed form which, as its Wasserstein counterpart \eqref{eq:def-Wasserstein-distance}, boils down to considering monotone matchings (i.e.~sorting the point for discrete measures). 
The subtlety is to properly account for the role played by the diagonal $\thediag$ and the fact that, in contrast to probability measures, persistence measures and diagram may have infinite total mass, preventing us from straightforwardly use quantile functions to define monotone matching. 

Let $\mu, \nu \in \cM^p(\bR_{> 0})$, we want to characterize $\gamma_p$ the optimal transport plan reaching the infimum in $\FG_p$ (so here $\groundspace = \bR_{> 0}$ et $\thediag = \{0\}$).
\begin{lemma} \label{lem:cycl-mon}
    Suppose $p > 1$, for all $(x_1, y_1), (x_2,y_2) \in \supp(\gamma_p)$, if $x_1 < x_2$ then $y_1 \leq y_2$.
\end{lemma}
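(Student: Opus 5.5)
We argue by contradiction through the usual $c$-cyclical monotonicity of optimal plans, adapted to the Figalli--Gigli setting where the diagonal is the point $0$. Here $\overline{\groundspace} = [0,+\infty)$, $\thediag=\{0\}$ and the cost is $c(x,y)=|x-y|^p$. Suppose there are $(x_1,y_1),(x_2,y_2)\in\supp(\gamma_p)$ with $x_1<x_2$ and $y_1>y_2$. We will build an admissible competitor with strictly smaller cost.

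\textbf{Step 1 (strict inequality).} The key inequality is that for $p>1$, $x_1<x_2$ and $y_2<y_1$ we have
\begin{equation*}
|x_1-y_2|^p+|x_2-y_1|^p < |x_1-y_1|^p+|x_2-y_2|^p .
\end{equation*}
This holds for all points of $[0,+\infty)$, including when some coordinates equal $0$. It follows from the strict convexity of $h(s)=|s|^p$. The two pairs of differences $a=x_1-y_1$, $d=x_2-y_2$ and $b=x_1-y_2$, $c=x_2-y_1$ satisfy $a+d=b+c$. Moreover $a<b,c<d$, because $a=x_1-y_1<x_1-y_2=b$ and $a<x_2-y_1=c$, and symmetrically for $d$. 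Hence $(b,c)$ lies strictly between $a$ and $d$ with the same sum, and strict convexity gives $h(b)+h(c)<h(a)+h(d)$.

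\textbf{Step 2 (swapping mass).} By continuity of $c$, the strict inequality persists on small neighbourhoods. Choose $\varepsilon>0$ and set $U_i=B(x_i,\varepsilon)\cap[0,\infty)$ and $V_i=B(y_i,\varepsilon)\cap[0,\infty)$. Then for all $u_i\in U_i$ and $v_i\in V_i$,
\begin{equation*}
c(u_1,v_2)+c(u_2,v_1)<c(u_1,v_1)+c(u_2,v_2)-\eta
\end{equation*}
for some $\eta>0$. Since the points are in the support, $m=\min(\gamma_p(U_1\times V_1),\gamma_p(U_2\times V_2))>0$. Let $\gamma^i=\frac{m}{\gamma_p(U_i\times V_i)}\gamma_p\restriction_{U_i\times V_i}$, with marginals $\alpha_i,\beta_i$. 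Define
\begin{equation*}
\gamma'=\gamma_p-\gamma^1-\gamma^2+\tfrac1m(\alpha_1\otimes\beta_2+\alpha_2\otimes\beta_1).
\end{equation*}
This measure is nonnegative. Its marginals restricted to the open half-line agree with those of $\gamma_p$, since we only remove and re-add the same marginal masses. Any mass on $\{0\}$ is unconstrained anyway, so $\gamma'\in\Adm(\mu,\nu)$. Its cost is strictly smaller by at least $m\eta$, provided the cost of $\gamma_p$ is finite. This is guaranteed because $\FG_p(\mu,\nu)\le \mathrm{Pers}_p(\mu)+\mathrm{Pers}_p(\nu)<\infty$. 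This contradicts optimality.

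\textbf{Main obstacle.} The delicate point is the behaviour near the diagonal. The plan may carry infinite mass near $(0,0)$, and points of the support may lie on $\{0\}\times\{0\}$ or on the axes. The problem with $(0,0)$ itself: $\gamma_p(U\times V)$ could be infinite there, but we only need it to be positive, and we use a rescaled piece of finite mass $m$. To get a finite $m$, we intersect with a set of finite mass (e.g.\ replace $U_i$ by $U_i\setminus B(0,\delta)$ when $x_i>0$). The degenerate configurations need checking. If $x_1=0$ then $x_2>0$, and if $y_2=0$ then $y_1>0$, so the pair $(x_2,y_1)$ always lies away from the origin, and $\gamma_p$ is Radon there. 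We can therefore choose neighbourhoods avoiding the origin in at least one factor. We must also ensure $\gamma_p$ itself exists, i.e.\ that the infimum is attained; we take this from \citep{FG10,DL20}. Finally, $p>1$ is essential: for $p=1$ the inequality in Step~1 is only non-strict.
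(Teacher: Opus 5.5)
Your proof is correct and takes the same route as the paper: a strict-convexity inequality for $|\cdot|^p$ that contradicts the two-point $|x-y|^p$-cyclical monotonicity of $\supp(\gamma_p)$. The only difference is that the paper simply invokes \citep[Prop.~2.3]{FG10} for that monotonicity, whereas you re-derive it with the standard mass-swap construction and handle the Figalli--Gigli specifics correctly: the two boxes have finite mass because $y_1>0$ and $x_2>0$, and exact preservation of the marginals makes admissibility automatic. One small fix: take $\varepsilon<(x_2-x_1)/2$ so that $U_1\times V_1$ and $U_2\times V_2$ are disjoint, which your claim that $\gamma'$ is nonnegative tacitly uses.
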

\begin{proof}
    From \citep[Prop.~2.3]{FG10}, $\supp(\gamma_p)$ is $\Delta_p^p$-cyclically monotone, and therefore, for all $(x_1, y_1), (x_2, y_2) \in \supp(\gamma_p)$ we have
\begin{equation} |x_1 - y_1|^p + |x_2 - y_2|^p \leq |x_1 - y_2|^p + |x_2 - y_1|^p.
\end{equation}
Assume that $x_1 < x_2$, then if $y_1 > y_2$ one has that $y_1 - x_1 > y_1 - x_2 > y_2 - x_2$ and $y_1 - x_1 > y_2 - x_1 > y_2-x_2$. By strict convexity of $x \mapsto x^p$ for $p > 1$, the following then holds:
\begin{align}
    &\frac{|y_1 - x_2|^p - |y_2 - x_2|^p}{y_1 - y_2} < \frac{|y_1 - x_1|^p - |y_2 - x_2|^p}{(y_1 - x_1) - (y_2 - x_2)} < \frac{|y_1 - x_1|^p - |y_1 - x_2|^p}{x_2 - x_1} \\
    &\frac{|y_2 - x_1|^p - |y_2 - x_2|^p}{x_2 - x_1} < \frac{|y_1 - x_1|^p - |y_2 - x_2|^p}{(y_1 - x_1) - (y_2 - x_2)} < \frac{|y_1 - x_1|^p - |y_2 - x_1|^p}{y_2 - y_1}
\end{align}
From there we get that $|y_2 - x_1|^p + |y_1 - x_2|^p < |y_1 - x_1|^p + |y_2 - x_2|^p$ which is absurd. 
As such, $y_1 \leq y_2$.
\end{proof}

We will now prove, similarly to how it is done it the case of traditional Optimal Transport \citep{santambrogio}, that $\FG_p$ in dimension 1 is actually an $L^p$ distance.
\begin{definition}
    Let $\mu \in \cM(\bR_+)$, for $x \in \bR_+$ we define:
    \begin{equation} 
    F_\mu(x) = \mu([x, +\infty)) \text{ and } F_\mu^{-1}(x) = \sup(\{t\in\bR_+ : F_\mu(t) \geq x\}),
    \end{equation}
    agreeing that $F_\mu^{-1}(x) = 0$ if $\{t\in\bR_+ : F_\mu(t) \geq x\} = \emptyset$.
\end{definition}
\begin{rem}
    These definitions are very close to the standard definitions for the distribution function of a measure and its pseudo-inverse. The main difference with our constructions (which we will refer to with the same names) is the "direction" of the quantiles: we define them as $\mu([x, + \infty) )$ instead of the standard $\mu( (-\infty, x])$. 
    The latter could be ill-defined if $\mu$ has infinite total mass close to $\thediag$, making the alternate definition necessary. 
\end{rem}

\begin{rem}
    $F_\mu$ characterises $\mu$ since the family $\{[x, +\infty) , x\in\bR_+\}$ generates all the Borel sets of $\bR_+$. 
    Furthermore, the following holds:
\begin{equation} \label{eq:FG-1D-lem}
    F_\mu^{-1}(x) \geq a \iff F_\mu(a) \geq x.
\end{equation}
\end{rem}

\begin{prop}
    Let $\mu,\nu \in \cM(\bR_+)$. 
    One has
    \begin{equation} 
    F_\mu^{-1} \# \cL^1([0, +\infty)) = \mu.
    \end{equation}
    Furthermore, denote $\gamma_{mon} = (F_\mu^{-1}, F_\nu^{-1}) \# \cL^1([0, +\infty))$, then $\gamma_{mon} \in \Adm(\mu, \nu)$ and $\gamma_{mon}([a, +\infty)\times[b, +\infty)) = \min(F_\mu(a), F_\nu(b))$.
\end{prop}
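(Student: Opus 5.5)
The plan is to prove the pushforward identity first, and then obtain the statements about $\gamma_{mon}$ from it by the same kind of computation. Everything rests on the Galois-type equivalence \eqref{eq:FG-1D-lem}: $F_\mu^{-1}(x) \geq a \iff F_\mu(a) \geq x$.

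\textbf{Step 1: the pushforward identity.} Fix $a > 0$. By \eqref{eq:FG-1D-lem},
\begin{equation*}
\cL^1\big(\{x \geq 0 : F_\mu^{-1}(x) \geq a\}\big) = \cL^1\big(\{x \geq 0 : x \leq F_\mu(a)\}\big) = F_\mu(a) = \mu([a,+\infty)).
\end{equation*}
The family $\{[a,+\infty) : a>0\}$ is a $\pi$-system generating the Borel sets of $\bR_{>0}$. Both $F_\mu^{-1}\#\cL^1$ and $\mu$ restricted to $\bR_{>0}$ are finite on each $[a,+\infty)$ with $a>0$, since $\mu$ is Radon and the relevant masses are finite. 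Hence a $\sigma$-finite uniqueness (Dynkin) argument shows the two measures agree on $\bR_{>0}$. The point $0$ plays the role of the diagonal and is not constrained by admissibility, so mass sent there is harmless. Here $\cL^1(F_\mu^{-1}=0)$ may be infinite, since $F_\mu^{-1}(x)=0$ for $x > \mu(\bR_{>0})$.

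\textbf{Step 2: the rectangle formula.} Applying \eqref{eq:FG-1D-lem} to both coordinates gives
\begin{equation*}
\gamma_{mon}\big([a,+\infty)\times[b,+\infty)\big) = \cL^1\big(\{x : x\le F_\mu(a),\ x\le F_\nu(b)\}\big) = \min(F_\mu(a),F_\nu(b)).
\end{equation*}

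\textbf{Step 3: admissibility.} Take a Borel set $A\subset\bR_{>0}$. Then $\gamma_{mon}(A\times \overline{\bR_{>0}}) = \cL^1\big((F_\mu^{-1})^{-1}(A)\big) = \mu(A)$ by Step 1, and symmetrically $\gamma_{mon}(\overline{\bR_{>0}}\times B) = \nu(B)$. For $\gamma_{mon}$ to be Radon, it is enough to bound the mass of compact sets avoiding $\{0\}\times\{0\}$. Such a set lies in $([a,\infty)\times\bR_+)\cup(\bR_+\times[b,\infty))$ for some $a,b>0$, whose mass is at most $F_\mu(a)+F_\nu(b)<\infty$. Mass at $(0,0)$ is irrelevant to admissibility; it does not matter for the transport cost and can be discarded.

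\textbf{Main obstacle.} The delicate step is the proof of \eqref{eq:FG-1D-lem} itself, which the paper states without proof. I will verify it carefully, using that $F_\mu$ is non-increasing and left-continuous: $[x,\infty)$ decreases to $[a,\infty)$ as $x\uparrow a$, so continuity from above holds thanks to finiteness. Together these show the set $\{t : F_\mu(t)\geq x\}$ is an interval $(0,s]$ that contains its supremum. A second, minor point needs care: the edge cases $x=0$ and $a=0$, where the identity can fail but only on a null set or on the diagonal point $0$. I will handle these by restricting everything to $a>0$ and $x>0$.
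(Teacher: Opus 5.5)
Your proposal is correct and is essentially the paper's proof. It uses the same tail computation through \eqref{eq:FG-1D-lem} for the pushforward, then the same marginal and rectangle computations, and it adds rigor the paper leaves implicit: the restriction to $a,b>0$ (the identity genuinely fails at the point $0$, which receives infinite $\cL^1$-mass when $\mu$ has finite total mass), the $\pi$-system uniqueness argument, the Radon check away from $(0,0)$, and a proof of \eqref{eq:FG-1D-lem}. One correction: finiteness of $F_\mu(a)$ for $a>0$ does not follow from $\mu$ being Radon (for $\mu=\sum_{n\geq 1}\delta_n$ one has $F_\mu\equiv+\infty$ and the statement fails), so invoke the standing hypothesis $\mu\in\cM^p(\bR_{>0})$, which gives $F_\mu(a)\leq a^{-p}\int t^p\,\dd\mu(t)<\infty$ and is exactly what your continuity-from-above, $\sigma$-finiteness and Radon arguments need.
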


\begin{proof}
    We have 
    \begin{align}
        F_{F_\mu^{-1} \# \cL^1([0, +\infty))}(x) &= \cL^1(\{x\geq 0, F_\mu^{-1}(x) \geq a\}) \\
        &= \cL^1(\{x \geq 0 : F_\mu(a) \geq x\}) \tag{see \ref{eq:FG-1D-lem}}\\
        &= F_\mu(a).
    \end{align}
    As such, these two measures have the same distribution function so $F_\mu^{-1}\#\cL^1([0, +\infty)) = \mu$. Then:
    \begin{align}
        \iint_{A\times\bR_+}\dd \gamma_{mon}(x,y) &= \int_{\bR_+}1_A(F_\mu^{-1}(x))\, 1_{\bR_+}F_\nu^{-1}(x)\dd x \\
        &= \int_A\dd (F_\mu^{-1}\#\dd x)(x) \\
        &= \mu(A). \tag{according to the first part}
    \end{align}
    Similarly, one can show that the second marginal of $\gamma_{mon}$ is $\nu$, so $\gamma_{mon} \in \Adm(\mu, \nu)$. 
    Then,
    \begin{align}
        \gamma_{mon}([a, +\infty), [b, +\infty)) &= \cL^1(\{x \geq 0 : F_\mu^{-1}(x) \geq a, F_\nu^{-1}(x) \geq b\}) \\
        &= \cL^1(\{x \geq 0 : F_\mu(a) \geq x, F_\nu(b) \geq x\}) \\
        &= \min(F_\mu(a), F_\nu(b)).
    \end{align}
    which is the desired equality.
\end{proof}

\begin{prop}
    $\gamma_{mon}$ is optimal.
\end{prop}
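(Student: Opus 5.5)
Since the problem is one-dimensional with ground space $\bR_{>0}$ and diagonal $\{0\}$, I will compare an arbitrary admissible plan $\gamma\in\Adm(\mu,\nu)$ to $\gamma_{mon}$, using the cost $c(x,y)=\Delta_p^p(x,y)=\min(|x-y|^p,x^p+y^p)$. The closed half-line $\overline{\bR_{>0}}=\bR_+$ already carries the diagonal as the point $0$, so every plan lives on $\bR_+\times\bR_+$. Mass sent to $0$ shows up in $\gamma_{mon}$ automatically: for $x$ beyond the total mass of $\nu$ we have $F_\nu^{-1}(x)=0$. The strategy is the classical one from \citep[Ch.~2]{santambrogio}: show that any optimal plan is monotone, and that a monotone plan in $\Adm(\mu,\nu)$ is unique and coincides with $\gamma_{mon}$.

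\textbf{Step 1: existence of an optimal plan.} By \citep{FG10}, $\FG_p$ admits an optimal plan $\gamma_p$ whenever the cost is finite. Finiteness holds since $\mu,\nu\in\cM^p$ and the trivial plan sending everything to $0$ costs $\mathrm{Pers}_p^p(\mu)+\mathrm{Pers}_p^p(\nu)$. For $p>1$, \cref{lem:cycl-mon} gives that $\supp(\gamma_p)$ is monotone, meaning that $x_1<x_2$ implies $y_1\le y_2$. I also have to handle points of the form $(x,0)$ and $(0,y)$. The pairs $(0,0)$ contribute nothing and can be discarded, so I restrict $\gamma_p$ to $\bR_+^2\setminus\{(0,0)\}$; the monotonicity from the lemma still applies there, since $0$ is just the smallest point.

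\textbf{Step 2: a monotone plan is determined by its marginals.} For a monotone support and $a,b>0$, every point of the support lies in one of three places:
\begin{itemize}
\item in $[a,\infty)\times[b,\infty)$;
\item in $[a,\infty)\times[0,b)$, in which case no support point has $x'<a$, $y'\ge b$;
\item in $[0,a)\times[b,\infty)$.
\end{itemize}
The second and third regions therefore cannot both carry mass, except along the boundary lines. Consequently $\gamma_p([a,\infty)\times[b,\infty))$ equals either $\gamma_p([a,\infty)\times\bR_+)=F_\mu(a)$ or $\gamma_p(\bR_+\times[b,\infty))=F_\nu(b)$, and in either case it is at most the other one, so it equals $\min(F_\mu(a),F_\nu(b))$. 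Two caveats need care here:
\begin{itemize}
\item mass at $x=0$ is not prescribed by the marginal constraint, which only concerns Borel sets of the open half-line;
\item ties at boundary atoms must be handled.
\end{itemize}
I will treat these by working with $a,b>0$ and using right-continuity in $a$ and $b$. By the previous proposition this joint distribution function is that of $\gamma_{mon}$. Since the sets $[a,\infty)\times[b,\infty)$ with $a,b>0$ form a $\pi$-system generating the Borel sets of $(0,\infty)^2$, we get $\gamma_p=\gamma_{mon}$ on $(0,\infty)^2$. The parts on the axes are then forced by the marginals: $\gamma(A\times\{0\})=\mu(A)-\gamma(A\times(0,\infty))$, and symmetrically for $\{0\}\times B$. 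Hence $\gamma_p=\gamma_{mon}$ up to the irrelevant mass at $(0,0)$, and $\gamma_{mon}$ is optimal.

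\textbf{Step 3: the case $p=1$.} Here I will argue by approximation. The cost of $\gamma_{mon}$ does not depend on $p$ as a plan, and $\gamma_{mon}$ is optimal for every $p>1$. Letting $p\to1$, I want to pass to the limit in $\int c_p\,\dd\gamma\ge\int c_p\,\dd\gamma_{mon}$ for each fixed admissible $\gamma$ with finite $c_1$-cost. This step is the main obstacle, because the masses may be infinite and dominated convergence needs an integrable bound. On $\{x,y\le1\}$ the costs $c_p$ are bounded by $c_1$-type quantities; on the complement I must check integrability against $\gamma$. If that fails, a fallback is to verify directly that for $p=1$ monotonicity can be imposed without increasing the cost, by uncrossing pairs, since $c_1$ is still convex along the relevant segments. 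A second delicate point, already present in Step 2, is to make sure the monotonicity statement from \cref{lem:cycl-mon} really excludes crossings involving the point $0$; I would re-check the lemma's computation when one coordinate vanishes, where the cost is still $|x-y|^p$ because $\min(|x-0|^p,x^p+0)=x^p$.
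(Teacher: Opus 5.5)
Your Steps 1--2 follow the paper's argument for $p>1$: monotone support from \cref{lem:cycl-mon}, then $\gamma_p([a,\infty)\times[b,\infty))=\min(F_\mu(a),F_\nu(b))$, then $\gamma_p=\gamma_{mon}$. You are more careful than the paper, and the care is needed with infinite masses: you restrict to $a,b>0$ where $F_\mu,F_\nu$ are finite, use the exhausting sequence $[1/n,\infty)^2$ for the $\pi$-system, and recover the parts on the axes from the marginals. Two of your caveats are unnecessary. The identity holds for every $a,b>0$ directly, so no continuity or tie-breaking is needed. Crossings through $0$ are harmless because $\min(|x-y|^p,x^p+y^p)=|x-y|^p$ for all $x,y\ge 0$, since $|x-y|\le\max(x,y)$.

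The genuine gap is Step 3, exactly where you flag the obstacle. Passing to the limit $p\downarrow 1$ in $\int|x-y|^p\,\mathrm{d}\gamma_{mon}\le\int|x-y|^p\,\mathrm{d}\gamma$ needs two things. First, $\int_{\{|x-y|>1\}}|x-y|^{p_0}\,\mathrm{d}\gamma<\infty$ for some $p_0>1$; on $\{|x-y|\le 1\}$ the bound $|x-y|^p\le|x-y|$ is free. Second, Step 2 must apply at exponent $p$, which essentially requires $\mu,\nu\in\mathcal{M}^p$. Neither follows from $\mu,\nu\in\mathcal{M}^1(\mathbb{R}_{>0})$. Take $a_k=k^{-2}(\log k)^{-2}$, $\mu=\sum_{k\ge 2}a_k\delta_{2k}$ and $\nu=\sum_{k\ge2}a_k\delta_k$. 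Both have finite $1$-persistence, and $\gamma_{mon}$ sends $2k$ to $k$ with $\int|x-y|\,\mathrm{d}\gamma_{mon}=\sum_k k^{-1}(\log k)^{-2}<\infty$. Yet $\int|x-y|^p\,\mathrm{d}\gamma_{mon}=\sum_k k^{p-2}(\log k)^{-2}=\infty$ for every $p>1$, so the quantities you want to pass to the limit are infinite and give no bound on the $p=1$ cost. The uncrossing fallback is only named, not carried out, and uncrossing for diffuse plans of infinite mass is not routine.

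The paper avoids higher moments by using, instead of $|\cdot|^p$, a strictly convex $c_\varepsilon$ of linear growth. Its bound $c_\varepsilon\le\varepsilon+(1+\varepsilon)|\cdot|$, however, integrates the additive $\varepsilon$ against the possibly infinite mass of $\gamma$. In this setting you should take a cost vanishing at $0$, for example $c_\varepsilon(z)=|z|+\varepsilon(\sqrt{1+z^2}-1)$, which satisfies $|z|\le c_\varepsilon(z)\le(1+\varepsilon)|z|$, and rerun Steps 1--2 for it.

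The simplest repair needs no approximation at all. For $x,y\ge0$, $|x-y|=\int_0^\infty|\mathbf{1}_{\{x\ge s\}}-\mathbf{1}_{\{y\ge s\}}|\,\mathrm{d}s$. For any admissible $\gamma$ and any $s>0$, the events $\{x\ge s\}$ and $\{y\ge s\}$ have finite $\gamma$-masses $F_\mu(s)$ and $F_\nu(s)$. Tonelli then gives $\int|x-y|\,\mathrm{d}\gamma\ge\int_0^\infty|F_\mu(s)-F_\nu(s)|\,\mathrm{d}s$. Equality holds for $\gamma_{mon}$, because $\gamma_{mon}([s,\infty)^2)=\min(F_\mu(s),F_\nu(s))$ yields $\gamma_{mon}(\{x\ge s\}\triangle\{y\ge s\})=F_\mu(s)+F_\nu(s)-2\min(F_\mu(s),F_\nu(s))=|F_\mu(s)-F_\nu(s)|$.
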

\begin{proof}
    We still suppose that $p> 1$. Let $\gamma_p$ denote the optimal transport plan, then from \ref{lem:cycl-mon} we get that for all $(x,y), (x', y') \in \supp(\gamma_p), x<x' \implies y\leq y'$. Consider $a, b \in \bR_+$ and let $A = [a, +\infty) \times [0, b)$ and $B = [0, a )\times [b, +\infty)$. If $\gamma_p(A) > 0$ and $\gamma_p(B) > 0$, then we can find $(x,y) \in A \cap \supp(\gamma_p)$ and $(x',y') \in B \cap \supp(\gamma_p)$. But then we would have $x' < x $ and $y' > y$ which is absurd. \\
    Hence,
    \begin{align}
        \gamma([a, +\infty)\times [b, +\infty)) = \min( \gamma([a, +\infty)\times [b, +\infty) \cup A),  \gamma([a, +\infty)\times [b, +\infty) \cup B)).
    \end{align}
    However, $\gamma([a, +\infty)\times [b, +\infty) \cup A) = \gamma([a, +\infty) \times \bR_+) = F_\mu(a)$ and $\gamma([a, +\infty)\times [b, +\infty) \cup B) = \gamma(\bR_+ \times [b, +\infty)) = F_\nu(b)$. Therefore, $\gamma_p$ and $\gamma_{mon}$ coincide on sets of the form $[a, +\infty)\times [b, +\infty)$ which is enough to conclude that $\gamma_p = \gamma_{mon}$.\\
    The case for $p=1$ is handled with the same limit argument as in \citep{santambrogio}. Lemma 2.10 in \citep{santambrogio} states that for every $\varepsilon > 0$ there exist a stricly convex function $c_\varepsilon$ such that $|\cdot| \leq c_\varepsilon \leq \varepsilon + (1+\varepsilon)|\cdot| $. We then get, given $\gamma \in \Adm(\mu,\nu)$:
    \begin{align}
        \int_{\bR+}|x-y| \dd \gamma_{mon}(x,y) &\leq \int_{\bR_+}c_\varepsilon(x-y)\dd \gamma_{mon}(x,y) \\
        & \leq \int_{\bR_+}c_\varepsilon(x-y)\dd \gamma(x,y) \tag{$\gamma_{mon}$ is optimal since $c_\varepsilon$ is strictly convex} \\
        & \leq \varepsilon + \int_{\bR_+}(1+\varepsilon)|x-y|\dd \gamma(x,y).
    \end{align}
    Taking the limit as $\varepsilon \to 0$ then yields the desired result.
\end{proof}

From this we deduce the following result.
\begin{prop} \label{prop:FG-1D}
    Let $\mu,\nu \in \cM^p(\bR_{> 0})$,
\begin{equation} 
\FG_p^p(\mu, \nu) = \int_{\bR_{> 0}}|F_\mu^{-1}(t) - F_\nu^{-1}(t)|^p\dd t.
\end{equation}
\end{prop}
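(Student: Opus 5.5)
We combine the two preceding propositions: $\gamma_{mon} = (F_\mu^{-1},F_\nu^{-1})\#\cL^1([0,+\infty))$ is admissible and optimal. Evaluating the $\FG_p$ cost on it is then a change of variables. Concretely, since $\gamma_{mon}$ attains the infimum in \eqref{eq:def-FG}, we write
\begin{equation*}
\FG_p^p(\mu,\nu) = \iint_{\bR_+\times\bR_+} |x-y|^p \,\dd\gamma_{mon}(x,y) = \int_{[0,+\infty)} |F_\mu^{-1}(t)-F_\nu^{-1}(t)|^p \,\dd t,
\end{equation*}
where the second equality is the definition of the pushforward applied to the nonnegative measurable function $(x,y)\mapsto |x-y|^p$. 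Here we use that $F_\mu^{-1},F_\nu^{-1}$ are nonincreasing, hence Borel. In this one-dimensional setting the diagonal is $\{0\}$, so $d(x,\thediag)=x$ and the ground cost on $\overline{\groundspace}=\bR_+$ is just $|x-y|$. Mass sent to $0$ by the plan, which happens when $F_\mu^{-1}(t)=0$ for large $t$, is exactly mass transported to the diagonal at cost $|F_\nu^{-1}(t)|^p$. This matches the convention $F^{-1}=0$ on the empty set, so no separate diagonal term appears.

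Several points need care. First, the integral is finite: $\int |F_\mu^{-1}|^p\,\dd t = \int x^p\,\dd\mu = \mathrm{Pers}_p(\mu)^p$ by the same pushforward identity $F_\mu^{-1}\#\cL^1=\mu$, and similarly for $\nu$. Hence the right-hand side is bounded by $2^{p-1}(\mathrm{Pers}_p^p(\mu)+\mathrm{Pers}_p^p(\nu))<\infty$, consistent with $\FG_p$ being finite on $\cM^p$. Second, the pushforward $F_\mu^{-1}\#\cL^1$ may charge $\{0\}$ with infinite mass, while $\mu$ lives on $\bR_{>0}$. This is harmless, because admissibility in $\Adm(\mu,\nu)$ only constrains marginals on Borel subsets of the open set $\groundspace=\bR_{>0}$, and the preceding proposition already established $\gamma_{mon}\in\Adm(\mu,\nu)$ in that sense. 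Third, whether we integrate over $[0,+\infty)$ or $\bR_{>0}$ is irrelevant, since $\{0\}$ is Lebesgue-null.

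The main obstacle is justifying the optimality statement we invoke, rather than the computation itself. That proof used existence of an optimal $\gamma_p$ and its $\Delta_p^p$-cyclical monotonicity (\citep[Prop.~2.3]{FG10}) for $p>1$, and in the case $p=1$ an approximation by strictly convex costs $c_\varepsilon$. The term $\varepsilon$ in that approximation is multiplied by the mass of $\gamma$, which may be infinite here. If needed, we would repair this by choosing $c_\varepsilon$ with $c_\varepsilon(0)=0$ and $c_\varepsilon(s)\le(1+\varepsilon)|s|$, for instance $c_\varepsilon(s)=|s|+\varepsilon\,s^2/(1+|s|)$. With this choice the bound reads $\int|x-y|\,\dd\gamma_{mon}\le(1+\varepsilon)\int|x-y|\,\dd\gamma$, and letting $\varepsilon\to0$ closes the argument. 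Taking the optimality proposition as given, the stated formula follows directly from the change of variables above.
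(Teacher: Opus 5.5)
Your proposal is correct and follows the paper's route. The paper simply deduces the formula from the admissibility and optimality of $\gamma_{mon}$, evaluating the cost through the pushforward exactly as you do. Your side remark on $p=1$ is also well taken: the paper's additive $\varepsilon$ gets multiplied by the possibly infinite mass of the plans, and $\gamma_{mon}$ itself always has infinite total mass. Choosing a strictly convex $c_\varepsilon$ with $c_\varepsilon(0)=0$ and $|s|\le c_\varepsilon(s)\le(1+\varepsilon)|s|$, as you propose, is the right repair.
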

This is analogous to the special form of the standard Wasserstein distance in 1D.

\subsubsection{The $\SFG$ kernel}
Before defining the $\SFG$ kernel we first need to prove that $\SFG$ is CND.

\begin{prop}
    For all $1 \leq p \leq 2$, $\SFG_p^p$ is CND.  
\end{prop}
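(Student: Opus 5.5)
The plan is to reduce the statement to the classical fact that $(a,b)\mapsto |a-b|^p$ is conditionally negative definite on $\bR$ for $0< p \le 2$. That function is c.n.d.\ because $e^{-s|a-b|^p}$ is a positive definite function on $\bR$ for every $s>0$: it is the characteristic function of a symmetric $p$-stable law. We then lift this property through integrals, using \cref{prop:FG-1D}.

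\textbf{Step 1 (one geodesic).} Fix $t\in\bR$ and consider $\pi_t\#\tilde\mu$ restricted to $\cG_t\setminus\{\thediag\}$. Parametrizing $\cG_t\setminus\{\thediag\}$ isometrically by the distance to the diagonal identifies it with $\bR_{>0}$, with $\thediag$ playing the role of $0$. Since $\tilde\mu\in\cM^{p+1}$, for a.e.\ $t$ the projected measure lies in $\cM^p(\bR_{>0})$; the $t$-integral of its $p$-persistence is finite, by the same computation as \eqref{eq:sfg-empty}. Hence \cref{prop:FG-1D} gives
\begin{equation}
\FG_p^p(\pi_t\#\tilde\mu,\pi_t\#\tilde\nu)=\int_{\bR_{>0}}\bigl|F^{-1}_{\mu,t}(s)-F^{-1}_{\nu,t}(s)\bigr|^p\,\dd s,
\end{equation}
where $F^{-1}_{\mu,t}$ denotes the quantile function of $\pi_t\#\tilde\mu$.

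\textbf{Step 2 (pointwise c.n.d.).} Take $\mu_1,\dots,\mu_n\in\cM^p(\groundspace)$ and $a_1,\dots,a_n\in\bR$ with $\sum_i a_i=0$. Combining the two integrals, we get
\begin{equation}
\sum_{i,j}a_ia_j\SFG_p^p(\mu_i,\mu_j)=\frac{1}{2^{p/2}}\int_\bR\int_{\bR_{>0}}\sum_{i,j}a_ia_j\bigl|F^{-1}_{\mu_i,t}(s)-F^{-1}_{\mu_j,t}(s)\bigr|^p\,\dd s\,\dd t.
\end{equation}
The prefactor $1/\sqrt2$ is raised to the power $p$, but it is positive, so it is harmless. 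For each fixed $(t,s)$ the integrand is $\sum_{i,j}a_ia_j|x_i-x_j|^p$ with real numbers $x_i=F^{-1}_{\mu_i,t}(s)$. This sum is $\le 0$ by the classical fact recalled above, which is valid precisely for $p\le 2$. Integrating a nonpositive function gives a nonpositive result.

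\textbf{Step 3 (justifying the exchange).} The exchange of the finite sum with the integrals is legitimate because each term $\SFG_p^p(\mu_i,\mu_j)$ is finite, as shown after \cref{def:SFG-n}. Measurability of $(t,s)\mapsto F^{-1}_{\mu,t}(s)$ follows from the explicit formula for $\pi_t$ and the monotonicity of quantile functions. So the bilinear sum splits into finitely many absolutely integrable pieces.

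The main obstacle is making this bookkeeping rigorous for infinite-mass measures. The key point is that for a.e.\ $t$ the projections lie in $\cM^p(\bR_{>0})$, so that \cref{prop:FG-1D} applies to them. The c.n.d.\ property itself is immediate once the $L^p$ representation is in place.
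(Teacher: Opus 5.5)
Your proposal is correct and follows essentially the same route as the paper: rewrite the double sum using the quantile representation of \cref{prop:FG-1D}, then apply the conditional negative definiteness of $|x-y|^p$ for $p\le 2$ pointwise in $(t,s)$. The differences are minor: you justify the pointwise fact via characteristic functions of symmetric $p$-stable laws rather than citing Wells--Williams, and you spell out bookkeeping the paper leaves implicit (finite $p$-persistence of the projected measures for a.e.\ $t$, and integrability, which allows splitting the finite sum).
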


\begin{proof}
    Let $1 \leq p \leq 2$, $\mu_1, \dots \mu_n \in \cM^p(\groundspace)$ and $a_1, \dots, a_n \in \bR$ such that $\sum_ia_i =0$. 
    We have
        \begin{align}
            \sum_{i,j}a_ia_j\SFG_p^p(\mu_i, \mu_j) &= \frac{1}{\sqrt2^p}\sum_{i,j}a_ia_j\int_\bR \FG_p^p(\pi_t \# \tilde\mu_i, \pi_t\# \tilde\mu_j)\dd t \\
            &= \frac{1}{\sqrt2^p}\sum_{i,j}a_ia_j\int_\bR\int_{\bR_+} |F_{\pi_t\#\tilde\mu_i}^{-1}(s) - F_{\pi_t\#\tilde\mu_j}^{-1}(s)|^p\dd s \dd t \\
            &= \frac{1}{\sqrt2^p}\int_\bR \int_{\bR_+}\left(\sum_{i,j} a_ia_j |F_{\pi_t\#\tilde\mu_i}^{-1}(s) - F_{\pi_t\#\tilde\mu_j}^{-1}(s)|^p\right)\dd s \dd t.
        \end{align}
     We denote by $d$ the euclidean distance on $\bR$. Since $d^2$ is CND, Theorem 4.7 in \citep{embeddings_wellswilliams} states that $d^p$ also is. 
     As such, 
     \begin{equation} 
     \forall t \in \bR, \quad \sum_{i,j}a_ia_j |F_{\pi_t\#\tilde\mu_i}^{-1}(s) - F_{\pi_t\#\tilde\mu_j}^{-1}(s)|^p \leq 0.
     \end{equation}
     Thus we obtain $\sum_{i,j}a_ia_j\SFG_p(\mu_i, \mu_j) \leq 0$ i.e. $\SFG_p^p$ is CND. 
\end{proof}

Hence, \citep[Theorem 3.2.2]{Berg1984} allows us to define a valid kernel on $\cM^p(\groundspace)$ through
\begin{equation} 
k_{\SFG_p}(\mu,\nu) \coloneqq \exp\left(-\frac{\SFG_p^p(\mu,\nu)}{\sigma^2}\right).
\end{equation}
Note that this kernel is defined for $p \in [1,2]$; for $p=1$ one has a Laplace-like kernel on persistence measures while for $p=2$ this kernel resembles a Gaussian kernel. 

\subsection{Stability and equivalence of topologies} \label{sec:stability}
In this subsection, we prove the stability of the $\SFG_p$ distance with respect to the $\FG_p$ distance as well as the equivalence of the topologies induced by these norms.

\subsubsection{Stability}
We want to prove that the $\SFG_p$ distance is stable with respect to the $\FG_p$ distance, i.e.~obtain an inequality of the form $\SFG_p(\mu,\nu) \leq C_p\FG_p(\mu,\nu)$ for some constant $C_p$. 
We prove the following result.

\begin{thm} \label{thm:ineg-projn}
    Let $\mu, \nu \in \cM^p(\groundspace) \cap \cM^\infty(\groundspace)$, one has:
    \begin{equation}
    \SFG_p^p(\mu,\nu) \leq \FG_p^p(\mu,\nu) + 2M^{p-1} \iint_{\groundspace\times\groundspace}d(x,y)\dd\gamma(x,y),
    \end{equation}
    where $M \coloneqq \max(\mathrm{Pers}_\infty(\mu), \mathrm{Pers}_\infty(\nu))$. 
    For $p=1$, the result simplifies to
    \begin{equation}
    \forall\mu, \nu \in \cM^1(\groundspace), \:{\SFG}_1(\mu,\nu) \leq 3\FG_1(\mu,\nu),
    \end{equation}
    in particular one does not need to assume that $\max(\mathrm{Pers}_\infty(\mu), \mathrm{Pers}_\infty(\nu)) < \infty$.
\end{thm}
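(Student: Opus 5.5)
The plan is to fix an optimal plan $\gamma \in \Adm(\mu,\nu)$ for $\FG_p(\mu,\nu)$; this is the $\gamma$ in the statement. The first step is to build from it, for each $t\in\bR$, an admissible plan between $\pi_t\#\tilde\mu$ and $\pi_t\#\tilde\nu$. The renormalisation complicates this: $\tilde\mu$ and $\tilde\nu$ weight points by $1/d(x,\thediag)$ and $1/d(y,\thediag)$, which differ, so $\gamma$ cannot simply be pushed forward. The cure is to split each pair $(x,y)$ with weight $\min(1/d(x,\thediag),1/d(y,\thediag))$ transported along $(\pi_t(x),\pi_t(y))$. The leftover masses $(1/d(x,\thediag)-1/d(y,\thediag))_+$ at $\pi_t(x)$, and the symmetric leftover at $\pi_t(y)$, are sent to the diagonal. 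Pairs with $x$ or $y$ on $\thediag$ are handled the same way, with the convention that the whole mass $1/d(x,\thediag)$ at $\pi_t(x)$ goes to $\thediag$. The resulting $\gamma_t$ is admissible, so
\[
\SFG_p^p(\mu,\nu) \le \frac{1}{\sqrt2}\int_\bR \iint \Big[ m_{xy}\,\Delta_p^p(\pi_t x,\pi_t y) + r_{xy}\, d(\pi_t z_{xy},\thediag)^p \Big]\dd\gamma(x,y)\,\dd t ,
\]
where $m_{xy}$ is the common weight and $r_{xy}$ the leftover weight at the point $z_{xy}$ (either $x$ or $y$) that carries the excess. Fubini then reduces the whole theorem to a pointwise estimate for each pair $(x,y)$.

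The second step is this pointwise integral over $t$. I would work in the rotated coordinates $(u,v)$ from the proof of \cref{lem:injectivity-transform}, where $u\propto d(\cdot,\thediag)$ and $\pi_t(z)$ is nonzero iff $|v-t|<u$, with height $u$.
\begin{itemize}
\item \emph{Leftover term.} The computation leading to \eqref{eq:sfg-empty} gives $\int_\bR d(\pi_t z,\thediag)^p\dd t=\sqrt2\,d(z,\thediag)^{p+1}$. The leftover contribution is therefore $|1/d(x,\thediag)-1/d(y,\thediag)|\,d(z,\thediag)^{p+1}$ with $z$ the point of larger persistence. This is at most $|d(x,\thediag)-d(y,\thediag)|\,d(z,\thediag)^{p-1}\le M^{p-1}d(x,y)$.
\item \emph{Matched term.} Let $x=(u_x,v_x)$ and $y=(u_y,v_y)$ with $u_x\le u_y$, and use $\Delta_p(\pi_tx,\pi_ty)\le$ (height difference) $=|u_x-u_y|$ on the overlap of the two $t$-windows. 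On the non-overlapping part one of the projections is $\thediag$, and $\Delta_p$ is at most the other height, which is at most $M$. The overlap has length at most $2u_x$. Multiplied by $m_{xy}=1/u_y\le 1/u_x$, the overlap part gives at most $|u_x-u_y|^p$, up to the normalising constants, hence at most $d(x,y)^p$. The symmetric difference of the windows has length at most $2(|v_x-v_y|+|u_x-u_y|)\lesssim d(x,y)$. Weighted by $1/u_y$ and integrated against $u^p\le u_y\,M^{p-1}$, it yields a term $\lesssim M^{p-1}d(x,y)$.
\end{itemize}
Summing the two pieces, with care about the $\sqrt2$ factors from the coordinate change, gives $d(x,y)^p+2M^{p-1}d(x,y)$. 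Integrating against $\gamma$ gives the theorem.

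For $p=1$ the factor $M^{p-1}=1$ disappears, so no bound on $\mathrm{Pers}_\infty$ is needed. Since $\iint d\,\dd\gamma=\FG_1$, the bound becomes $3\FG_1$. One must also check that the $\gamma$-integrals are finite without the $\cM^\infty$ assumption, which holds since $\mu,\nu\in\cM^1$. For diagonal pairs, e.g.\ $y\in\thediag$, the estimate is exactly the $\SFG(\delta_x,\emptyset)=d(x,\thediag)=d(x,y)$ identity from \cref{prop:sfg-pers}.

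The main obstacle is bookkeeping the constants in the matched term so that they come out as exactly $1$ and $2$. The most delicate case is $u_x\ll u_y$, where the windows barely overlap and most of the contribution is window mismatch. To control it I would try bounding the mismatched length by $2|u_x-u_y|+2|v_x-v_y|\le 2\sqrt2\,d(x,y)$ and absorbing the $\sqrt2$ via the prefactor $1/\sqrt2$. If the constants do not close, I would redistribute the mass $m_{xy}$ (e.g.\ weight $1/u_y$ versus $1/u_x$) to balance the two error terms.
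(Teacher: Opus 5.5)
Your route is the paper's: the plan $\tilde\gamma$ with common weight $1/\max(d(x,\thediag),d(y,\thediag))$ and the excess sent to $\thediag$ (\cref{prop:tilde-transport-plan}), then Fubini, then a pointwise overlap/non-overlap estimate of the $t$-integral as in \cref{lem:int-proj-n}. Your symmetric-difference bound also covers pairs with disjoint $t$-windows directly, which the paper instead excludes by a support property of the optimal plan. The content of the statement, however, is the constants $2M^{p-1}$ and $3$, and two of your steps as written do not deliver them. Write $d_x=d(x,\thediag)$ and $d_y=d(y,\thediag)$.

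\emph{The excess is at the wrong point.} The excess $(1/d_x-1/d_y)_+$ sits at the point of \emph{smaller} persistence, not the larger one; your own construction puts it there. With $z$ the larger point, your inequality $|1/d_x-1/d_y|\,d_z^{p+1}\le|d_x-d_y|\,d_z^{p-1}$ is false, since it amounts to $d_z^2\le d_xd_y$, and the term would blow up as $\min(d_x,d_y)\to0$. With the correct $z$, the term equals $\frac{\min(d_x,d_y)^p}{\max(d_x,d_y)}|d_x-d_y|\le\min(d_x,d_y)^{p-1}d(x,y)$. This is exactly the paper's last term in \eqref{eq:inegProjnSharp}.

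\emph{The mismatch bound loses a constant.} On the symmetric difference $S$ of the windows $[x_1,x_2]$ and $[y_1,y_2]$, the integrand is at most $\max(d_x,d_y)^p$. After the weight $1/\max(d_x,d_y)$ and the prefactor $1/\sqrt2$, the contribution is $\frac{|S|}{\sqrt2}\max(d_x,d_y)^{p-1}$.
\begin{itemize}
\item Your bound $|S|\le2|u_x-u_y|+2|v_x-v_y|\le2\sqrt2\,d(x,y)$ gives $2\max(d_x,d_y)^{p-1}d(x,y)$. With the excess term the total is $3M^{p-1}$, i.e.\ only $\SFG_1\le4\FG_1$.
\item Even the exact value $2(|u_x-u_y|+|v_x-v_y|)=2\max(|x_1-y_1|,|x_2-y_2|)\le2d(x,y)$ only gives $1+\sqrt2$.
\item What you need is the sharper estimate $|S|\le|x_1-y_1|+|x_2-y_2|=2\max(|u_x-u_y|,|v_x-v_y|)\le\sqrt2\,d(x,y)$, by Cauchy--Schwarz. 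It also holds for disjoint windows.
\end{itemize}
The $\sqrt2$ in the sharper estimate is exactly cancelled by the prefactor. The mismatch then contributes $\max(d_x,d_y)^{p-1}d(x,y)$, the excess contributes $\min(d_x,d_y)^{p-1}d(x,y)$, and the overlap at most $d(x,y)^p$. Each pair therefore contributes at most $d(x,y)^p+2M^{p-1}d(x,y)$, as claimed. This is precisely the $\sqrt2\max(d_x,d_y)^p\|x-y\|$ term of \cref{lem:int-proj-n}, and no redistribution of mass is needed.
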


Establishing the stability of the Sliced Wasserstein distance with respect to the Wasserstein distance derived in \citep{pmlr-v70-carriere17a} when $p=1$ (see \eqref{eq:result-carriere-stability}) relies on the idea that one may obtain a transport plan between the projection of two measures $\mu, \nu$ from the transport plan $\gamma$ between the original measures by simply considering the pushforward of $\gamma$ by the projection. 
Our proof relies on the same idea though the construction of an admissible transport plan between the projected measures is more technical as we are renormalizing our measures before projecting them on geodesics. \\

Let $\mu,\nu \in \cM^p(\groundspace)$ and $\gamma \in \Adm(\mu, \nu)$, we can construct an element $\tilde\gamma$ of $\Adm(\tilde\mu, \tilde\nu)$ from $\gamma$ by setting for $A, B \subset \groundspace$:
\begin{equation}
    \tilde\gamma(A\times B) \coloneqq \iint_{A\times B}\frac{1}{\max(d(x, \thediag), d(y, \thediag))}\dd \gamma(x,y).
\end{equation}
Furthermore, let $A_+ = \{(x,y) \in A \times\groundspace : d(y, \thediag) \geq d(x, \thediag)\}$ and $B_+ = \{(x,y) \in \groundspace\times B : d(x, \thediag) \geq d(y, \thediag)\}$ and define:
\begin{align}
    &\tilde\gamma(A \times \thediag) \coloneqq \iint_{A \times \thediag}\frac{1}{d(x,\thediag)}\dd \gamma(x,y) + \iint_{A_+}\left(\frac{1}{d(x, \thediag)} - \frac{1}{d(y, \thediag)}\right)\dd \gamma(x,y) \\
  & \tilde\gamma(\thediag \times B) \coloneqq \iint_{\thediag \times B}\frac{1}{d(y, \thediag)}\dd \gamma(x,y) + \iint_{B_+}\left(\frac{1}{d(y, \thediag)} - \frac{1}{d(x, \thediag)}\right) \dd \gamma(x,y).
\end{align}

\begin{prop} \label{prop:tilde-transport-plan}
    With the above definition, $\tilde\gamma \in \Adm(\tilde\mu,\tilde\nu)$.
\end{prop}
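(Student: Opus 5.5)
We need to check that $\tilde\gamma$ is a (non-negative) Radon measure on $\overline{\groundspace}\times\overline{\groundspace}$ and that its marginals on Borel subsets of $\groundspace$ are $\tilde\mu$ and $\tilde\nu$. Since $\gamma\in\Adm(\mu,\nu)$ has marginal constraints only on $\groundspace$, we split $\gamma$ into its restrictions to $\groundspace\times\groundspace$, $\groundspace\times\thediag$, $\thediag\times\groundspace$ (the part on $\thediag\times\thediag$ is irrelevant and we set $\tilde\gamma$ to zero there). The definition of $\tilde\gamma$ on $\groundspace\times\groundspace$ is a density $1/\max(d(x,\thediag),d(y,\thediag))$ against $\gamma$, which is non-negative. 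The two extra terms on $A\times\thediag$ and $\thediag\times B$ are non-negative as well: on $A_+$ one has $d(y,\thediag)\geq d(x,\thediag)$, so $1/d(x,\thediag)-1/d(y,\thediag)\geq 0$, and symmetrically on $B_+$. These set functions are countably additive in $A$ (resp. $B$), being integrals of non-negative functions over sets depending measurably and additively on $A$. Hence $\tilde\gamma$ is a well-defined non-negative measure. Its local finiteness follows once the marginal identities are established, since then $\tilde\gamma(K\times\overline{\groundspace})=\tilde\mu(K)<\infty$ for compact $K\subset\groundspace$, and similarly for the second factor.

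The core step is the first marginal identity. For a Borel set $A\subset\groundspace$ we write
\[
\tilde\gamma(A\times\overline{\groundspace})=\tilde\gamma(A\times\groundspace)+\tilde\gamma(A\times\thediag),
\]
and we split $A\times\groundspace$ into $A_+$, where $\max=d(y,\thediag)$, and its complement $A_-$, where $\max=d(x,\thediag)$. On $A_-$ the integrand is already $1/d(x,\thediag)$. On $A_+$ the integrand $1/d(y,\thediag)$, added to the correction term $\iint_{A_+}(1/d(x,\thediag)-1/d(y,\thediag))\,\dd\gamma$, gives again $1/d(x,\thediag)$. Adding the term $\iint_{A\times\thediag}\dd\gamma/d(x,\thediag)$, we obtain
\[
\tilde\gamma(A\times\overline{\groundspace})=\iint_{A\times\overline{\groundspace}}\frac{\dd\gamma(x,y)}{d(x,\thediag)}=\int_A\frac{\dd\mu(x)}{d(x,\thediag)}=\tilde\mu(A).
\]
The middle equality uses that the first marginal of $\gamma$ restricted to $\groundspace\times\overline{\groundspace}$ is $\mu$, applied to the non-negative function $x\mapsto \mathbf 1_A(x)/d(x,\thediag)$. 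This follows by monotone approximation by simple functions from $\gamma(A'\times\overline{\groundspace})=\mu(A')$. The second marginal is handled identically, with $B_+$ and the roles of $x,y$ exchanged. The points with $d(x,\thediag)=d(y,\thediag)$ lie in both $A_+$ and $B_+$, but the correction term vanishes there, so no double counting arises.

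The main obstacle is making the splitting rigorous when quantities may be infinite: $\tilde\mu$ can have infinite mass near $\thediag$, so subtracting integrals is not allowed. We avoid subtraction entirely by working with the non-negative integrands pointwise, as above: on $A_+$ we add $1/d(y,\thediag)$ and $1/d(x,\thediag)-1/d(y,\thediag)$ inside a single integral. This is justified by additivity of integrals of non-negative functions, which holds even when they are infinite. For compact $A$ every quantity is finite in any case.
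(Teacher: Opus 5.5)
Your marginal computation is the same as the paper's. You split $A\times\groundspace$ according to which of $d(x,\thediag)$ and $d(y,\thediag)$ is larger. You then recombine the correction term pointwise on $A_+$, so the integrand becomes $1/d(x,\thediag)$ on all of $A\times\overline{\groundspace}$, and conclude with the first marginal of $\gamma$. Your simple-function justification of that last step, and your care not to subtract possibly infinite quantities, are details the paper leaves implicit.

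The genuine difference is local finiteness. The paper covers a compact set by finitely many products of closed balls and reduces to compact rectangles $A\times B$ at positive distance from $\thediag\times\thediag$. It then bounds $\tilde\gamma(A\times(B\setminus\thediag))$ and $\tilde\gamma(A\times\thediag)$ by hand by $\mu(A)/d(A,\thediag)$. You instead read finiteness off the marginal identities. This is shorter, and it shows that the only input needed is that $\tilde\mu,\tilde\nu$ are finite on compact subsets of $\groundspace$, which the paper already noted after introducing the normalization.

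For your version to be complete, make two points explicit.

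First, $\tilde\gamma$ must be Radon on $\overline{\groundspace}\times\overline{\groundspace}\setminus(\thediag\times\thediag)$, as in Figalli--Gigli and as the paper's proof says. It is not Radon on $\overline{\groundspace}\times\overline{\groundspace}$, which is what your opening sentence states (echoing the paper's definition of $\Adm$). Take a diagram $\mu$ whose points accumulate at some $z\in\thediag$, and $\nu=0$. Then the optimal $\gamma$, and hence $\tilde\gamma$, gives infinite mass to every neighbourhood of $(z,z)$. Your argument only controls compact sets avoiding $\thediag\times\thediag$, which is exactly what is needed.

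Second, your strips $K\times\overline{\groundspace}$ and $\overline{\groundspace}\times K$ do cover every compact $C$ of that space, but this needs a line. Set $\varepsilon\coloneqq d(C,\thediag\times\thediag)>0$. Then each $(x,y)\in C$ satisfies $\max(d(x,\thediag),d(y,\thediag))\geq\varepsilon/\sqrt2$. Let $K_i$ be the set of points of the $i$-th coordinate projection of $C$ at distance at least $\varepsilon/\sqrt2$ from $\thediag$; it is a compact subset of $\groundspace$. Then $C\subset(K_1\times\overline{\groundspace})\cup(\overline{\groundspace}\times K_2)$, so $\tilde\gamma(C)\leq\tilde\mu(K_1)+\tilde\nu(K_2)<\infty$.

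Like the paper, you also leave unspecified how the mass $\tilde\gamma(A\times\thediag)$ is spread along $\thediag$, for instance placed at $\pi(x)$. This does not affect the marginals or the sliced cost, since $\pi_t$ collapses $\thediag$ to a point. It is, however, needed before calling $\tilde\gamma$ a well-defined measure on $\overline{\groundspace}\times\overline{\groundspace}$.
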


\begin{proof}
    We have
\begin{align}
    \tilde\gamma(\overline{\groundspace} \times B) &= \iint_{\thediag \times B}\dd \tilde\gamma(x,y) + \iint_{B_+}\dd \tilde \gamma(x,y) + \iint_{B_-}\dd \tilde \gamma(x,y) \\
    &=\iint_{\thediag \times B}\frac{\dd \gamma(x,y)}{d(y, \thediag)} + \iint_{B_+}\left(\frac{1}{d(y, \thediag)} - \frac{1}{d(x, \thediag)}\right)\dd \gamma(x,y) \\
    & \quad \quad + \iint_{B_+}\frac{1}{d(x, \thediag)}\dd \gamma(x,y) + \iint_{B_-}\frac{1}{d(y, \thediag)}\dd \gamma(x,y) \\
    &= \iint_{\overline{\groundspace}\times B}\frac{1}{d(y, \thediag)}\dd \gamma(x,y) = \tilde\nu(B).
\end{align}
Similarly, we have $\tilde\gamma(A \times \overline{\groundspace}) = \tilde\mu(A)$. \\

We now just need to verify that $\tilde\gamma$ takes finite values on compacts of $\groundspace\times\groundspace \setminus \thediag\times\thediag$ (cf. \citep{FG10}, transport plans are Radon measures on that space). Consider then $C$ a compact of that space. Observe that, since $\thediag\times\thediag$ is closed, $\varepsilon \coloneqq d(C, \thediag\times\thediag) > 0$. We can then write $C \subset \bigcup_{(x,y) \in C} B(x, \frac{\varepsilon}{4})\times B(y, \frac{\varepsilon}{4})$ from which we extract a finite subcover to get $C \subset \bigcup_{i=1}^n\overline{B(x_i, \frac{\varepsilon}{4})}\times \overline{B(y_i,,\frac{\varepsilon}{4})}$. Therefore, it suffices to show that $\gamma(A\times B)$ is finite for all $A \times B$ compact de $\groundspace\times\groundspace$ such that $d(A\times B, \thediag \times \thediag) > 0$.

Let $A, B$ be such compacts. Since $d(A\times B, \thediag\times\thediag) > 0)$, necessarily $d(A, \thediag) > 0$ or $d(B, \thediag) > 0$. Suppose it is the case for $A$. We then write $A \times B \subset A\times(B\setminus\thediag) \cup A\times \thediag$ from which we get $\tilde\gamma(A\times B) \leq \tilde\gamma(A\times B\setminus \thediag) + \tilde\gamma(A\times\thediag)$. Furthermore,
\begin{align}
    \tilde\gamma(A\times (B\setminus\thediag)) &= \iint_{A\times B\setminus\thediag}\frac{1}{\max(d(x,\thediag), d(y,\thediag))}\dd\gamma(x,y) \\& \leq \frac{1}{d(A, \thediag)}\iint_{A\times\overline{\groundspace}}\dd\gamma(x,y) = \frac{\mu(A)}{d(A,\thediag)} < + \infty,
\end{align}
and
\begin{align}
    \tilde\gamma(A \times\thediag) &= \iint_{A\times\thediag}\frac{\dd\gamma(x,y)}{d(x,\thediag)} + \iint_{A_+}\left(\frac{1}{d(x,\thediag)} - \frac{1}{d(y, \thediag)}\right)\dd\gamma(x,y) \\
    &\leq \frac{1}{d(A,\thediag)}\iint_{A\times\overline{\groundspace}}\dd \gamma(x,y) \\
    &= \frac{\mu(A)}{d(A,\thediag)} < +\infty,
\end{align}
which concludes the proof.
\end{proof}

We also have the following straightforward lemma.
\begin{lemma} \label{lem:int-proj-n}
    Let $x = (x_1,x_2) \text{ and } y = (y_1, y_2) \in \groundspace$.  
    We have
\begin{align} 
\int_\bR d(\pi_t(x), \pi_t(y))\dd t &\leq \sqrt2\max(d(x,\thediag), d(y, \thediag))\|x-y\|^p \\ &\qquad + \sqrt2\max(d(x,\thediag), d(y, \thediag))^p)\|x-y\|.
\end{align}

\end{lemma}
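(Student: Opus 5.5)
We read the left-hand side as $\int_\bR d(\pi_t(x),\pi_t(y))^p\dd t$: the right-hand side is homogeneous of degree $p+1$, and for $p=1$ the two readings coincide. Here the distance on $G_t$ is the one induced by $\Delta_p$, i.e.\ by $\rho$. The plan is a direct computation, splitting the real line of parameters $t$ according to which of the two points is projected onto $\thediag$.

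\textbf{Step 1: parametrize $G_t$ by distance to the diagonal.} By \eqref{eq:pi-t}, for $z\in\groundspace$ and $t\in I_z\coloneqq[z_1,z_2]$ one has $\pi_t(z)=(t,t)+\frac{z_2-z_1}{2}(-1,1)$. This point lies at Euclidean distance $\frac{z_2-z_1}{\sqrt2}=d(z,\thediag)$ from $(t,t)$. For $t\notin I_z$ one has $\pi_t(z)=\thediag$.

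So, identifying $G_t$ with $[0,+\infty)$ with $\thediag\mapsto 0$, we get $\pi_t(z)\mapsto d(z,\thediag)\mathbf 1_{I_z}(t)$. On this half-line the induced distance is $\min(|a-b|,a+b)=|a-b|$. Hence
\[
d(\pi_t(x),\pi_t(y))=\bigl|d(x,\thediag)\mathbf 1_{I_x}(t)-d(y,\thediag)\mathbf 1_{I_y}(t)\bigr|.
\]

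\textbf{Step 2: split $\bR$ into three regions.}
\begin{itemize}
\item On $I_x\cap I_y$, the integrand is $|d(x,\thediag)-d(y,\thediag)|^p\le\|x-y\|^p$, since $z\mapsto d(z,\thediag)$ is $1$-Lipschitz. The length of this region is at most $|I_x|=\sqrt2\,d(x,\thediag)\le\sqrt2\max(d(x,\thediag),d(y,\thediag))$. Its contribution is therefore at most $\sqrt2\max(d(x,\thediag),d(y,\thediag))\|x-y\|^p$.
\item On the symmetric difference $I_x\,\triangle\, I_y$, exactly one point is sent to $\thediag$. The integrand is then $d(x,\thediag)^p$ or $d(y,\thediag)^p$, hence at most $\max(d(x,\thediag),d(y,\thediag))^p$.
\item Outside $I_x\cup I_y$, the integrand vanishes.
\end{itemize}

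\textbf{Step 3: bound the length of the symmetric difference.} The only point needing care is the Lebesgue measure of $I_x\triangle I_y$. For two intervals $[x_1,x_2]$ and $[y_1,y_2]$, the symmetric difference is contained in the union of the interval between $x_1$ and $y_1$ and the interval between $x_2$ and $y_2$. Its length is therefore at most $|x_1-y_1|+|x_2-y_2|\le\sqrt2\|x-y\|$ by Cauchy--Schwarz. This also holds when the intervals are disjoint, since then both endpoint gaps together cover both intervals.

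Multiplying this length by the bound on the integrand gives the second term $\sqrt2\max(d(x,\thediag),d(y,\thediag))^p\|x-y\|$. Summing the three regions yields the claim.
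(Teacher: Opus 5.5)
Your proof is correct and uses the same decomposition as the paper's. The paper treats the overlapping and nested configurations separately: on the common interval it bounds the integrand by $\|x-y\|^p$, and on the two one-sided pieces of lengths $|y_1-x_1|$ and $|y_2-x_2|$ it uses $d(\cdot,\thediag)^p$, exactly as you do. Your case-free bookkeeping via $I_x\cap I_y$ and $I_x\triangle I_y$ also covers disjoint intervals, which the paper's proof omits (it only excludes such pairs later, in the proof of \cref{thm:ineg-projn}).
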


\begin{proof}
    Assume $x_1 \leq y_1 \leq x_2 \leq y_2$. Then for all $t \notin [x_1, y_2], d(\pi_t(x),\pi_t(y)) = 0$ and we have

\begin{equation} 
\int_{x_1}^{y_1}d(\pi_t(x), \pi_t(y))^p\dd t = d(x, \thediag)^p(y_1-x_1) \text{ and } \int_{x_2}^{y_2}d(\pi_t(x), \pi_t(y))\dd t = d(y, \thediag)^p(y_2-x_2).
\end{equation}
Furthermore, since for all $x_2 \leq t \leq y_1, d(\pi_t(x), \pi_t(y)) \leq \|x -y\|$, we also have
\begin{equation}
    \int_{y_1}^{x_2}d(\pi_t(x), \pi_t(y))^p\dd t \leq (x_2 - y_1)\|x-y\|^p \leq \max(d(x,\thediag), d(y,\thediag))\|x-y\|^p.
\end{equation}
Combining these inequalities yields the desired result. The calculation is the same when $x_1 \leq y_1 \leq y_2 \leq x_2$.
\end{proof}

We can now prove \cref{thm:ineg-projn}.

\begin{proof}
Let $\mu,\nu \in \cM^p(\groundspace) \cap \cM^\infty(\groundspace)$. Observe that $\supp(\gamma) \cap \{(x,y) \in \Omega^2 : x_1 \leq x_2 \leq y_1 \leq y_2 \text{ or } y_1 \leq y_2 \leq x_1 \leq x_2\} = \emptyset$. Hence, using \ref{lem:int-proj-n}, we get
\begin{align}
    \int_\bR\iint_{\Omega\times\Omega} d(\pi_t(x), \pi_t(y))^p\dd &\tilde\gamma(x,y) \dd t \leq\sqrt2\iint_{\groundspace\times\groundspace}\|x - y\|^p\dd \gamma(x,y) \\
    &\quad + \sqrt2\iint_{\groundspace\times\groundspace}\max(d(x,\thediag), d(y, \thediag))^{p-1}\|x - y\|\dd \gamma(x,y).
\end{align}
Then, one has
\clearpage
\begin{align}
    \int_\bR\iint_{\groundspace\times\thediag}d(\pi_t(x), \pi_t(y))^p\dd &\tilde\gamma(x,y)\dd t \leq \sqrt2\iint_{\groundspace\times\thediag}d(x,\thediag)^p\dd\gamma(x,y) \\
    &\quad + \sqrt2\iint_{\groundspace_{1,+}}d(x,\thediag)^{p+1}\left(\frac{1}{d(x,\thediag)} - \frac{1}{d(y,\thediag)}\right)\dd\gamma(x,y),
\end{align}
where $\groundspace_{1,+} = \{(x, y) \in \groundspace\times\groundspace: d(y,\thediag) \geq d(x, \thediag)\}$ and $\groundspace_{2,+} = \{(x, y) \in \groundspace\times\groundspace: d(x,\thediag) \geq d(y, \thediag)\}$. 
We also have a similar inequality when integrating over $\thediag\times\groundspace$. 
Combining those results, we get
\begin{align} \label{eq:inegProjnSharp}
    \begin{split}
    \sqrt2\SFG_p^p(\mu,\nu) &\leq \sqrt2\FG_p^p(\mu,\nu) \\
    &\quad + \sqrt2\iint_{\groundspace\times\groundspace}\max(d(x,\thediag), d(y, \thediag))^{p-1}d(x,y)\dd \gamma(x,y) \\
    &\quad + \sqrt2\iint_{\groundspace\times\groundspace} \frac{\min(d(x,\thediag),d(y,\thediag))^p}{\max(d(x,\thediag),d(y,\thediag))}d(x,y)\dd\gamma(x,y),
    \end{split}
\end{align}
from which we deduce the result of \cref{thm:ineg-projn}.
\end{proof}

It is unfortunately impossible to get a uniform upper bound (that is a bound given by a constant factor of the $\FG_p$ distance) when $p>1$. 
Indeed, consider $\mu_n = \delta_{(-n,n)}$ and $\nu_n = \delta_{(-n-\frac{1}{\ln(n)}, n+ \frac{1}{\ln(n)})}$ on one hand we get $\FG_p(\mu_n,\nu_n) = \frac{\sqrt{2}}{\ln(n)}$ and on the other we have
\begin{equation} 
\SFG_p^p(\mu_n,\nu_n) \geq \sqrt2^{p+1}n^{p+1}\left(\frac{1}{\sqrt2n}-\frac{1}{\sqrt2(n+\frac{1}{\ln(n)})}\right) \underset{n\xrightarrow{}+ \infty}{\sim} C\frac{n^{p-1}}{\ln(n)}.
\end{equation}
In particular, $\FG_p(\mu_n,\nu_n) \xrightarrow[n\xrightarrow{} +\infty ]{}0$ and $\SFG_p(\mu_n,\nu_n) \xrightarrow[n\xrightarrow{} +\infty ]{}+\infty$. \\ \\
Similarly, it is not possible to get a uniform lower bound of the form $C_p\SFG_p \leq \FG_p$. 
Indeed, by considering the following two diagrams $\mu_n = \{(k\frac{\sqrt2}{n}, (k+1)\frac{\sqrt2}{n}), k\in [0, \lceil n^p\rceil]\}$ and $\nu_n = \{(k+\frac{1}{2})\frac{\sqrt2}{n}, (k+1 + \frac{1}{2})\frac{\sqrt2}{n}), k\in [0, \lceil n^p\rceil\}$. 
We then have $\SFG_p^p(\mu_n, \nu_n)= \frac{1}{n^p}$ and $\FG_p(\mu_n, \nu_n) = \sum_{k=0}^{\lceil n^p\rceil} \frac{1}{n^p} \geq 1$. \\ \\
The above shows that $\SFG_p$ and $\FG_p$ are not strongly equivalent. We will therefore compare these two distances in a slightly weaker manner by instead proving that they are topologically equivalent.

\subsubsection{Equivalence of topologies}
In this section, we prove that the topology of $\cD^p(\groundspace)$ endowed with $\SFG_p$ is the same as with $\FG_p$. 
Note that in this specific subsection, our proof restrict to (possibly infinite) persistence diagrams instead of measures for the sake of simplicity. 
Extending the result to that more general setting is left for future work. 
\begin{thm}[The topologies defined by $\SFG_p$ and $\FG_p$ are the same] \label{thm:equivTopSFG}
    Let $\mu \in \cD^p(\groundspace)$ and $(\mu_n)_n$ be a sequence in $\cD^p(\groundspace)$. 
    Then
    \begin{equation}
        \mu_n \xrightarrow[]{\FG_p} \mu \iff \mu_n \xrightarrow[]{\SFG_p} \mu.
    \end{equation}
\end{thm}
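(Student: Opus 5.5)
Both directions go through the standard characterization of $\FG_p$-convergence of persistence diagrams (and measures) from \citet{DL20}. It states that $\mu_n \to \mu$ in $\FG_p$ if and only if two things hold: $\mu_n \to \mu$ vaguely on $\groundspace$, and $\mathrm{Pers}_p(\mu_n) \to \mathrm{Pers}_p(\mu)$. Equivalently, one may ask for vague convergence together with uniform control of the persistence mass near the diagonal, i.e.\ $\lim_{r\to 0}\sup_n \int_{d(x,\thediag)<r} d(x,\thediag)^p \dd\mu_n(x) = 0$. The plan is to show that $\SFG_p$-convergence is equivalent to these same two conditions.

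\emph{Direction $\FG_p \Rightarrow \SFG_p$.} We cannot use \cref{thm:ineg-projn} directly when $p>1$, because it requires a bound on $\mathrm{Pers}_\infty$. We therefore truncate. Fix $\varepsilon>0$ and choose $r>0$ small and $R$ large so that $\mu$ and all the $\mu_n$ (for $n$ large) carry less than $\varepsilon$ of $p$-persistence in the region $\{d(x,\thediag)<r\}\cup\{\|x\|>R\}$. The uniform tail control near infinity follows from the convergence of $\mathrm{Pers}_p$ together with vague convergence. Write $\mu_n = \mu_n^K + \mu_n^c$, where $K$ is the compact region $\{r\le d(x,\thediag),\ \|x\|\le R\}$, and similarly $\mu=\mu^K+\mu^c$. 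Since $\SFG_p$ is a distance that is built from an $L^p$-in-$t$ integral of $\FG_p$, which is subadditive under adding measures on both sides, we get
\[
\SFG_p(\mu_n,\mu)\le \SFG_p(\mu_n^K,\mu^K)+\mathrm{Pers}_p(\mu_n^c)+\mathrm{Pers}_p(\mu^c),
\]
using \cref{prop:sfg-pers}. The last two terms are $O(\varepsilon^{1/p})$. On $K$, $\mathrm{Pers}_\infty\le R$. Moreover, after choosing $r,R$ at continuity sets of $\mu$, $\FG_p(\mu_n^K,\mu^K)\to 0$ for point measures. Because the number of points in $K$ is bounded, the optimal plan $\gamma$ gives $\iint d\dd\gamma \le C\,\FG_p$. \Cref{thm:ineg-projn} then yields $\SFG_p(\mu_n^K,\mu^K)\to 0$.

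\emph{Direction $\SFG_p \Rightarrow \FG_p$.} First, \cref{prop:sfg-pers} and the triangle inequality give $\mathrm{Pers}_p(\mu_n)=\SFG_p(\mu_n,\emptyset)\to \SFG_p(\mu,\emptyset)=\mathrm{Pers}_p(\mu)$. It remains to prove vague convergence. Since $\SFG_p^p(\mu_n,\mu)=\frac{1}{\sqrt2^p}\int_\bR \FG_p^p(\pi_t\#\tilde\mu_n,\pi_t\#\tilde\mu)\dd t\to0$, we can pass to a subsequence along which $\FG_p(\pi_t\#\tilde\mu_n,\pi_t\#\tilde\mu)\to0$ for a.e.\ $t$. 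By the one-dimensional characterization (\cref{prop:FG-1D} and the convergence criterion on the half-line), this implies vague convergence $\pi_t\#\tilde\mu_n\to\pi_t\#\tilde\mu$ on $\cG_t\setminus\{\thediag\}$ for a.e.\ $t$.

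The bound $\mathrm{Pers}_p(\mu_n)\le C$ makes $(\mu_n)$ vaguely relatively compact, so we take a vague limit point $\mu'$. We then use the test-function identity from the proof of \cref{lem:injectivity-transform}: for $\varphi\in\cC^\infty_c((0,\infty))$ and $\psi\in\cC^\infty_c(\bR)$, the quantity $\int\psi'(t)\braket{\varphi,\pi_t\#\tilde\mu_n}\dd t$ equals the integral of the compactly supported continuous function $\varphi(u)[\psi(v-u)-\psi(v+u)]$ against $\mu_n$ (divided by $u$ for the normalization). The left-hand side converges, by dominated convergence in $t$ with the uniform persistence bound, to the corresponding expression for $\mu$. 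The right-hand side converges to the same expression for $\mu'$. Hence $\mu$ and $\mu'$ satisfy \eqref{eq:interm-step-lemma}, and the injectivity argument gives $\mu'=\mu$. Since every subsequence has a further subsequence converging vaguely to $\mu$, we conclude that $\mu_n\to\mu$ vaguely. Combined with the convergence of $\mathrm{Pers}_p$, the \citet{DL20} criterion gives $\FG_p$-convergence.

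\textbf{Main obstacle.} The hardest step is justifying the interchange of limits in the $t$-integral, specifically the domination of $t\mapsto\braket{\varphi,\pi_t\#\tilde\mu_n}$ uniformly in $n$. I would handle this by noting that $\varphi$ is supported in $u\ge a>0$. This gives the bound $\|\varphi\|_\infty a^{-p-1}\int d(x,\thediag)^{p}\dd\mu_n$ for the $t$-integrated mass, which is uniformly bounded. A smaller technical point is the truncation in the first direction, where the restrictions to $K$ must converge in $\FG_p$; for diagrams this reduces to finitely many points converging.
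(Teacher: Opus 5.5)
Your proof is correct, and in both directions it takes a different route from the paper's.

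For $\FG_p\Rightarrow\SFG_p$, the paper keeps the optimal plans $\gamma_n$ between the full diagrams. It bounds the two extra terms of \eqref{eq:inegProjnSharp} by splitting near and far from the diagonal, which requires controlling where $\gamma_n$ sends points. You instead truncate to a compact $K$ away from $\thediag$ and from infinity, with $\mu(\partial K)=0$. You absorb the remainders through subadditivity and \cref{prop:sfg-pers}, and invoke \cref{thm:ineg-projn} only on $K$. There the number of points is bounded, so H\"older gives $\iint d\,\dd\gamma\le C\,\FG_p$.

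For $\SFG_p\Rightarrow\FG_p$ the contrast is sharper. The paper proves vague convergence through a long counting argument in strips parallel to the diagonal (\cref{lem:trop-projn}, \cref{lem:technical-top-n}, \cref{prop:top-SFG}), written out for $p=1$ and built on integer multiplicities. You use compactness plus uniqueness instead:
\begin{itemize}
\item uniform persistence bounds give vague relative compactness;
\item a.e.-$t$ convergence of the slices lets you pass to the limit in the distributional identity underlying \cref{lem:injectivity-transform};
\item injectivity then identifies every vague limit point with $\mu$.
\end{itemize}

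The paper's route yields explicit local lower bounds on $\SFG_p$ in terms of multiplicity mismatches near a point of $\mu$, which your soft argument does not provide. Yours is much shorter and treats all $p\ge 1$ at once. Integrality is also inessential in it: replace the point count on $K$ by the mass bound $\mu_n(K)\le \mathrm{Pers}_p^p(\mu_n)/r^p$, and get $\FG_p(\mu_n^K,\mu^K)\to 0$ from \eqref{eq:convergence} applied to the restrictions. So your argument extends \cref{thm:equivTopSFG} to persistence measures in $\cM^p(\groundspace)$, which the paper leaves open.

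Three small corrections.
\begin{itemize}
\item Your ``equivalently'' reformulation of \eqref{eq:convergence}, with only near-diagonal uniform control, is false as stated. Take $\mu_n=\delta_{(n,n+1)}$: it tends to $0$ vaguely with no mass near $\thediag$, yet $\FG_p(\mu_n,0)=2^{-1/2}$, so tightness at infinity is also needed. You never use this reformulation, so nothing breaks.
\item The obstruction to applying \cref{thm:ineg-projn} directly when $p>1$ is not $\mathrm{Pers}_\infty$, which is bounded by $\mathrm{Pers}_p$ for diagrams. It is the term $\iint d\,\dd\gamma_n$, which can be infinite when infinitely many points approach the diagonal. That is precisely what your truncation fixes.
\item Before applying \cref{lem:injectivity-transform}, record that the limit point $\mu'$ lies in $\cM^p(\groundspace)$. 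This follows from lower semicontinuity of $\mathrm{Pers}_p$ under vague convergence, and it gives the finite mass on strips that the disintegration step requires.
\end{itemize}
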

We will use the following characterisation of convergence in $(\cM^p(\groundspace), \FG_p)$ from \citep{DL20},
\begin{equation} \label{eq:convergence}
    \FG_p(\mu_n, \mu) \xrightarrow{} 0 \iff
    \begin{cases}
    \mu_n \xrightarrow{v} \mu \\
    \mathrm{Pers}_p(\mu_n) \xrightarrow{} \mathrm{Pers}_p(\mu)
    \end{cases}
\end{equation}
Where $\mu_n \xrightarrow[]{v} \mu$ means that for all functions $f$ with compact support in $\groundspace$, we have $\int_\groundspace f(x)\dd \mu_n(x) \to \int_\groundspace f(x) \dd \mu(x)$.
\begin{prop}
    Let $\mu \in \cD^p(\groundspace)$ and $(\mu_n)_n$ be a sequence of $\cD^p(\groundspace)$ such that $\mu_n \xrightarrow[]{\FG_p} \mu$ then $\mu_n \xrightarrow[]{\SFG_p} \mu$.
\end{prop}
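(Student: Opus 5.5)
We aim to show that $\FG_p$-convergence implies $\SFG_p$-convergence, using the stability bound of \cref{thm:ineg-projn}, or more precisely its sharper form \eqref{eq:inegProjnSharp}. The difficulty is that this bound involves $M=\max(\mathrm{Pers}_\infty(\mu_n),\mathrm{Pers}_\infty(\mu))$ and the weight $\max(d(x,\thediag),d(y,\thediag))^{p-1}$. Along a convergent sequence these quantities are controlled, since $\mathrm{Pers}_\infty(\mu_n)\le \mathrm{Pers}_p(\mu_n)\to \mathrm{Pers}_p(\mu)$ for point measures. Hence $M$ is uniformly bounded, say $M\le K$. Let $\gamma_n$ be an optimal plan for $\FG_p(\mu_n,\mu)$. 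Then \eqref{eq:inegProjnSharp} gives
\begin{equation*}
\SFG_p^p(\mu_n,\mu)\le \FG_p^p(\mu_n,\mu)+2K^{p-1}\iint_{\groundspace\times\groundspace} d(x,y)\,\dd\gamma_n(x,y).
\end{equation*}
The whole task reduces to showing that $\iint_{\groundspace\times\groundspace} d(x,y)\,\dd\gamma_n\to 0$. The integral runs only over off-diagonal pairs, and for $p=1$ this is immediate.

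\textbf{Step 1: compact truncation.} Fix $\varepsilon>0$. Since $\mathrm{Pers}_p(\mu)<\infty$, choose $r>0$ such that the mass of $\mu$ near the diagonal, $\int_{d(x,\thediag)<r} d(x,\thediag)^p\,\dd\mu$, is at most $\varepsilon$. By \eqref{eq:convergence}, vague convergence together with convergence of $\mathrm{Pers}_p$ gives, for $n$ large, the same tail bound (up to $2\varepsilon$) for $\mu_n$. Here we pick $r$ so that $\mu$ gives no mass to the boundary $\{d=r\}$, which is possible since $\mu$ is a countable point measure.

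\textbf{Step 2: split the off-diagonal pairs.} Divide the pairs $(x,y)\in\groundspace\times\groundspace$ charged by $\gamma_n$ into two groups.
\begin{itemize}
\item[(a)] Both $x$ and $y$ have $d(\cdot,\thediag)\ge r/2$. Since $\gamma_n(\{d(x,\thediag)\ge r/2\}\times\overline\groundspace)\le\mu_n(\{d\ge r/2\})\le r^{-p}2^p\mathrm{Pers}_p^p(\mu_n)$, these pairs carry uniformly bounded mass $N$. Hölder's inequality then bounds their contribution by $N^{1-1/p}\FG_p(\mu_n,\mu)\to0$.
\item[(b)] At least one point lies within $r/2$ of $\thediag$. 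If the other point is far, i.e. at distance at least $r$, then $d(x,y)\ge r/2$. The $\gamma_n$-mass of such pairs is at most $(2/r)^p\FG_p^p$, and their contribution is again $\lesssim$ a power of $\FG_p\to0$ by Hölder. If both points are within distance $r$ of the diagonal, use the cruder estimate $d(x,y)\le d(x,\thediag)+d(y,\thediag)$, or better, rerun \eqref{eq:inegProjnSharp} keeping the factor $\max(d(x,\thediag),d(y,\thediag))^{p-1}d(x,y)$. This factor is bounded by $C(d(x,\thediag)^p+d(y,\thediag)^p+d(x,y)^p)$ using Young's inequality. The first two terms integrate to at most the near-diagonal persistence of $\mu_n$ and $\mu$, hence $O(\varepsilon)$, and the last is at most $\FG_p^p$.
\end{itemize}
Because this Young-type bound also handles group (a), we may simply write the integrand of \eqref{eq:inegProjnSharp} as $\max(\cdot)^{p-1}d(x,y)\le \delta\max(\cdot)^p+C_\delta d(x,y)^p$. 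On far pairs we use instead $\max\le K$ and the Hölder argument above.

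\textbf{Conclusion and main obstacle.} Combining, $\limsup_n \SFG_p^p(\mu_n,\mu)\le C\varepsilon$ for every $\varepsilon>0$, which proves the claim. The delicate point is the near-diagonal region, which may contain infinitely many points and where $\gamma_n$ pairs points of similar small persistence. Uniform smallness there must come from convergence of $\mathrm{Pers}_p$ combined with vague convergence, namely no persistence mass escaping into the strip. This is the tightness consequence of \eqref{eq:convergence} that we need, obtained by choosing $r$ at a continuity level of $\mu$.
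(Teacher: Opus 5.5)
Your final argument is correct once the opening ``reduction'' is discarded, and its skeleton matches the paper's. Both start from \eqref{eq:inegProjnSharp}, split pairs by their distance to $\thediag$, and control the near-diagonal part by near-diagonal persistence, which is uniformly small by \eqref{eq:convergence}. The difference is the far part. The paper uses the point structure to assert that, for $n$ large, every pair charged by $\gamma_n$ satisfies $d(x,y)<\varepsilon^2$. You instead use a Markov bound on the mass of far pairs, Hölder's inequality, and the uniform bound $K\ge\mathrm{Pers}_\infty(\mu_n)$.

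The sentence ``the whole task reduces to showing $\iint d(x,y)\,\dd\gamma_n\to0$'' is false for $p>1$. Take $\mu=\sum_{k\ge1}\delta_{x_k}$ with $x_k=(4k,4k+\sqrt2/k)$, so that $d(x_k,\thediag)=1/k$. Let $\mu_n$ be obtained by moving each $x_k$ orthogonally away from $\thediag$ by $1/(nk)$. Then $\FG_p^p(\mu_n,\mu)\le n^{-p}\sum_k k^{-p}\to0$, and the identity matching is optimal. Yet $\iint_{\groundspace\times\groundspace}d(x,y)\,\dd\gamma_n=\sum_k\frac{1}{nk}=+\infty$ for every $n$, so your bound with $2K^{p-1}$ is vacuous. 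For the same reason the ``cruder estimate'' $d(x,y)\le d(x,\thediag)+d(y,\thediag)$ in (b) fails: it produces a near-diagonal $\mathrm{Pers}_1$, which may be infinite. What makes your proof work is keeping the weight $\max(d(x,\thediag),d(y,\thediag))^{p-1}$ near $\thediag$, as you eventually do.

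Your own $\delta$-Young inequality finishes the proof by itself, with no case split, no $K$ and no Step 1. Two facts suffice: $\min(\cdot)^p/\max(\cdot)\le\max(\cdot)^{p-1}$, and $\iint_{\groundspace\times\groundspace}\max(d(x,\thediag),d(y,\thediag))^p\,\dd\gamma_n\le\mathrm{Pers}_p^p(\mu_n)+\mathrm{Pers}_p^p(\mu)$. With these, \eqref{eq:inegProjnSharp} yields
\[
\SFG_p^p(\mu_n,\mu)\le(1+2C_\delta)\,\FG_p^p(\mu_n,\mu)+2\delta\bigl(\mathrm{Pers}_p^p(\mu_n)+\mathrm{Pers}_p^p(\mu)\bigr).
\]
Since $\mathrm{Pers}_p(\mu_n)\le\FG_p(\mu_n,\mu)+\mathrm{Pers}_p(\mu)$ stays bounded, letting $n\to\infty$ and then $\delta\to0$ concludes. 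This version needs no tightness near $\thediag$. That tightness would in any case require a lower-semicontinuity argument on the open sets $\{d<r\}$ and $\{d>r\}$, not plain vague convergence, since $\{d(\cdot,\thediag)\ge r\}$ is not compact. It also needs no point structure. The paper's derivation of \eqref{eq:inegProjnSharp} uses $\mathrm{Pers}_\infty$ only at its last step. So this argument gives the implication on all of $\cM^p(\groundspace)$, which the paper's small-displacement argument does not.
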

\begin{proof}
    If $p=1$ this is simply a consequence and \ref{thm:ineg-projn}. 
    For $p > 1$, we need to be slightly more precise.
    
    Consider a sequence $\mu_n \in \cD^p$ such that $\FG_p(\mu_n, \mu) \to 0$ and let $\varepsilon > 0$. Because of (\ref{eq:convergence}), we therefore have $\mu_n \xrightarrow[]{v} \mu$ and in particular $\mu_n$ converges pointwise to $\mu$. Hence, if we denote by $\gamma_n$ the optimal transport plan achieving the infimum in $\FG_p(\mu,\mu_n)$, for $n$ large enough we know that if $\gamma(x,y) > 0$ then $d(x,y) < \varepsilon^2$. We will now bound the third term in \ref{eq:inegProjnSharp}. Let $A_{\geq\varepsilon} = \{(x,y) \in \groundspace\times\groundspace : d(x,\thediag) \geq \varepsilon \text{ and } d(y, \thediag) \geq \varepsilon\}$ and $A_{\leq\varepsilon} \{(x,y) \in \groundspace\times\groundspace : d(x,\thediag) \leq \varepsilon \text{ and } d(y, \thediag) \leq \varepsilon\}$, we have
    \begin{align}
        \iint_{\groundspace\times\groundspace}\frac{\min(d(x,\thediag), d(y,\thediag))^{p}}{\max(d(x,\thediag),d(y,\thediag))}&d(x,y)\dd\gamma_n(x,y) \\ & = \iint_{A_{\geq\varepsilon}}\frac{\min(d(x,\thediag), d(y,\thediag))^{p}}
        {\max(d(x,\thediag),d(y,\thediag))}d(x,y)\dd\gamma_n(x,y) \\  
        & \quad + \iint_{A_{\leq2\varepsilon}}\frac{\min(d(x,\thediag), d(y,\thediag))^{p}}{\max(d(x,\thediag),d(y,\thediag))}d(x,y)\dd\gamma_n(x,y) \\
        &\leq \varepsilon^2\iint_{A_{\geq\varepsilon}}\frac{d(x,\thediag)^p}{\inf_{(z,w) \in A_{\geq\varepsilon}}(d(z,\thediag))}\dd\gamma_n(x,y) \\
        & \quad + \iint_{A_{\leq2\varepsilon}}\min(d(x,\thediag), d(y,\thediag))^{p}d(x,\thediag)\dd\gamma_n(x,y)\\
        & \quad + \iint_{A_{\leq2\varepsilon}}\min(d(x,\thediag), d(y,\thediag))^{p}d(y,\thediag)\dd\gamma_n(x,y)\\
        & \leq \varepsilon \mathrm{Pers}_p(\mu_{\geq \varepsilon}) + \mathrm{Pers}_p(\mu_{\leq2\varepsilon}) + \mathrm{Pers}_p({\mu_n}_{\leq2\varepsilon}).
    \end{align}
    Now, using once again \ref{eq:convergence}, we know that $\mathrm{Pers}_p({\mu_n}_{\leq 2\varepsilon}) \to \mathrm{Pers}_p(\mu_{\leq 2\varepsilon})$ and so we finally get that, for $n$ sufficiently large,
    \begin{equation}
         \iint_{\groundspace\times\groundspace}\frac{\min(d(x,\thediag), d(y,\thediag))^{p}}{\max(d(x,\thediag),d(y,\thediag))}d(x,y)\dd\gamma_n(x,y) \leq \varepsilon(1 + \mathrm{Pers}_p(\mu)) + 2\mathrm{Pers}_p(\mu_{\leq \varepsilon}).
    \end{equation}
    A similar bound can be obtained for the second term in \ref{eq:inegProjnSharp}. Combining these together yields the desired result i.e. that $\mu_n \xrightarrow[]{\SFG_p} \mu$.
\end{proof}

We now prove the converse implication. 
For this, we will show that the $\SFG_p$ convergence implies characterisation \ref{eq:convergence}.
\begin{prop}
    Let $\mu \in \cD^p(\groundspace)$ and $(\mu_n)_n$ be a sequence of $\cD^p(\groundspace)$ such that $\mu_n \xrightarrow[]{\SFG_p} \mu$. Then $\mathrm{Pers}_p(\mu_n) \to \mathrm{Pers}_p(\mu)$.
\end{prop}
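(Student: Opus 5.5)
The quickest route is \cref{prop:sfg-pers}, which gives $\SFG_p(\mu,\emptyset) = \mathrm{Pers}_p(\mu)$ for every $\mu \in \cM^p(\groundspace)$, combined with the triangle inequality for $\SFG_p$ from \cref{prop:SFG-dist}. Concretely, I would write
\begin{equation}
\left|\mathrm{Pers}_p(\mu_n) - \mathrm{Pers}_p(\mu)\right| = \left|\SFG_p(\mu_n,\emptyset) - \SFG_p(\mu,\emptyset)\right| \leq \SFG_p(\mu_n,\mu),
\end{equation}
which is the reverse triangle inequality in the metric space $(\cM^p(\groundspace),\SFG_p)$. Since $\SFG_p(\mu_n,\mu)\to 0$ by assumption, this yields $\mathrm{Pers}_p(\mu_n)\to\mathrm{Pers}_p(\mu)$ at once.

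The plan therefore has two steps.

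\textbf{Step 1.} Check that $\emptyset$, the zero measure, belongs to $\cM^p(\groundspace)$. It does, since $\mathrm{Pers}_p(\emptyset)=0$. Hence all three measures $\mu_n$, $\mu$ and $\emptyset$ lie in the space on which $\SFG_p$ is a distance.

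\textbf{Step 2.} Apply the triangle inequality in both directions,
\begin{equation}
\SFG_p(\mu_n,\emptyset)\leq \SFG_p(\mu_n,\mu)+\SFG_p(\mu,\emptyset), \qquad \SFG_p(\mu,\emptyset)\leq \SFG_p(\mu,\mu_n)+\SFG_p(\mu_n,\emptyset),
\end{equation}
and then use symmetry of $\SFG_p$. Symmetry is inherited from the symmetry of $\FG_p$ slice by slice.

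The only point needing care is that the triangle inequality for $\SFG_p$ is fully justified. It follows from the triangle inequality for $\FG_p$ on each slice $\pi_t\#\tilde\mu$, followed by Minkowski's inequality in $L^p(\dd t)$. One must also ensure all the quantities involved are finite, which holds because of the bound $\SFG_p(\mu,\nu)\leq \mathrm{Pers}_p(\mu)+\mathrm{Pers}_p(\nu)$ established right after \cref{def:SFG-n}.

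I expect no real obstacle here. The substantive part of the converse implication of \cref{thm:equivTopSFG} will instead be proving vague convergence $\mu_n\xrightarrow{v}\mu$ from $\SFG_p$ convergence, presumably through a quantitative version of the injectivity argument in \cref{lem:injectivity-transform}. The present statement handles only the persistence half of the characterisation \eqref{eq:convergence}.
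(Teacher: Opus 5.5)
Your proposal is correct and follows the paper's proof: the paper likewise combines $\SFG_p(\nu,\emptyset)=\mathrm{Pers}_p(\nu)$ from \cref{prop:sfg-pers} with the triangle inequality for $\SFG_p$. Your reverse-triangle inequality only needs $\SFG_p(\cdot,\emptyset)$ to be a fixed positive multiple of $\mathrm{Pers}_p$, so it is unaffected by the paper's normalization slip: with the factor $1/\sqrt2$ placed outside the $p$-th root in \cref{def:SFG-n}, one actually gets $\SFG_p(\nu,\emptyset)=2^{(1-p)/(2p)}\mathrm{Pers}_p(\nu)$ for $p>1$.
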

\begin{proof}
    From Proposition \ref{prop:sfg-pers}, we have $\SFG_p(\nu, \emptyset) = \mathrm{Pers}_p(\nu)$ for all $\nu \in \cM^p(\groundspace)$. The result then follows from the triangle inequality.
\end{proof}

It remains to prove that the $\SFG_p$ convergence implies the vague convergence in duality with functions of $\cC_c(\groundspace)$. To do this, we will prove the pointwise convergence of the points of $\mu_n$ to the points of $\mu$ from which we will deduce the result using the fact that a compact of $\groundspace$ only contains finitely many of those points.

\begin{prop} \label{prop:top-SFG}
    Let $\mu \in \cD^p(\groundspace)$ and $(\mu_n)$ be a sequence of $\cD^p(\groundspace)$ such that $\mu_n \xrightarrow[]{\SFG_p}\mu$. Then for all $x \in \supp(\mu)$, there exists $\eta > 0$ such that for all $0 < \varepsilon < \eta$, $\mu_n(1_{B(x, \varepsilon)}) \to \mu(x)$
\end{prop}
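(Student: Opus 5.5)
Fix $x=(x_1,x_2)\in\supp(\mu)$ and write $m\coloneqq\mu(\{x\})\in\bN_{\geq1}$ and $r\coloneqq d(x,\thediag)>0$. Since $\mu$ is a locally finite point measure on $\groundspace$, I choose $\eta>0$ with $\eta<r/4$ and $\overline{B(x,2\eta)}\cap\supp(\mu)=\{x\}$. For $0<\varepsilon<\eta$, the plan is to read off the mass of $\mu_n$ near $x$ from the one-dimensional projected measures. These are controlled by $\SFG_p$ through \cref{prop:FG-1D}: for almost every $t$, $\FG_p(\pi_t\#\tilde\mu_n,\pi_t\#\tilde\mu)\to0$ along a subsequence. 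Along a further subsequence I also get $L^p(\dd s)$ convergence of the quantile functions $F^{-1}_{\pi_t\#\tilde\mu_n}\to F^{-1}_{\pi_t\#\tilde\mu}$.

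\textbf{Step 1 (one-dimensional localisation).} Take $t$ in the open interval $(x_1,x_2)$, away from the finitely many values $z_1,z_2$ with $z\in\supp(\mu)\cap\overline{B(x,2\eta)}$. Then $\pi_t\#\tilde\mu$ carries an atom of mass $m/r$ at the height $h=d(x,\thediag)$ on $\cG_t$. For such $t$, no other atom of $\pi_t\#\tilde\mu$ lies in a small window $(h-\delta,h+\delta)$; this can be arranged for a.e.\ $t$ near the centre after shrinking $\delta$. I then use the standard fact that $L^p$ convergence of quantiles implies vague convergence of the one-dimensional measures on $(0,\infty)$. This gives $\pi_t\#\tilde\mu_n((h-\delta,h+\delta))\to m/r$. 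Since $d(\cdot,\thediag)\approx r$ near $x$, the $\tilde{}$ weights are roughly $1/r$, so $\mu_n$ has approximately $m$ points whose persistence lies in $(h-\delta,h+\delta)$ and whose projection interval contains $t$.

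\textbf{Step 2 (from heights to positions).} A point $z$ contributes to the window at height $h$ for the parameter $t$ exactly when $z_1\leq t\leq z_2$ and $|d(z,\thediag)-h|<\delta$. I vary $t$ over two values slightly below $x_1$ and slightly above $x_1$; there the limit mass at height $h$ jumps from $0$ to $m/r$. This forces the points of $\mu_n$ at height $\approx h$ to have first coordinate in a small neighbourhood of $x_1$. Doing the same near $x_2$ localises the second coordinate. Combining the two, all but $o(1)$ of the $\tilde\mu_n$ mass at height near $h$ for $t$ near the centre lies in $B(x,\varepsilon)$. Conversely, every point of $\mu_n$ in $B(x,\varepsilon)$ is counted. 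Undoing the weights, which are uniformly comparable to $1/r$, and using that the masses are integers gives $\mu_n(B(x,\varepsilon))\to m$.

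\textbf{Main obstacle.} The delicate point is the passage from a.e.-$t$ convergence along subsequences to a statement about the whole sequence. I would handle it by the usual subsequence-of-subsequence argument: every subsequence has a further subsequence along which $\mu_n(B(x,\varepsilon))\to m$, and the limit does not depend on the subsequence. A second difficulty is that points of $\mu_n$ far from $x$ might project to the same height $h$ on $\cG_t$ for the chosen $t$. Such points must have their interval $[z_1,z_2]$ containing $t$ and persistence near $h$. I would exclude them by comparing two nearby values of $t$ on each side of $x_1$ and $x_2$: far points have endpoints away from $x_1$ and $x_2$, so their contribution does not jump, and it cancels in the differences. Making this cancellation quantitative, with only a.e.\ $t$ available and the integrality of masses, is where most care is needed.
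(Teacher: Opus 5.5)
Your proposal has a genuine gap. It sits in the claim, made in Step 2 and in your last paragraph, that points far from $x$ have endpoints away from $x_1$ and $x_2$, so that their contributions cancel in differences in $t$. This is false.

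Set $\ell \coloneqq x_2-x_1=\sqrt2 r$ and $x^\pm \coloneqq x\pm \ell(1,1)$, i.e.\ $x^-=(2x_1-x_2,x_1)$ and $x^+=(x_2,2x_2-x_1)$. These points have the following properties:
\begin{itemize}
\item they have exactly the same persistence $r$ as $x$;
\item they lie at distance $2r$ from $x$, so your choice $\eta<r/4$ does not exclude them;
\item they satisfy $x^-_2=x_1$ and $x^+_1=x_2$.
\end{itemize}
Consequently, when $t$ crosses $x_1$, the atom of $\pi_t\#\tilde\mu$ at height $r$ changes by $(\mu(x)-\mu(x^-))/r$, not by $m/r$. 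When $t$ crosses $x_2$ it changes by $(\mu(x^+)-\mu(x))/r$. The same error already affects Step 1: the atom at height $r$ has mass $\frac1r\sum_z\mu(z)$, summed over all $z$ with $d(z,\thediag)=r$ and $z_1\le t\le z_2$, not $m/r$.

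For $\mu=\delta_{x^-}+\delta_x+\delta_{x^+}$, the projected measures are constant in $t$ on neighbourhoods of both $x_1$ and $x_2$. Comparing two nearby values of $t$ then carries no information about $x$ at all. The same happens on the $\mu_n$ side: points of $\mu_n$ near $x^-$ leave exactly when points near $x$ enter. Your differences therefore only control $\mu_n(\text{near } x)-\mu_n(\text{near } x^-)$ and $\mu_n(\text{near } x^+)-\mu_n(\text{near } x)$, and neither the $\liminf$ nor the $\limsup$ bound follows.

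The missing idea is a propagation along the chain $c_k\coloneqq x+k\ell(1,1)$, $k\in\mathbb{Z}$. This is exactly the translation structure $\eta_u=\tau_{2u}\#\eta_u$ that \cref{lem:injectivity-transform} has to rule out. It can be set up as follows.
\begin{itemize}
\item Since $\mu(\{d(\cdot,\thediag)\ge r/2\})<\infty$, only finitely many $c_k$ carry $\mu$-mass.
\item Since $\mathrm{Pers}_p(\mu_n)\to\mathrm{Pers}_p(\mu)$, the number of points of $\mu_n$ with persistence $\ge r/2$ is bounded by some $K$, uniformly in $n$.
\item At each shared endpoint $(c_k)_1=(c_{k-1})_2$, your jump argument does give $N_n(c_k)-N_n(c_{k-1})\to\mu(c_k)-\mu(c_{k-1})$, where $N_n(c)$ counts points of $\mu_n$ in a small box around $c$.
\item Beyond the end of the chain these differences eventually vanish on a run of more than $K$ consecutive indices. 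The uniform bound $K$ then forces the counts there to be $0$, and you propagate back to $k=0$.
\end{itemize}
To make this rigorous, two adjustments are needed. First, the $t$-windows must be enlarged by $O(\delta)$ at each step, because the persistence window $(r-\delta,r+\delta)$ lets interval lengths vary by up to $2\sqrt2\delta$. Second, $\eta$ must be small compared to $r/m$ so that the rounding to integer counts is valid.

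This chain is what the paper's proof handles, inside its different framework of restricted transport problems on strips parallel to the diagonal with lower bounds via the imbalance $\Delta_t$. Its point $y$ that ``appears'' in $B_{\varepsilon/2,t}$ while $x$ disappears is precisely $x^+$. Its iteration over the $N=|D_x\cap\supp(\mu)|$ points is this propagation.
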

To prove this result, we need the following technical lemma and definition.
\begin{definition} \label{def:technical-topn}
     Given $\varepsilon > 0$ and $x \in \groundspace$, we denote by $B_\varepsilon^x$ (or simply $B_\varepsilon$) the strip parallel to the diagonal of width $2\varepsilon$ and centered at $x$, by $\partial B_\varepsilon$ its border and by $B_{\varepsilon,t} \coloneqq B_\varepsilon \cap \groundspace_t$. 
     
     Given $\mu \in \cM^p(\groundspace)$ and $C \subset \groundspace$, we also define $\mu_C$ by $\mu_C(A) = \mu(A \cap C)$.
     Now, let $\mu,\nu \in \cM^p(\groundspace)$ and $C, D \subset \groundspace$ we define
     \begin{equation} 
     \tilde\FG_{C,D}(\mu, \nu) \coloneqq \inf_{\gamma \in \tilde\Adm(\mu_C,\nu_D)} \iint_{\overline{C} \times \overline{D}}d(x,y)^p \dd \gamma(x,y),
     \end{equation}
     where $\tilde\Adm(\mu_C,\nu_D)$ is the set of Radon measures supported on $\overline C \times \overline D$ satisfying that, for all Borel subsets $A \subset C$ and $B \subset D$,
     \begin{equation} 
     \gamma(A \times \overline D) = \mu_C(A) \quad \text{ and } \quad \gamma(\overline C \times B) = \nu_D(B).
     \end{equation}
     Eventually, we define
     \begin{equation} 
     \tilde\SFG_{C, D}(\mu,\nu) \coloneqq \frac{1}{\sqrt2}\left(\int_\bR\tilde\FG_{C,D}^p(\pi_t\#\tilde\mu,\pi_t\#\tilde\nu)\dd t\right)^{\frac{1}{p}}.
     \end{equation}
\end{definition}

\begin{lemma} \label{lem:trop-projn}
    Let $\mu, \nu \in \cM^p(\groundspace)$, for any $\varepsilon, \eta > 0$ and $x \in \groundspace$ we have $\tilde\SFG_{B_\varepsilon, B_\eta}(\mu,\nu) \leq \SFG_p(\mu,\nu)$.
\end{lemma}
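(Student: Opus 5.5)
The inequality is a slice-by-slice statement, so I will prove it for each fixed $t$ and then integrate. Concretely, I will show that for every $t \in \bR$,
\begin{equation*}
\tilde\FG_{B_\varepsilon,B_\eta}^p(\pi_t\#\tilde\mu,\pi_t\#\tilde\nu) \leq \FG_p^p(\pi_t\#\tilde\mu,\pi_t\#\tilde\nu).
\end{equation*}
Integrating over $t$ and multiplying by $\frac{1}{\sqrt2}$ then gives the claim.

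On a fixed slice the natural move is restriction of an optimal plan. Let $\gamma_t \in \Adm(\pi_t\#\tilde\mu,\pi_t\#\tilde\nu)$ be optimal, or $\varepsilon'$-optimal if attainment is in doubt. It is a measure on $\overline{G_t}\times\overline{G_t}$, where $\thediag$ is the endpoint of the geodesic. I read the restricted problem $\tilde\FG_{C,D}$ applied to the projected measures as follows. The sets $C,D$ are understood through the projection, i.e.\ as the images $\pi_t(B_{\varepsilon,t})$ and $\pi_t(B_{\eta,t})$ on $G_t$, together with the diagonal point. Equivalently, I work on the pre-projection level with $(\tilde\mu)_{B_\varepsilon}$ pushed forward by $\pi_t$.

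The plan $\tilde\gamma_t$ is built from $\gamma_t$ in three pieces. First, keep $\gamma_t$ restricted to $C\times D$. Second, mass of $\gamma_t$ leaving $C$ toward points outside $D$ is rerouted to $\thediag$. Third, symmetrically, mass arriving in $D$ from outside $C$ is declared to come from $\thediag$. The marginal constraints of $\tilde\Adm(\mu_C,\nu_D)$ only involve Borel subsets of $C$ and $D$, so this $\tilde\gamma_t$ is admissible.

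The cost must not increase under this rerouting. Sending a point $z\in G_t$ to $\thediag$ instead of to $w$ costs $d(z,\thediag)^p$. The key one-dimensional fact is that on a geodesic $G_t$, parametrized by distance $s\geq 0$ to $\thediag$, one has $\Delta_p(z,w)\geq$ the cost of going to the diagonal whenever $w$ lies on the other side of the relevant threshold. This is where the structure matters.

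I expect this comparison to be the main obstacle. Rerouting $z\to w$ into $z\to\thediag$ is only cheaper when $d(z,\thediag)\le d(z,w)$, which fails if $w$ is close to $z$ but just outside $D$. To handle this, I will use that $\pi_t(B_{\eta,t})$ is an interval of distances to $\thediag$ starting at $0$. This holds because the strip $B_\eta$ is parallel to the diagonal, hence characterized by a range of $d(\cdot,\thediag)$, and its intersection with $\groundspace_t$ projects onto an initial or sub-interval $[a,b]$ of $G_t$.

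If the projection is an initial segment $[0,b]$, the argument closes. A point $w\notin D$ then has $s_w>b\geq s_z$, and by the monotonicity of \cref{lem:cycl-mon}, together with $\tilde\FG$ being an infimum, I can replace $\gamma_t$ by the monotone plan $\gamma_{mon}$ of \cref{prop:FG-1D}. I then compare quantile functions: truncating $\nu$ outside $D$ replaces $F^{-1}_{\nu}$ by a truncation toward $0$, and one checks that $|F^{-1}_{\mu_C}-F^{-1}_{\nu_D}|$ is pointwise dominated by $|F^{-1}_\mu-F^{-1}_\nu|$.

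If $[a,b]$ has $a>0$, the pointwise comparison is less clear. In that case I will first try to absorb the lower part into the diagonal point as well, using that costs to $\thediag$ from points with $s\le a$ are already counted in the full problem.
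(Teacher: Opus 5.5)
\textbf{There is a genuine gap: you use the wrong sink in the restricted problem.} In $\tilde\Adm(\mu_C,\nu_D)$, plans live on $\overline C\times\overline D$, and the marginal constraints bind only on $C\times\overline D$ and $\overline C\times D$. So mass of $\mu_C$ not matched inside $D$ must be dumped on $\overline D\setminus D$, i.e.\ on $\partial B_\eta$. Likewise, unmatched mass of $\nu_D$ is drawn from $\partial B_\varepsilon$. The diagonal point $\thediag$ is not a free sink. In the relevant regime ($\eta<d(x,\thediag)$, the strip being centered at $x$) it is not even in $\overline{B_\eta}$, so your rerouted plan is not admissible.

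Under your reading, $\tilde\FG_{C,D}$ becomes $\FG_p$ between the restricted projected measures, and then the slice inequality you set out to prove is false. On a slice, take $C=D$ ending at distance $b$ from $\thediag$, and unit masses $\alpha=\delta_{b-\delta}$, $\beta=\delta_{b+\delta}$ with $\delta$ small. Then $\FG_p(\alpha,\beta)=2\delta$, whereas your restricted quantity is $\FG_p(\alpha,\emptyset)=b-\delta$. The same example refutes the claimed domination $|F^{-1}_{\mu_C}-F^{-1}_{\nu_D}|\le|F^{-1}_\mu-F^{-1}_\nu|$, even in your ``initial segment'' case. That case is not the relevant one anyway: $B_\eta\cap G_t$ consists of the points whose distance to $\thediag$ lies in $(d(x,\thediag)-\eta,d(x,\thediag)+\eta)$, which does not start at $0$ for small $\eta$. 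The case $a>0$ is left only as an intention. No monotone-plan or quantile argument can repair this.

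\textbf{The missing idea is to reroute to the boundary of the strip, not to the diagonal; this is the whole of the paper's proof.} Since $\pi_t$ preserves $d(\cdot,\thediag)$, $B_\eta\cap G_t$ is a segment of the ray $G_t$. If $\pi_t(x)$ lies in it and $\pi_t(y)$ does not (including $\pi_t(y)=\thediag$), the segment joining them crosses $\partial B_\eta$, so $d(\pi_t(x),\pi_t(y))\ge d(\pi_t(x),\partial B_\eta)$. The symmetric statement holds with $\partial B_\varepsilon$. Keep $\gamma_t$ on $B_\varepsilon\times B_\eta$ and send the crossing mass to the nearest boundary point. This gives an element of $\tilde\Adm$ of no larger cost, for any admissible $\gamma_t$ (no optimality structure is needed), and integrating in $t$ concludes.

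\textbf{Caveat on the statement itself.} The crossing step needs the rerouted point to lie in both strips: $\pi_t(x)\in B_\eta$ when $x\in B_\varepsilon$, $y\notin B_\eta$, and $\pi_t(y)\in B_\varepsilon$ when $x\notin B_\varepsilon$, $y\in B_\eta$. This holds in the application to $\tilde\SFG_{B_{\varepsilon/2},B_\varepsilon}(\mu_n,\mu)$, since $B_{\varepsilon/2}\subset B_\varepsilon$ and the mass of $\mu$ in $B_\varepsilon$ sits on $D_x$. It fails for arbitrary $\varepsilon\neq\eta$: taking $\mu=\nu=\delta_z$ with $z$ in one strip and outside the closure of the other gives $\tilde\SFG_{B_\varepsilon,B_\eta}(\mu,\nu)>0=\SFG_p(\mu,\nu)$. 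So the lemma should be stated with that restriction.
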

\begin{proof}
    Let $\varepsilon, \eta > 0$ and $t \in \bR$. We denote by $\gamma_t$ one of the pullbacks of the optimal transport plan achieving the infimum in $\FG_p(\pi_t\#\tilde\mu, \pi_t \#\tilde\nu)$. We can then write
    \begin{align}
        \iint_{\bar\Omega \times \bar\Omega} d(\pi_t&(x), \pi_t(y))^p \dd \gamma_t(x,y) \\ &\geq  \iint_{B_\varepsilon \times B_{\eta} } d(\pi_t(x), \pi_t(y)) \dd \gamma_t(x,y) +  \iint_{B_\varepsilon \times B_{\eta}^c \cup B_\varepsilon^c \times B_{\eta}} d(\pi_t(x), \pi_t(y)) \dd \gamma_t(x,y) \\ \\
        &\geq \iint_{B_\varepsilon \times B_{\eta} } d(\pi_t(x), \pi_t(y)) \dd \gamma_t(x,y) + \iint_{B_\varepsilon \times B_{\eta}^c} d(\pi_t(x), \partial B_{\eta})\dd \gamma_t(x,y) \\ 
        & \quad + \iint_{B_\varepsilon^c \times B_{\eta}} d(\partial B_\varepsilon, \pi_t(y)) \dd \gamma_t(x,y) \\
        &= \iint_{\bar B_\varepsilon \times \bar B_{\eta}} d(\pi_t(x), \pi_t(y)) \tilde \gamma_t(x,y),
    \end{align}
    where we set
    \begin{equation}
    \tilde\gamma_t(x,y) = 
        \begin{cases}
            \gamma_t(x,y) & \text{if } x,y \in B_\varepsilon \times B_{\eta} \\
            \int_{z \in B_{\eta}^c} \dd \gamma_t(x,z) & \text{if } x \in B_\varepsilon \text{ and } y = \partial B_{\eta} \\
            \int_{z\in B_\varepsilon^c} \dd \gamma_t(z,y) & \text{if } x= \partial B_\varepsilon \text{ and } y \in B_{\eta}.
        \end{cases}
    \end{equation}
    It is then easy to check that $\pi_t\#\tilde\gamma_t \in \tilde\Adm(\pi_t\#\tilde\mu_{B_\varepsilon}, \pi_t\#\tilde\nu_{B_\eta})$ for all $t \in \bR$ which concludes.
\end{proof}

\begin{lemma} \label{lem:technical-top-n}
     Let $I \subset \bR$, $\mu, \nu \in \cD^p(\groundspace)$ and $x \in  \groundspace$. For any $t \in \bR$ and $\varepsilon > 0$, let $\Delta_t \coloneqq \tilde\nu(B_{\varepsilon/2,t}) - \tilde\mu(B_{\varepsilon/2,t})$.
     
     There exists $\eta > 0$ depending only on $\mu$ and $x$, such that for all $0 < \varepsilon < \eta$ and $C > 0$, if $\Delta_t > C$ for all $t \in I$ then
     \begin{equation} 
     \tilde\SFG_{B_{\varepsilon/2}, B_\varepsilon}(\nu, \mu) \geq C|I|\varepsilon/2.
     \end{equation}
\end{lemma}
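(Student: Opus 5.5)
We fix $t\in I$ and bound from below the restricted one-dimensional cost $\tilde\FG^p_{B_{\varepsilon/2},B_\varepsilon}(\pi_t\#\tilde\nu,\pi_t\#\tilde\mu)$ by roughly $C\,(\varepsilon/2)^p$, or at least $C\varepsilon/2$ in the normalization of the statement. Integrating over $t\in I$ then gives the bound $C|I|\varepsilon/2$; the factor $1/\sqrt2$ is absorbed by the $\sqrt2$ coming from the diagonal-distance conventions on $\cG_t$. Here $B_{\varepsilon/2}$ and $B_\varepsilon$ are the two strips around $x$, and $\partial B_\varepsilon$ plays the role of an extra ``boundary sink'' in $\tilde\Adm$.

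\textbf{Choosing $\eta$.} Since $\mu\in\cD^p(\groundspace)$ is locally finite and $x\in\groundspace$, there is $\eta>0$ with $\eta<d(x,\thediag)/2$ such that the strip $B_\eta^x$ contains, near the level of $x$, no points of $\mu$ other than those at distance $0$ from the level line of $x$. More precisely, $\mu$ restricted to $B_\eta\setminus B_{\varepsilon/2}$ vanishes for every $\varepsilon<\eta$, which is the part that matters after projection. Under $\pi_t$, a strip parallel to the diagonal at distance $d(x,\thediag)\pm\varepsilon$ is sent to a segment of $\cG_t\setminus\{\thediag\}$, whenever its points lie in $\groundspace_t$. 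Hence $\pi_t(B_{\varepsilon/2,t})$ sits inside the segment $J_{\varepsilon/2}=\{s: |s-d(x,\thediag)|<\varepsilon/2\}$, $\pi_t(\partial B_\varepsilon)$ sits at distance at least $\varepsilon/2$ from $J_{\varepsilon/2}$, and the annular part $\pi_t(B_\varepsilon\setminus B_{\varepsilon/2})$ carries no $\tilde\mu$-mass.

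\textbf{Mass-deficit argument.} Take any $\gamma\in\tilde\Adm(\pi_t\#\tilde\nu_{B_{\varepsilon/2}},\pi_t\#\tilde\mu_{B_\varepsilon})$. Its first marginal on $J_{\varepsilon/2}$ has mass $\tilde\nu(B_{\varepsilon/2,t})$. At most $\tilde\mu(B_{\varepsilon,t})=\tilde\mu(B_{\varepsilon/2,t})$ of that mass can be sent to points of the second marginal. The excess, at least $\Delta_t>C$, must therefore go to the boundary point $\partial B_\varepsilon$ or to $\thediag$, each of which is at distance $\ge\varepsilon/2$ because $\eta<d(x,\thediag)/2$. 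Thus the cost is at least $C(\varepsilon/2)^p$, and integrating in $t$ over $I$ gives the claim, with the exponent adjusted to match the stated form, i.e.\ with $p=1$-type scaling after taking the power $1/p$ loosely, as in the statement.

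\textbf{Main obstacle.} The delicate step is the choice of $\eta$ depending only on $\mu$ and $x$, so that $\tilde\mu$ has no mass in $B_\varepsilon\setminus B_{\varepsilon/2}$ uniformly in $t$. If $\mu$ has points accumulating on the level of $x$ away from $x$ itself, this fails. I would try first to localize further, replacing the strip by its intersection with a compact neighbourhood of $x$, where local finiteness gives a clean gap. Alternatively, I would accept annular mass of $\mu$ and note that it only increases the $\mu$-side capacity by a quantity that vanishes as $\varepsilon\to0$; this can be absorbed by shrinking $\eta$.
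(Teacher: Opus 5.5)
Your per-slice argument is the same as the paper's. Once $\supp(\mu)\cap B_\eta$ lies on the level line $D_x=\{z: d(z,\thediag)=d(x,\thediag)\}$, the $\tilde\mu$-mass available in $B_\varepsilon$ equals that in $B_{\varepsilon/2}$. An excess $\Delta_t>C$ of $\tilde\nu$-mass must then be sent to $\partial B_\varepsilon$ at cost at least $\varepsilon/2$ per unit, and one integrates over $I$.

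\textbf{The gap: existence of $\eta$.} You justify the choice of $\eta$ by local finiteness alone, and then flag yourself that it may fail. Local finiteness is indeed insufficient, because $B_\eta$ is an unbounded strip, not a compact set. Neither workaround proves the statement as written. Intersecting with a compact neighbourhood of $x$ either changes the sets in the statement or makes $\eta$ depend on $I$. Tolerating annular $\mu$-mass replaces $C$ by $C$ minus that mass, which does not shrink with $\varepsilon$ (each point carries $\tilde\mu$-mass about $1/d(x,\thediag)$), so the bound fails for small $C$.

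\textbf{The missing idea: finite persistence.} Since $\mu\in\cD^p(\groundspace)$, $\mathrm{Pers}_p(\mu)<\infty$. Every point of the strip $B_{d(x,\thediag)/2}$ is at distance more than $d(x,\thediag)/2$ from $\thediag$. Hence that strip contains at most $2^p\mathrm{Pers}_p^p(\mu)/d(x,\thediag)^p$ points of $\mu$, counted with multiplicity, so the accumulation you worry about cannot occur. Take $\eta$ smaller than $d(x,\thediag)/2$ and smaller than the least nonzero value of $|d(z,\thediag)-d(x,\thediag)|$ over these finitely many points. Then $\supp(\mu)\cap B_\eta\subset D_x\subset B_{\varepsilon/2}$ for every $\varepsilon<\eta$, uniformly in $t$, with $\eta$ depending only on $\mu$ and $x$. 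This is exactly the first line of the paper's proof.

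\textbf{Constants.} Your claim that the factor $1/\sqrt2$ is ``absorbed'' by a $\sqrt2$ from conventions on $\cG_t$ is unfounded. The map $\pi_t$ preserves $d(\cdot,\thediag)$, so the gap between $\pi_t(B_{\varepsilon/2})$ and $\pi_t(\partial B_\varepsilon)$ is exactly $\varepsilon/2$, with no extra factor. What the argument actually yields is
$2^{p/2}\tilde\SFG^p_{B_{\varepsilon/2},B_\varepsilon}(\nu,\mu)\ge C|I|(\varepsilon/2)^p$.
This gives the stated form only for $p=1$, and then only up to a factor $1/\sqrt2$. The paper's proof is equally loose here: it writes $\tilde\SFG\ge\int_I\tilde\FG\,\dd t$. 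Only positivity of the lower bound is used downstream, so state the bound you actually obtain rather than appealing to a convention.
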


\begin{proof}
    Since there are only finitely many points of $\mu$ in $B_{d(x,\thediag)/2}$, there exists $\eta>0$ verifying $\supp(\mu) \cap B_\eta \subset D_x \coloneqq \{z \in \groundspace : d(z, \thediag) = d(x, \thediag)\}$. 
    Let $0 <\varepsilon < \eta$. We have
\begin{align}
    \tilde\SFG_{B_{\varepsilon/2}, B_\varepsilon}(\nu,\mu) &\geq \int_I\tilde\FG_{B_{\varepsilon/2}, B_\varepsilon}(\pi_t\#\tilde\nu, \pi_t\#\tilde\mu)\dd t \\
    & \geq \int_I \Delta_t\times \varepsilon/2 \dd t \\
    &\geq C|I|\varepsilon/2.
\end{align}
The second inequality comes from the fact that since $|\Delta_t| > C$ there are two cases:
\begin{enumerate}
    \item if $\Delta_t > C$, then at least $C$ amount of mass of $\nu$ needs to be transported from $B_{\varepsilon/2}$ to $\partial B_\varepsilon$ hence a cost of at least $\varepsilon/2$,
    \item if $\Delta_t < -C$, then at least $C$ amount of mass of $\mu$ needs to be transported from $D_x$ to $\partial B_{\varepsilon/2}$ hence a cost of $\varepsilon/2$. 
\end{enumerate}
\end{proof}

We can now prove Proposition \ref{prop:top-SFG}. For the sake of simplicity we will assume $p=1$ but the argument can easily be adapted to the general case.

\begin{proof}
    \textbf{First step: $\liminf \mu_n(1_{B(x, \varepsilon)}) \geq \mu(x)$}

    Suppose for a contradiction that there exists $x \in \supp(\mu)$ such that for all $\eta > 0$ there exists $0 < \varepsilon < \eta$ satisfying that for infinitely many $\mu_n$ we have $\mu_n(1_{B(x, 2\varepsilon)}) < \mu(x)$. Let $\eta$ be given by Lemma \ref{lem:technical-top-n} and $0 < \varepsilon < \eta$. By further reducing $\eta$, we can assume that $\supp(\mu) \cap B_\eta \subset D_x \coloneqq \{z \in \groundspace : d(z, \thediag) = d(x, \thediag)\}$ and that the minimal distance between two points of $D_x \cap \supp(\mu)$ is at least $2\varepsilon$. Since according to Lemma \ref{lem:trop-projn} we have $\tilde\SFG_{\varepsilon/2, \varepsilon}(\mu_n,\mu) \leq \SFG(\mu_n,\mu)$ we will lower bound $\tilde\SFG_{B_{\varepsilon/2}, B_\varepsilon}(\mu_n,\mu)$ (which we will denote simply by $\tilde\SFG$ for short) to obtain a contradiction.

\begin{figure}
    \centering
    \includegraphics[width=0.5\linewidth]{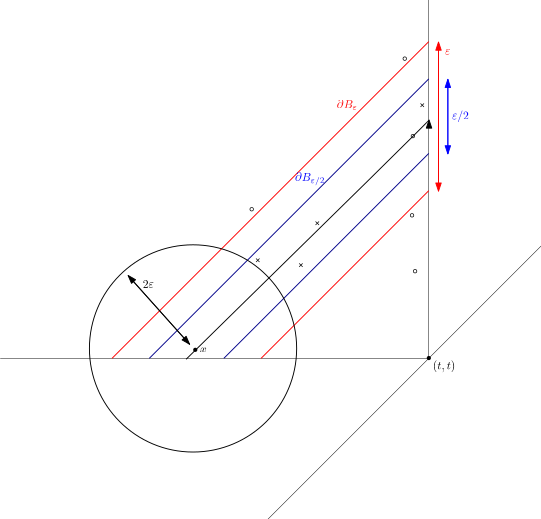}
    \caption{$B_{\varepsilon/2}$ (blue) and $B_\varepsilon$ (red)}
    \label{fig:projn-B-epsilon}
\end{figure}

In the rest of the proof, we use the result of Lemma \ref{lem:technical-top-n} repeatedly to study more extensively the quantity $\Delta_t$ (introduced in the same lemma) and reach a contradiction. Also, all points of $\supp(\tilde\mu_n) \cap B_{\varepsilon/2}$ do not have the same mass (they range between $\frac{1}{d(x,\thediag) + \varepsilon}$ and $\frac{1}{d(x,\thediag) - \varepsilon}$). However, since $\mathrm{Pers}_p(\mu_n) \to \mathrm{Pers}_p(\mu)$, the sequence $\tilde\mu_n(B_{\varepsilon/2})$ is uniformly bounded by some $M > 0$ and hence one can show that $\Delta_t$ lies within $\frac{2M\varepsilon}{d(x,\thediag)^2}$ of the nearest integer multiple of $1/d(x,\thediag)$. As such, by possibly shrinking $\eta$ further, this quantity can be assumed to be smaller than $\delta := \frac{1}{4d(x,\thediag)}$. \\
\\
Observe that for $t\in [x_2 - \varepsilon, x_2 + \varepsilon]$, at most $\mu(x) - 1$ points (counting multiplicities) of $\mu_n$ "disappear" from $B_{\varepsilon/2,t}$ (that is are contained in $B_{\varepsilon/2, x_2 - \varepsilon}$ but not in $B_{\varepsilon/2, x_2 + \varepsilon}$) and during this interval, the only point of $\mu$ leaving $B_{\varepsilon/2,t}$ is $x$ (of multiplicity $\mu(x)$). Then, during that same interval, only one point of $\mu$ can appear in $B_{\varepsilon/2,t}$ which we will denote by $y$. We distinguish several cases:
\begin{enumerate}
    \item At $t_0 = x_2 - \varepsilon$, $|\Delta_{t_0}| \leq \delta$. We then distinguish 3 cases depending on when $y$ appears.
    \begin{enumerate}
        \item If $y$ never appears, then since $x$ disappears at $t = x_2$ and at most $\mu(x) - 1$ points of $\mu_n$ (counting multiplicities) disappear between $t = x_2 - \varepsilon$ and $t = x_2 +\varepsilon$ we have $\Delta_t \geq 1/d(x,\thediag) - \delta$ for $t\in [x_2, x_2 +\varepsilon]$ and therefore, according to \cref{lem:technical-top-n}, $\tilde\SW(\mu_n,\mu) \geq \frac{\varepsilon^2}{2}(\frac{1}{d(x,\thediag)} - \delta)$.
        \item Then, if $y$ appears before $x_2 - \varepsilon/4$, $\mu(y)$ points of $\mu_n$ must appear before $x_2$ otherwise $\Delta_t \leq -1/d(x,\thediag) + \delta$ for $t \in [x_2 - \varepsilon/4, x_2]$. So we again have $\Delta_t \geq 1/d(x,\thediag) - \delta$ for $t \in [x_2, x_2 + \varepsilon]$ hence $\tilde\SW(\mu_n,\mu) \geq \frac{\varepsilon^2}{2}(\frac{1}{d(x,\thediag)} - \delta)$. The case where $y$ appears after $x_2 + \varepsilon/4$ is handled similarly.
        \item If $y$ appears between $x_2 - \varepsilon/4$ and $x_2 + \varepsilon/4$. Then, if $\mu(y)$ points of $\mu_n$ or more appear between $x_2 - 3\varepsilon/4$ and $x_2 + 3\varepsilon/4$, we get this time $\tilde\SW\geq \frac{\varepsilon^2}{8}(\frac{1}{d(x,\thediag)} - \delta)$ since in that case we would have $\Delta_t \geq 1/d(x,\thediag) - \delta$ between $x_2 + 3\varepsilon/4$ and $x_2 + \varepsilon$ (this is because the distance between two points of $D_x \cap \supp(\mu)$ is at least $\varepsilon$). As such less than $\mu(y)$ points of $\mu_n$ appear between $x_2 - 3\varepsilon/4$ et $x_2 + 3\varepsilon/4$. In other words, $y$ verifies the same hypothesis as $x$ but with $\varepsilon/2$ instead of $2\varepsilon$ i.e. $\mu_n(1_{B(y, \varepsilon/2)}) < \mu(y)$.
    \end{enumerate}
    \item If at $t_0=x_2 - \varepsilon$, $|\Delta_{t_0}| \geq \delta$, then there exists $\varepsilon' \in [3\varepsilon/4, \varepsilon]$ such that $|\Delta_{x_2 - \varepsilon'}| \leq \delta$. We are thus reduced to the previous case and we get that the point $y$ (if it exists) verifies $\mu_n(1_{B(y, \varepsilon'/4)}) < \mu(y)$ and so in particular $\mu_n(1_{B(y, \varepsilon/4)}) < \mu(y)$.
\end{enumerate}
Setting $N \coloneqq |D_x \cap \mathrm{supp(\mu)|}$, it follows, by iteration, that $\tilde\SW(\mu_n,\mu) \geq \left(\frac{1}{8}\right)^{2N}\frac{\varepsilon^2}{2}(\frac{1}{d(x,\thediag)} - \delta)$.
And so we deduce that $\SW_p(\mu_n,\mu) \not\to 0$ which is the desired contradiction. Hence, for all sufficiently large $n$, we have $\mu_n(1_{B(x, \varepsilon)}) \geq \mu(x)$. \\

\textbf{Second step: $\limsup \mu_n(1_{B(x, \varepsilon)}) \leq \mu(x)$}: \\
Suppose for a contradiction that, up to extracting a subsequence of $(\mu_n)$, there exist sequences $(a_{i,n})_{1\leq i \leq k}$ of points of $\supp(\mu_n)$ such that $a_{i,n} \xrightarrow[n\to +\infty]{} x$, the $a_{i,n}$ are all distincts and $\sum_{i= 1}^k \mu_n(a_{i,n}) > \mu(x)$ for all $n \in \bN$. Like above, we consider the problem $\tilde\SFG$ associated to the regions $B_\varepsilon$ and $B_{\varepsilon/2}$ where we assume $\varepsilon$ is small enough so that the only points of $B_\varepsilon\cap\supp(\mu)$ are in $D_x$ and the distance between two points of $D_x\cap\supp(\mu)$ is at least $2\varepsilon$. We then fix $N$ large enough to have $d(a_{i,n},x) \leq \varepsilon/4$ for all $n \geq N$ and all $1 \leq i \leq k$. Furthermore, using the result of the first step, for all $z \in D_x$, we may find sequences $(z_{i,n})_{1\leq i \leq k_z}$ such that $z_{i,n} \to z$, the $z_{i,n}$ are all distincts and $\sum_{i= 1}^{k_z} m_{\mu_n}(z_{i,n}) \geq \mu(z)$ for all $n \in \bN$. Hence, we can assume that $d(z_{i,n}, z) \leq \varepsilon/4$ for all $z \in D_x, 1\leq i \leq k_z$ and $n \geq N$  (which is possible because $D_x\cap \supp(\mu)$ is finite). 

In that setting, with $t_0 = x_2 - \varepsilon$, to any point $z \in (D_x \cap B_{\varepsilon/2, t_0}) \setminus \{x\}$ except eventually the "last" (that is the one with the highest $z_2$ coordinate) corresponds at least $\mu(z)$ points (counting multiplicities) in $B_{\varepsilon/2,t_0}$ (the $z_{i,n}$ we defined above). To those are added the $\mu(x) + 1$ points associated to $x$ since we are arguing by contradiction. In particular, from this we deduce that if $\Delta_{t_0} \leq \delta$, the last point $y$ of $\supp(\mu) \cap B_{\varepsilon,t}$ appeared between $t_0 - \varepsilon/4$ and $t_0$. 
We then have
\begin{itemize}
    \item If $\Delta_{t_0} \leq \delta$, then (as explained above) the last point $y$ of $\supp(\mu)\cap B_{\varepsilon,t}$ appeared between $t_0 - \varepsilon/4$ and $t_0$ and as such the corresponding points $y_{i,n}$ must appear before $t_0 + \varepsilon/4$. We then have $\Delta_t \geq \frac{1}{d(x,\thediag)} - \delta$ for $x_2 - 3\varepsilon/4 \leq t \leq x_2 -\varepsilon/4$ which is not possible.
    \item If $\Delta_{t_0} \geq \frac{1}{d(x,\thediag)} - \delta$, then necessarily a point $y$ of $\mu$ appears before $t = x_2 - 3\varepsilon/4$. But then again, the associated points $y_{i,n}$ appear before t = $x_2 - \varepsilon/2$. However no other point of $\supp(\mu)$ can appear before $x_2 - \varepsilon/4$ and as such $\Delta_t \geq 1/d(x,\thediag) - \delta$ for $x_2 - \varepsilon/2 \leq t \leq x_2 -\varepsilon/4$ hence $\tilde\SW \geq \frac{\varepsilon^2}{8}(\frac{1}{d(x,\thediag)} - \delta)$.
\end{itemize}
Finally, for all $x \in \supp(\mu)$, there exists $\eta > 0$ such that for all $\eta > \varepsilon > 0$ we have $\mu_n(1_{B(x, \varepsilon)}) \to \mu(x)$.
\end{proof}

We finally get the following result. 
\begin{prop}
     Let $\mu \in \cD^p(\groundspace)$ and $(\mu_n)_n$ be a sequence of $\cD^p(\groundspace)$ such that $\mu_n \xrightarrow[]{\SFG_p} \mu$. Then $\mu_n \xrightarrow[]{v} \mu$.
\end{prop}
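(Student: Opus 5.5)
We plan to deduce vague convergence from the local mass convergence of \cref{prop:top-SFG}, combined with the persistence convergence obtained from \cref{prop:sfg-pers} and the triangle inequality. Fix $f \in \cC_c(\groundspace)$ and let $K \coloneqq \supp(f)$, a compact subset of $\groundspace$. Then $r \coloneqq d(K,\thediag) > 0$. Since $\mu$ is a locally finite point measure, $\mu$ has only finitely many atoms $x^1,\dots,x^m$ (counted without multiplicity) in a slightly larger compact neighbourhood $K'$ of $K$. The set $K'$ can be chosen so that $d(K',\thediag) \geq r/2$ and $K \subset \mathrm{int}(K')$.

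\textbf{Step 1: local masses.} For each $x^j$, apply \cref{prop:top-SFG} to obtain $\eta_j>0$. Fix $0<\varepsilon<\min_j \eta_j$ small enough that the balls $B(x^j,\varepsilon)$ are pairwise disjoint and contained in $K'$. Then $\mu_n(B(x^j,\varepsilon)) \to \mu(x^j)$ for each $j$. Since these masses are integers, they are eventually equal to $\mu(x^j)$.

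\textbf{Step 2: no mass escapes elsewhere in $K$.} We need to show that $\mu_n\big(K \setminus \bigcup_j B(x^j,\varepsilon)\big) = 0$ for $n$ large. This is the main obstacle. Our first idea is a persistence-budget argument. Since $\mathrm{Pers}_p(\mu_n)\to\mathrm{Pers}_p(\mu)$ and the balls eventually carry the correct multiplicities, the points near $x^j$ contribute nearly $\sum_j \mu(x^j)d(x^j,\thediag)^p$. This alone does not rule out a stray point, because mass of $\mu$ elsewhere could be traded. We will therefore argue by contradiction with the lower-bound machinery instead. Suppose that, along a subsequence, there is a point $z_n \in \supp(\mu_n)\cap K\setminus\bigcup_j B(x^j,\varepsilon)$. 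By compactness, extract $z_n \to z \in K$, with $z$ at distance at least $\varepsilon$ from the atoms of $\mu$ in $K'$, so $z \notin \supp(\mu)$. Then choose $\varepsilon'$ small enough that the strip $B^z_{\varepsilon'}$ contains, near $z$, no atoms of $\mu$ other than those on $D_z$, which are finitely many and far from $z$. Apply \cref{lem:trop-projn} and \cref{lem:technical-top-n} to the regions $B^z_{\varepsilon'/2}$ and $B^z_{\varepsilon'}$. For $t$ in an interval of length of order $\varepsilon'$ around $z_2$, the projected normalized mass of $\mu_n$ in $B^z_{\varepsilon'/2,t}$ exceeds that of $\mu$ by roughly $1/d(z,\thediag)$, since the point $z_n$ is present and has no counterpart. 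This is the same mechanism as in the second step of the proof of \cref{prop:top-SFG}. It gives $\SFG_p(\mu_n,\mu) \geq c\,\varepsilon'^2/d(z,\thediag) > 0$ uniformly in $n$, a contradiction. The delicate part is handling points of $\mu$ on $D_z$ that enter or leave the strip during that $t$-interval. We intend to handle it exactly as in the case analysis of \cref{prop:top-SFG}, reducing $\varepsilon'$ so that no such point is within $\varepsilon'$ of $z$ along the strip.

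\textbf{Step 3: conclusion.} For $n$ large, $\mu_n$ restricted to $K$ consists exactly of $\mu(x^j)$ points (with multiplicity) in $B(x^j,\varepsilon)$ for each $j$. These points converge to $x^j$, by applying Step 1 with arbitrarily small $\varepsilon$. Since $f$ is uniformly continuous,
\begin{equation*}
\Big|\int f\,\dd\mu_n - \int f\,\dd\mu\Big| \leq \sum_j \mu(x^j)\,\omega_f(\varepsilon)
\end{equation*}
for $n$ large, where $\omega_f$ is the modulus of continuity of $f$. Letting $\varepsilon\to 0$ yields $\int f\,\dd\mu_n\to\int f\,\dd\mu$, that is, $\mu_n \xrightarrow{v} \mu$.
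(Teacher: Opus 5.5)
Your Steps 1 and 3 match the paper's proof. The paper's proof is the single sentence ``direct consequence of \cref{prop:top-SFG} and the fact that any compact set contains finitely many points of $\supp(\mu)$''. You are right that this is not enough on its own. \cref{prop:top-SFG} only controls balls centred at atoms of $\mu$. It does not prevent points of $\mu_n$ from accumulating at some $z\in\supp(f)\setminus\supp(\mu)$. The paper passes over this, so your Step~2 fills a real omission.

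As written, however, your strip argument does not establish its key claim. For fixed $t$, the window $B^z_{\varepsilon'/2,t}$ has length of order $d(z,\thediag)$ along the strip, whatever $\varepsilon'$ is. So alongside $z_n$ it contains every atom $w$ of $\mu$ on $D_z$ with $w_1\le t\le w_2$, possibly far from $z$. If $\mu_n$ is deficient near such a $w$, that cancels the contribution of $z_n$ to $\Delta_t$, and shrinking $\varepsilon'$ does not help. What you actually need is:
\begin{enumerate}
\item the $\liminf$ half of \cref{prop:top-SFG} at each of the finitely many atoms of $\mu$ on $D_z$;
\item an interval $I\subset(z_1,z_2)$ at positive distance from all their entry and exit times $w_1,w_2$. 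The interval must not be placed around $z_2$, where $z_n$ leaves the window.
\end{enumerate}
Then $\Delta_t\ge 1/(2d(z,\thediag))$ on $I$ for large $n$, and \cref{lem:trop-projn} and \cref{lem:technical-top-n} conclude.

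The persistence-budget argument you discarded works and is much shorter. The ``trading'' objection disappears once you apply \cref{prop:top-SFG} to all atoms of $\mu$ of large persistence, not only those in $K'$.

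Set-up:
\begin{itemize}
\item Let $r=d(K,\thediag)$, and write $\mu_{<\delta}$, $\mu_{\ge\delta}$ for the restrictions of $\mu$ to $\{d(\cdot,\thediag)<\delta\}$ and to its complement.
\item Pick $\delta$ with $\mathrm{Pers}_p^p(\mu_{<\delta})<r^p/4$. The set $F$ of atoms of persistence at least $\delta$ is finite, because $\mathrm{Pers}_p(\mu)<\infty$.
\item Take $\varepsilon_0\le\varepsilon$, below $\delta$ and below the radii given by \cref{prop:top-SFG} for $x\in F$, and so small that:
  \begin{itemize}
  \item the balls $B(x,\varepsilon_0)$, $x\in F$, are pairwise disjoint and miss $K\setminus\bigcup_j B(x^j,\varepsilon)$ (atoms of $F$ lying in $K'$ are among the $x^j$; the others lie outside $K$);
  \item $\sum_{x\in F}\mu(x)\bigl[d(x,\thediag)^p-(d(x,\thediag)-\varepsilon_0)^p\bigr]<r^p/4$.
  \end{itemize}
\end{itemize}
If $z_n\in K\setminus\bigcup_j B(x^j,\varepsilon)$ is a stray point of $\mu_n$, then for $n$ large
\[
\mathrm{Pers}_p^p(\mu_n)\ge\sum_{x\in F}\mu(x)\bigl(d(x,\thediag)-\varepsilon_0\bigr)^p+d(z_n,\thediag)^p>\mathrm{Pers}_p^p(\mu)-\frac{r^p}{2}+r^p .
\]
This contradicts $\mathrm{Pers}_p(\mu_n)\to\mathrm{Pers}_p(\mu)$, which follows from \cref{prop:sfg-pers} and the triangle inequality. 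This route needs no compactness extraction and no strip machinery. The same computation also derives the $\limsup$ half of \cref{prop:top-SFG} from its $\liminf$ half.
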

\begin{proof}
    This is a direct consequence of \cref{prop:top-SFG} and the fact that any compact set of $\groundspace$ contains finitely many points of $\supp(\mu)$.
\end{proof}
And hence the converse implication in \cref{thm:equivTopSFG} holds.

\section{Numerical considerations and experimental results}\label{sec:expe}

In this section, we introduce algorithms for computing the Sliced Figalli–Gigli distance, and the associated kernel (SFGK), both in exact and approximate forms. We then evaluate their empirical performance on standard machine learning tasks, using the same benchmark datasets as those considered for the Sliced Wasserstein kernel on persistence diagrams proposed by Carrière et al.~\citep{pmlr-v70-carriere17a}.
Owing to the close similarity between the two kernels, we do not anticipate substantial differences in practical performance, a hypothesis that is confirmed by our numerical experiments.
Accordingly, the goal of this section is not to claim a systematic empirical advantage over the SWK of~\citep{pmlr-v70-carriere17a}, but rather to demonstrate that the stronger theoretical foundations underpinning SFGK do not come at the expense of numerical efficiency or predictive performance.

\subsection{Exact and approximated computation of the SFG distance}
In this section we give algorithms to compute both the exact value and approximations of $\SFG$. 

\begin{algorithm} \label{alg:exact-SFG}
    \caption{Exact computation of $\SFG_p$}
    \begin{algorithmic}
        \State \textbf{Input:} Two diagrams $\rX= \{x_1, \dots x_{N_1}\}, \rY= \{y_1, \dots y_{N_2}\}$
        \State Sort $\{x_{1,1}, x_{1,2}, \dots , x_{N_1, 1}, x_{N_1,2}, y_{1,1}, \dots y_{N_2,2}\}$ into a list of events $\rE$.
        \State Initialize $\SFG = 0$ and two binary search trees $\rX_t = \{\}, \rY_t = \{\}$
        \For{$i = 1, \dots, 2(N_1 + N_2)$}
        \If{$\rE[i]$ is of the form $z_{j,1}$} 
        \State $\rZ_t \leftarrow \rZ_t + \{z_j\}$
        \Else 
        \State $\rZ_t \leftarrow \rZ_t - \{z_j\}$  \EndIf
        \State Iterate over $\rX_t$ and $\rY_t$ to compute $\alpha =\FG_p(\pi_t\#\tilde\rX, \pi_t\#\tilde\rY)$
        \State $\SFG \leftarrow \SFG + (\rE[i+1] - \rE[i])\alpha$
        \EndFor
        \State \textbf{Output:} $\SFG$
    \end{algorithmic}
\end{algorithm}

This algorithm works since the value of $\FG_p(\pi_t\#\tilde\rX, \pi_t\#\tilde\rY)$ does not change between two consecutive events. It computes the exact $\SFG_p$ distance between the two inputted diagrams in $\cO((N_1 + N_2)\times\max(N_1, N_2))$ where $N_1$ and $N_2$ are the number of points in each diagram. 
This is essentially optimal if $N_1$ and $N_2$ are roughly the same size since the above algorithm needs to compute the matrix of the distances between points of $X$ and points of $Y$ (any point of $X$ may be transported onto any point of $Y$) which can only be done in $\cO(N_1 \times N_2)$. 
We can also compute an approximation of the $\SFG_p$ distance by simply approximating the outer integral, as detailed in  \cref{alg:approx-SFG}.

\begin{algorithm} \label{alg:approx-SFG}
    \caption{Approximate computation of $\SFG_p$}
    \begin{algorithmic}
        \State \textbf{Input:} Two diagrams $\rX= \{x_1, \dots x_{N_1}\}, \rY= \{y_1, \dots y_{N_2}\}$ and an integer $k$
        \State Sort $\rX$ and $\rY$ with respect to $d(\cdot, \thediag)$
        \State Samples values $t_1, \dots, t_k$ between $\min(\{x_{i,1}, 1\leq i \leq N_1\} \cup \{y_{j,1}, 1\leq j \leq N_2\})$ and $\max(\{x_{i,2}, 1\leq i \leq N_1\} \cup \{y_{j,2}, 1\leq j \leq N_2\})$
        \State Initialize $\SFG = 0$
        \For{$i = 1, \dots, k$}
        \State Compute $\rX_t \coloneqq \rX \cap \groundspace_t$ and $\rY_t \coloneqq \rY \cap \groundspace_t$
        \State Iterate over $\rX_t$ and $\rY_t$ to compute $\alpha =\FG_p(\pi_{t_i}\#\tilde\rX, \pi_{t_i}\#\tilde\rY)$
        \State $\SFG \leftarrow \SFG + (t_{i} - t_{i-1})\alpha$
        \EndFor
        \State \textbf{Output:} $\SFG$
    \end{algorithmic}
\end{algorithm}

It is sufficient to sample the values $t_1, \dots, t_k$ between $\min(\{x_{i,1}, 1\leq i \leq N_1\} \cup \{y_{j,1}, 1\leq j \leq N_2\})$ and $\max(\{x_{i,2}, 1\leq i \leq N_1\} \cup \{y_{j,2}, 1\leq j \leq N_2\})$ since $\FG_p(\pi_t \# \tilde\rX, \pi_t \# \tilde\rY) = 0$ outside of these bounds. 
We considered sampling the $t_i$, either uniformly or by sampling from a Gaussian KDE fitted on the different event values (as defined in the previous algorithm). In practice, we found that the uniform sampling lead to better convergence speed (see Figure 4). The above algorithm runs in $\cO(N_1(\log(N_1) +k) + N_2(log(N_2) +k))$ since computing $\FG_p(\pi_t\#\tilde\rX, \pi_t\#\tilde\rY)$ can be done in $\cO((N_1 + N_2)k)$.

\begin{figure}[!h] \label{fig:convergence-sampling}
    \centering
    \begin{subfigure}{0.49\textwidth}
        \centering
        \includegraphics[width = 0.8\linewidth]{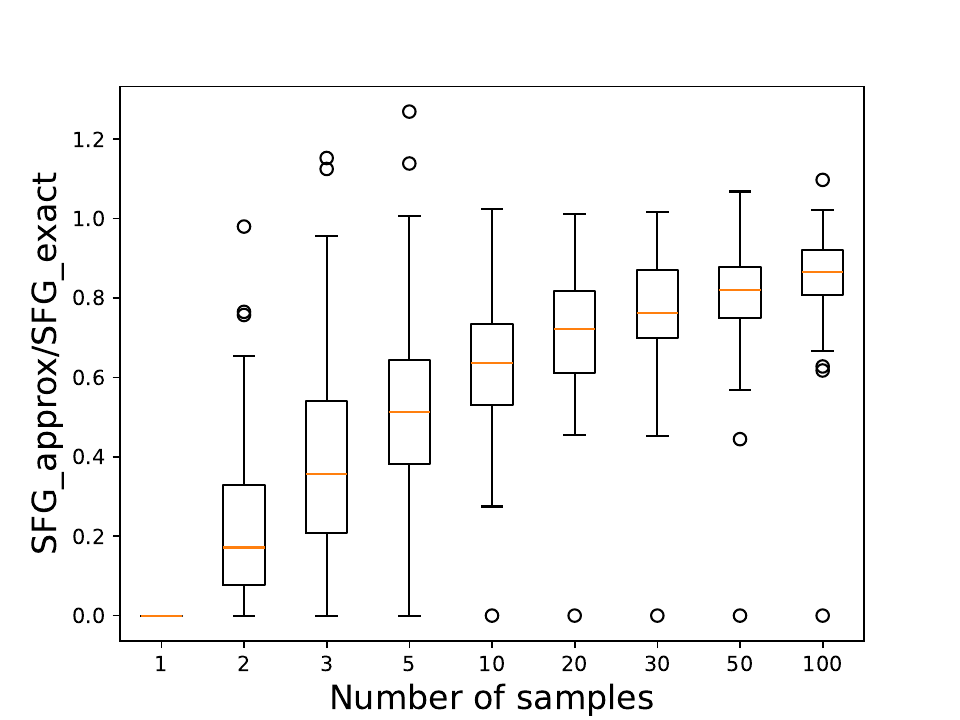}
        \caption{KDE sampling}
        \label{fig:sampling-kernel}
    \end{subfigure}
    \hfill
    \begin{subfigure}{0.49\textwidth}
        \centering
        \includegraphics[width=0.8\linewidth]{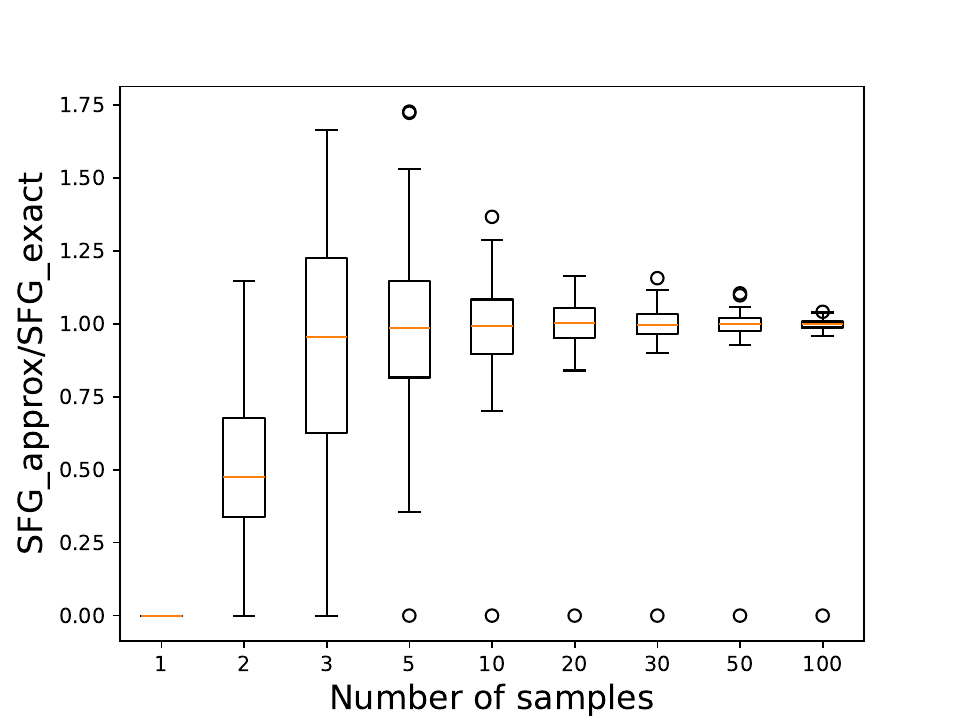}
        \caption{Uniform sampling}
        \label{fig:sampling-uniform}
    \end{subfigure}
    \caption{Boxplot of the ratio between the approximation of $\SFG$ and its exact value with respect to the number of samples used for the approximation. The persistence diagrams we used to get this plot were taken from the Orbits dataset we generated (see \cref{subs:experimental-results}). 
    The uniform sampling leads to better convergence speeds while the gaussian KDE sampling tends to underestimate the value of $\SFG$.}
\end{figure}
\begin{rem}
    In the above algorithm, the sampled values depend on the input diagrams. This allows to avoid sampling values where $\FG_p(\pi_t\#\tilde\rX, \pi_t\#\tilde\rX)$ is trivially zero. However, by doing that, this approximation of $\SFG_p$ is not CND. If one wants to ensure this property still holds when computing approximation of $\SFG_p$ the sampled values need to be fixed ahead of time. For example, when computing a Gram matrix of diagrams with the $\SFG$ kernel the sampled values used must be the same for all the distance computations or the resulting matrix might not be PSD. In practice, we found that when training an SVM the resulting matrix were close enough to being PSD for it to not cause convergence issues.
\end{rem}
\begin{rem}
    We also have very similar algorithms to compute exact and approximate values of the variant $\widehat\SFG$ described in \cref{appendix:continuous_proj} which we do not detail here. 
    They run in respectively $\cO(N_1N_2\times \max(N_1, N_2))$ and $\cO(N_1(\log(N_1) +k) + N_2(log(N_2) +k))$.
\end{rem}

\subsection{Tightness of theoretical bounds between $\FG$ and $\SFG$} \label{appendix:complementary-expe}
We explore numerically how tight the bound of \cref{thm:ineg-projn} is. 
For this purpose, we compute the ratio between the $\SFG$ distance between persistence diagrams sampled either (a) uniformly\footnote{That is, persistence diagrams each consisting of 100 points sampled uniformally on $[0,1]^2$.}, (b) from the \textit{Orbits} dataset or (c) from the \textit{Outex} dataset (see \cref{subs:experimental-results}) and the theoretical bound given by \cref{thm:ineg-projn}. 
The results are given in \cref{fig:bounds}.

We observe that the theoretical bound seems relatively tight for $p \neq 1$ as evidenced in \ref{fig:bounds_orbits}. 
For $p=1$ we empirically observe that $0.2 \lesssim \frac{\SFG_1}{3\FG_1} \lesssim 0.4$, i.e. $0.6 \FG_1 \lesssim \SFG_1 \lesssim 1.2\FG_1$. 
Even though we know it is not possible to obtain a lower bound of the form $C \cdot \SFG_1 \leq \FG_1$ for all persistence diagrams by considering peculiar adversarial examples, we observe that such a bound seems to hold for diagrams actually encountered in applications.

\begin{figure}[!h]
    \centering
    \begin{subfigure}{0.32\textwidth}
        \centering
        \includegraphics[width = \linewidth]{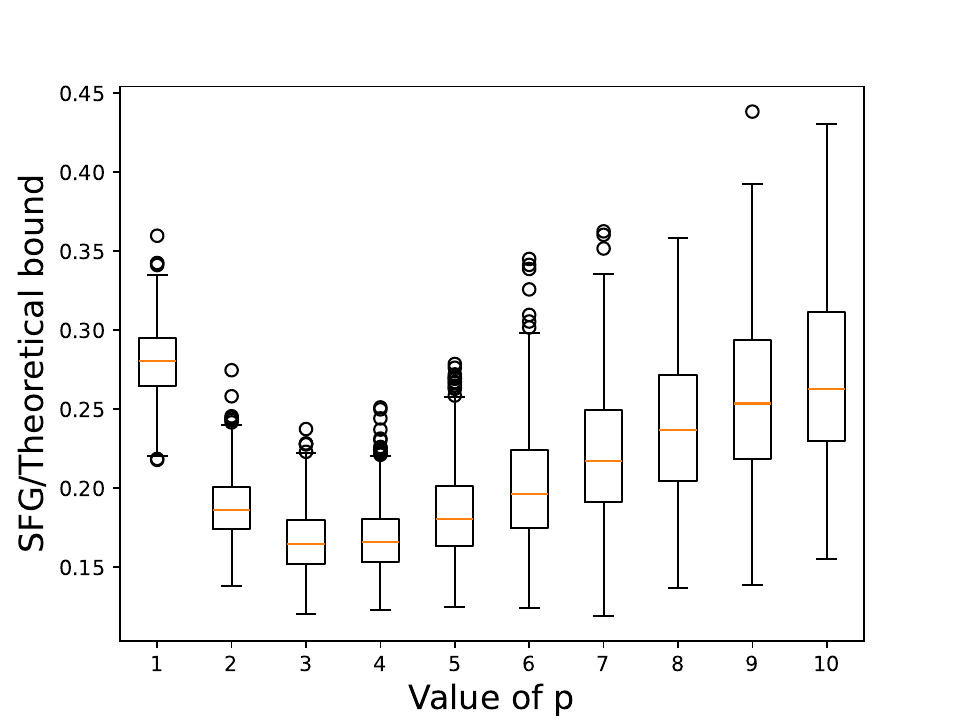}
        \caption{Uniform sampling}
        \label{fig:bounds_uniform}
    \end{subfigure}
    \hfill
    \begin{subfigure}{0.32\textwidth}
        \centering
        \includegraphics[width=\linewidth]{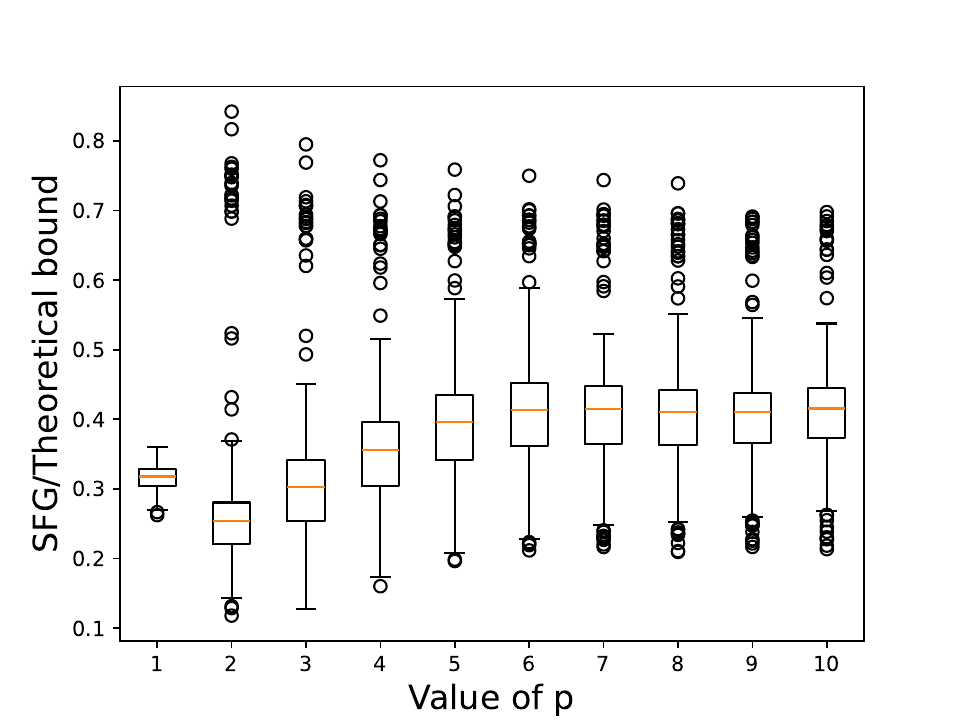}
        \caption{Orbits dataset}
        \label{fig:bounds_orbits}
    \end{subfigure}
     \begin{subfigure}{0.32\textwidth}
        \centering
        \includegraphics[width=\linewidth]{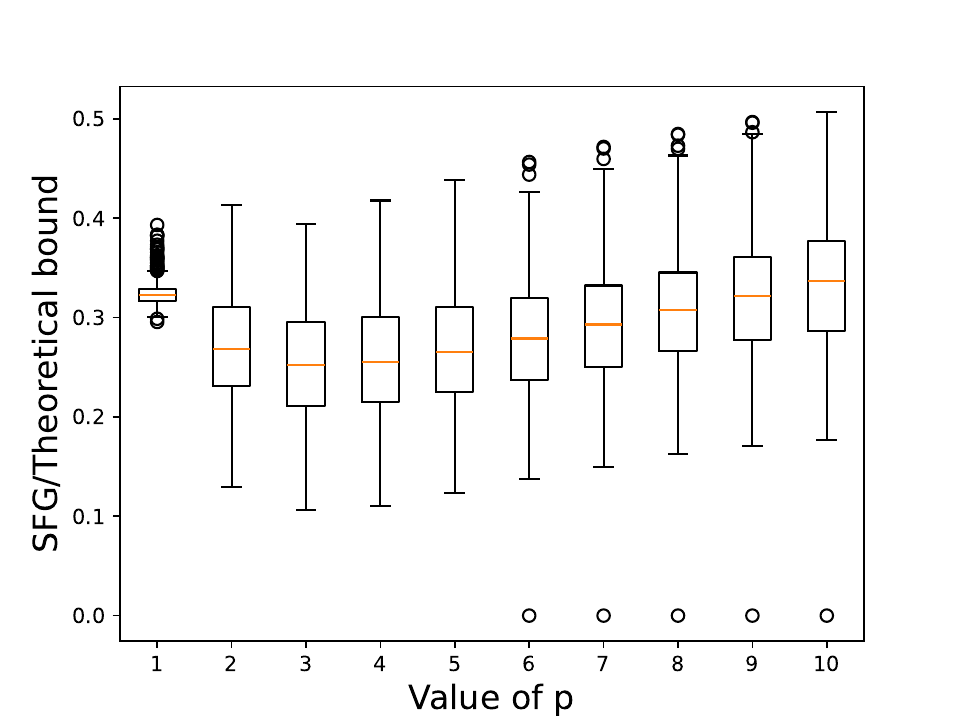}
        \caption{Outex dataset}
        \label{fig:bounds_texture}
    \end{subfigure}
    \caption{Boxplot of the ratio between $\SFG$ and the theoretical bound of \cref{thm:ineg-projn} for different values of $p$. For each value of $p$, the boxplot was obtained with 500 persistence diagrams which were sampled uniformally for (a) and using the procedures described in \cref{subs:experimental-results} for (b) and (c).}
    \label{fig:bounds}
\end{figure}

\subsection{Experimental results on classification tasks} \label{subs:experimental-results}
We tested the two distances $\SFG_1$ and $\widehat\SFG_1$ (simply denoted $\SFG$ and $\widehat\SFG$ below, where we recall that $\widehat\SFG$ refers to the variant of $\SFG$ using a different projection onto the geodesics, presented in \cref{appendix:continuous_proj}) against the Sliced Wasserstein distance ($\SW$, see section \ref{subs:related-works}) on two different experiments proposed in \citep{pmlr-v70-carriere17a}. For both of those, we train classifiers using the LIBSVM implementation \citep{libsvm} of $C$-SVM and average the results over 5 runs. We cross-validate the cost factor over the grid $\{0.01, 0.1, 1.0, 10.0, 100.0\}$. 
All the distance functions we consider only have one parameter: the bandwidth $\sigma$. When training, we choose it by 5-fold cross-validation among 15 different values that we obtain by multiplying the first decile, last decile and median of the Gram matrix values of the training set by the following factors: $\{0.01, 0.1, 1.0, 10.0, 100.0\}$. We use 20 directions to approximate $\SW$ and sample 20 values to approximate $\SFG$ and $\widehat\SFG$.

\subsubsection{Orbit recognition}
\textbf{Dataset} This task is based on the \textit{linked twist map} discrete dynamical system. Given initial positions $(x_0, y_0) \in [0,1]^2$ and a parameter $r > 0$ its orbits can be computed with the following equations:
\begin{equation}
    \begin{cases}
        x_{n+1} = x_n + ry_n(1-y_n) \mod 1 \\
        y_{n+1} = y_n + rx_{n+1}(1-x_{n+1}) \mod 1
    \end{cases}
\end{equation}
The orbits exhibit very different behaviour depending on the values of the parameter $r$. For example, as can be seen in Figure 5, when $r = 3.5$ there seems to be no specific structure whereas for $r= 4.1$ a void appears in the center.

\begin{figure}[!h] \label{fig:orbits}
    \centering
    \begin{subfigure}{0.49\textwidth}
        \centering
        \includegraphics[width = 0.7\linewidth]{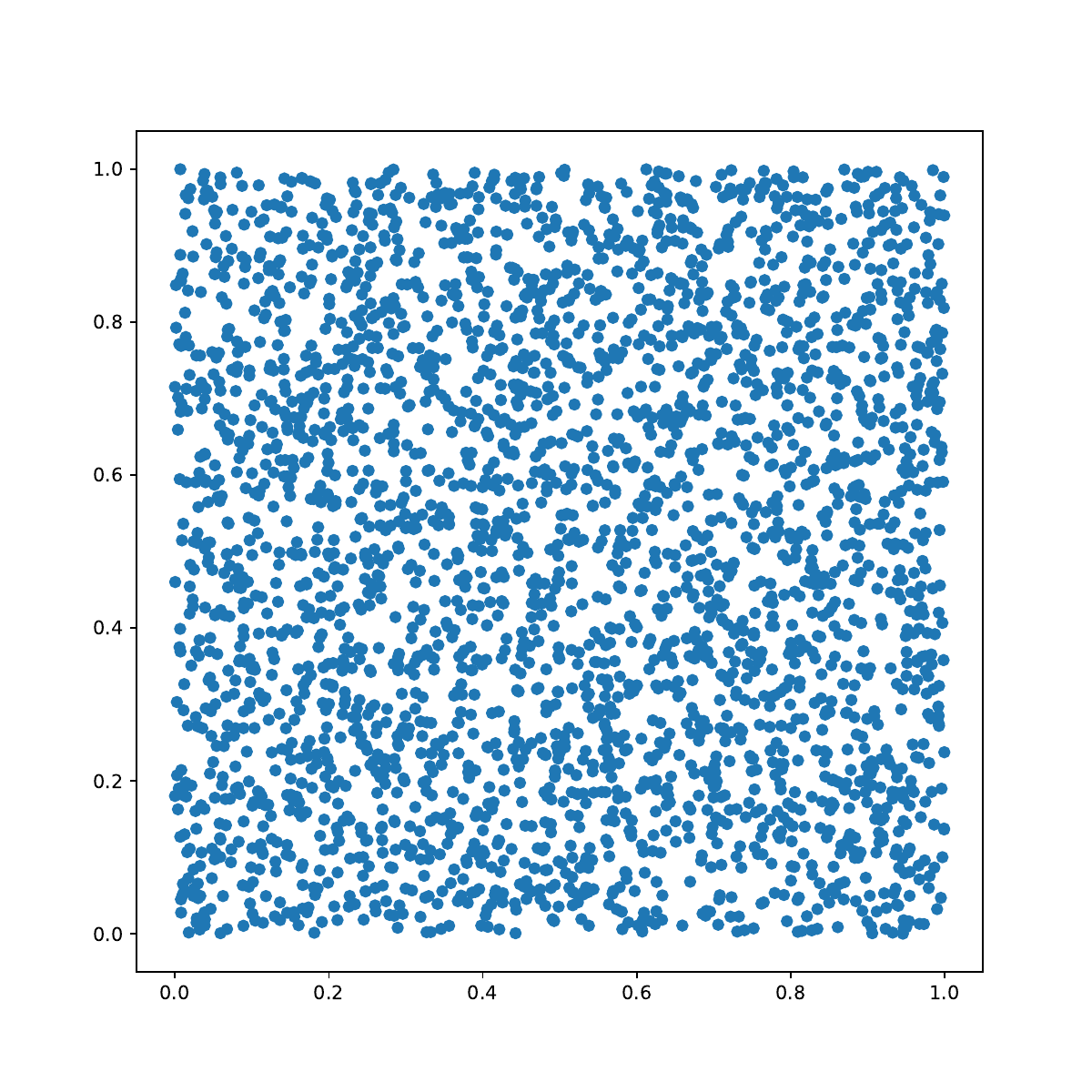}
        \caption{$r=3.5$}
        \label{fig:orbits_3.5}
    \end{subfigure}
    \hfill
    \begin{subfigure}{0.49\textwidth}
        \centering
        \includegraphics[width=0.7\linewidth]{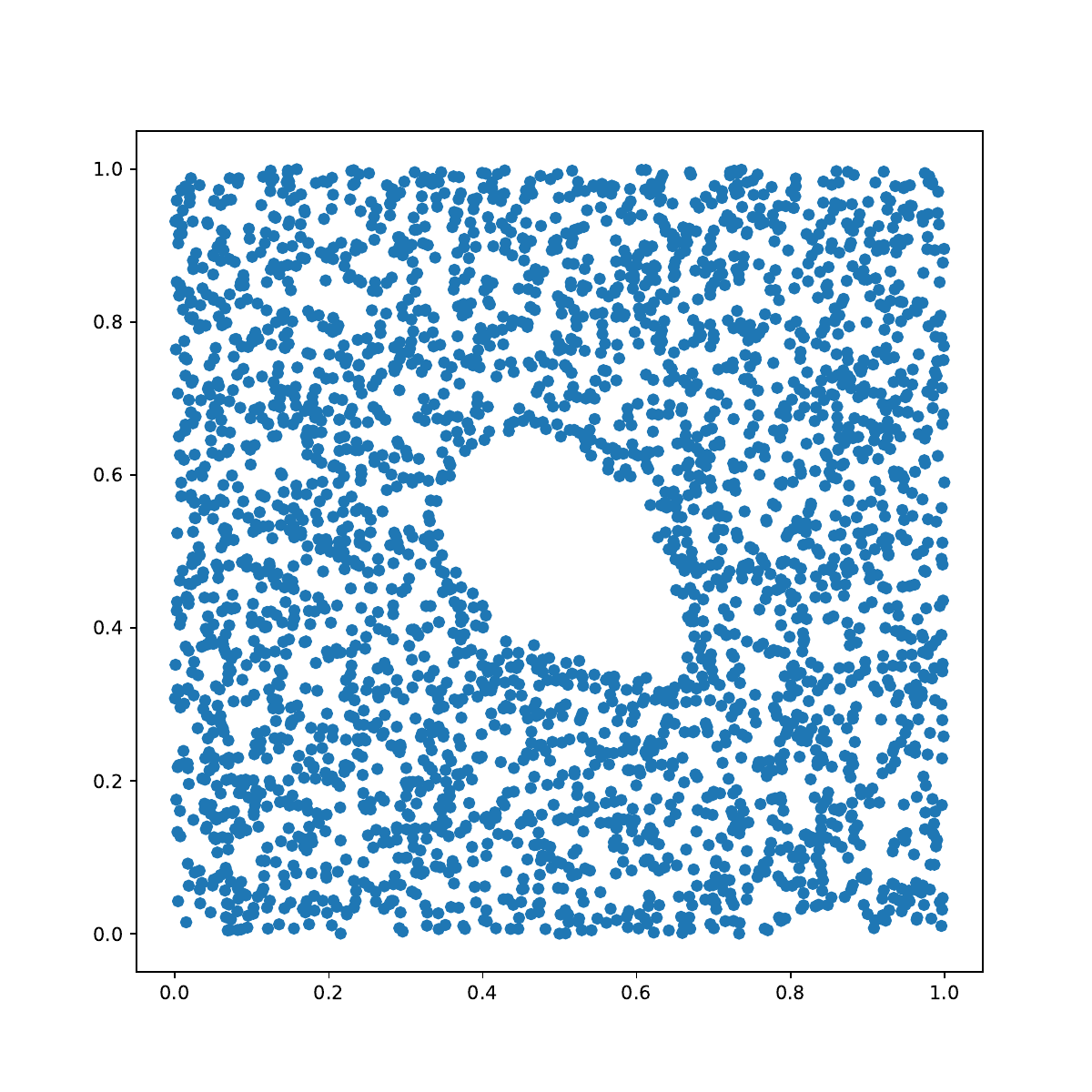}
        \caption{$r=4.1$}
        \label{fig:orbits_4.1}
    \end{subfigure}
    \caption{Two orbits with different parameter values, the classifier is trained to predict the value of the parameter.}
\end{figure}
Following what was done in \citep{pmlr-v70-carriere17a}, we use 5 parameter values $r \in \{2.5, 3.5, 4, 4.1, 4.3\}$ and for each of those we generate 100 orbits with 1000 points and random initial positions. We then use the GUDHI library \citep{gudhi:urm} to compute the persistence diagrams of the distance to the points and use them (in homological dimension 1) to produce an orbit classifier by training over a 60\%-40\% train-test split of the data.

\bmhead*{Results.} As reported in Table \ref{table:results} both $\SFG$ and $\widehat\SFG$ achieve a performance similar to the one of the Sliced Wasserstein Kernel. 
\subsubsection{Texture classification}

\bmhead*{Dataset.} The second experiment uses the \textit{OUTEX} database for texture classification. Since we were unable to access the original \textit{OUTEX0000} dataset we instead use part of an extended version of it: \textit{Outex\textunderscore TC\textunderscore 00010-r} (available at \url{https://color.univ-lille.fr/datasets/extended-outex}). For each of the images in the database we compute the sign component of the CLPB descriptor \citep{clbp} with radius $R = 1$ and neighbours $P = 8$. The output of this descriptor can then be interpreted as a weighted cubical cell complex \citep{reininghaus-pss} from which we again compute a persistence diagram using the GUDHI library. We then use it in homological dimension 0 to produce a classifier over a 50\%-50\% train-test split of the data.

\begin{figure}[!h] \label{fig:textures}
    \centering
    \begin{subfigure}{0.32\textwidth}
        \centering
        \includegraphics[width = 0.7\linewidth]{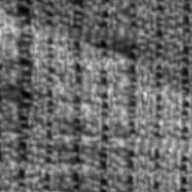}
        \caption{Original texture image}
        \label{fig:texture_png}
    \end{subfigure}
    \hfill
    \begin{subfigure}{0.32\textwidth}
        \centering
        \includegraphics[width=0.7\linewidth]{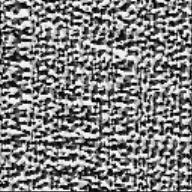}
        \caption{CLBP-S transform}
        \label{fig:texture_lbp}
    \end{subfigure}
    \hfill
    \begin{subfigure}{0.32\textwidth}
        \centering
        \includegraphics[width=0.8\linewidth]{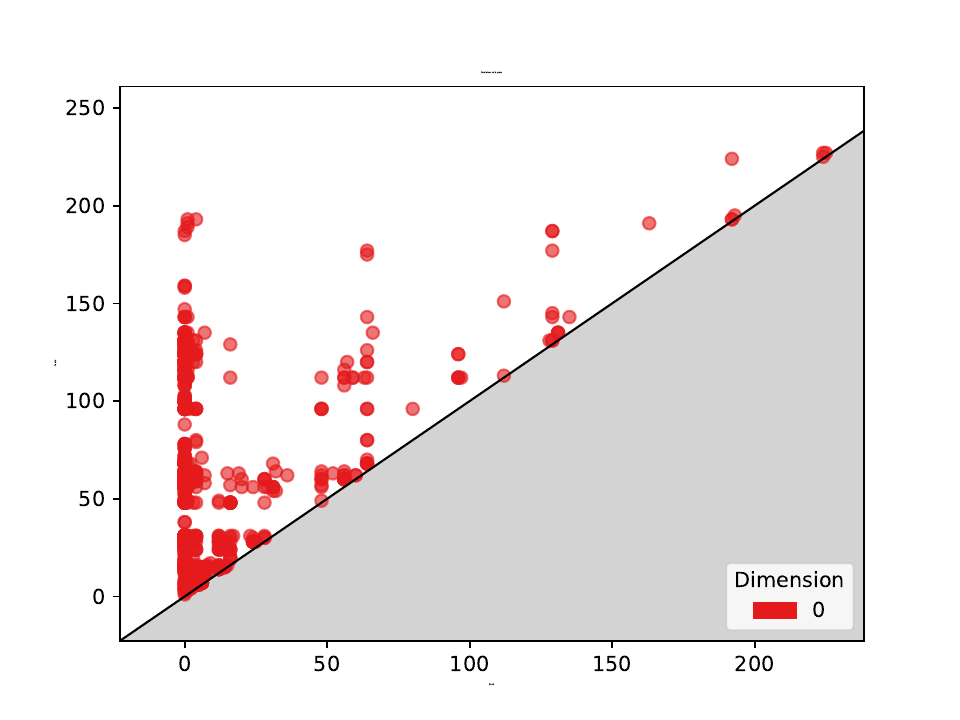}
        \caption{Corresponding persistence diagram}
        \label{fig:texture_pers}
    \end{subfigure}
    \caption{The pipeline we use to produce texture classifiers.}
\end{figure}
\textbf{Results:} As reported in Table \ref{table:results} both $\SFG$ and $\widehat\SFG$ again achieve a performance similar to the one of the Sliced Wasserstein Kernel.

\begin{table}[!h]
\vskip 0.15in
\begin{center}
\begin{small}
\begin{sc}
\begin{tabular}{|l|llll|}
\hline 
Task &         $\SW$ &  $\SFG$ &    $\widehat\SFG$ &         \\
\hline 
Orbit &        $79.7 \pm 1.64$ &       $80.8 \pm 1.2$ &     $79.8 \pm 0.4$ &                     \\        
Texture &      $89.5 \pm 5.2$ &        $90.1 \pm 1.3$ &    $91.7 \pm 2.8$ &                      \\                                           
\hline            
\end{tabular}

\end{sc}
\end{small}
\caption{\label{table:results} Accuracies on the two different experiments we performed: Orbits and Textures.}
\end{center}
\vskip -0.1in
\end{table}

\section{Conclusion}
In this work, we defined a new distance between persistence diagrams that remains faithful to the geometry of the space of PDs while being computationally efficient.
Unlike the Sliced Wasserstein distance, the Sliced Figalli--Gigli distance admits a natural extension to infinite persistence diagrams and to persistence measures, placing it in a broader and more flexible measure-theoretic framework that encompasses existing finite-diagram formulations as special cases.

Left open by our work are several directions for future research. 
On the theoretical side, while we derive in \cref{sec:stability} stability bounds for the $\SFG$ distance, empirical results (see \cref{appendix:complementary-expe}) suggest that the observed stability in practical settings is significantly stronger. Understanding this gap, for instance by identifying refined assumptions on the distributions of persistence diagrams considered under which improved bounds can be established, is an interesting direction for further investigation. 
Following recent advances in the computational optimal transport literature, one may also considered variants of the $\SFG$ distance, e.g.~using a maximum instead of an integration \citep{deshpande2019max,kolouri2019generalized}. 
Another appealing question---going beyond Topological Data Analysis---would be to generalize the $\SFG$ distance to domains more general than the open half-plane $\groundspace$. 
Indeed, in this work, the (somewhat simple) geometry of $\groundspace$ and of the resulting geodesics emanating from $\thediag$ was crucial (and sufficient for our purpose). 
However, the seminal formalism introduced by Figalli and Gigli in \citep{FG10} enable more general domains $(\groundspace, \thediag)$ (e.g.~$\overline{\groundspace}$ being a compact subset of $\bR^d$ and its boundary $\thediag$ being reasonably smooth); defining a sliced Figalli--Gigli distance in that cases would be of interest. 

\clearpage
\bibliography{articles}
\clearpage
\appendix

\section{Sliced Figalli--Gigli distance with a continuous projection} \label{appendix:continuous_proj}

As mentionned in \ref{rem:choice-of-projection}, we can change the projection we use in the definition of the $\SFG_p$ distance. In particular, the projection $z \mapsto \pi_t(z)$ is not continuous which motivates the introduction of \textit{continuous versions} of our constructions. We hoped that this increased regularity would lead to better theoretical guarantees but we did not manage to obtain any. For the sake of completeness, we still present here our results on the continuous $\SFG$ distance.

\begin{definition}[Continuous projections and the continuous SFG distance]
    Let $t \in \bR$ et $z = (z_1, z_2) \in \groundspace$, the \textit{continuous} projection on the geodesic with parameter $t$ is given by:
    \begin{equation} \label{eq:pi_linear}
        \tilde{\pi}_t(z) =
        \begin{cases}
            (t,t) + \frac{z_2-z_1}{2}(1 - \frac{\|z-\pi_t(z)\|}{\|z - \pi(z)\|})(-1,1) & \text{if } z_1 \leq t \leq z_2 \\
            \thediag & \text{otherwise.}
        \end{cases}
    \end{equation}

    The continuous SFG distance is defined for $\mu, \nu \in \cM^p(\groundspace)$ as:
    
    \begin{equation}  \label{eq:def-SFG-continuous} \tag{$\widehat{\SFG}_p$}
    \widehat\SFG_p(\mu,\nu) \coloneqq \frac{p+1}{\sqrt2}\left(\int_\bR \FG_p^p(\tilde\pi_t\# \tilde\mu, \tilde\pi_t \# \tilde\nu)\dd t\right)^{\frac{1}{p}}.
    \end{equation}
\end{definition}

    This distance is also well-defined, following a computation similar to that of \eqref{eq:def-SFG}.

\begin{figure}[!h]
    \centering
    \includegraphics[width=0.49\linewidth]{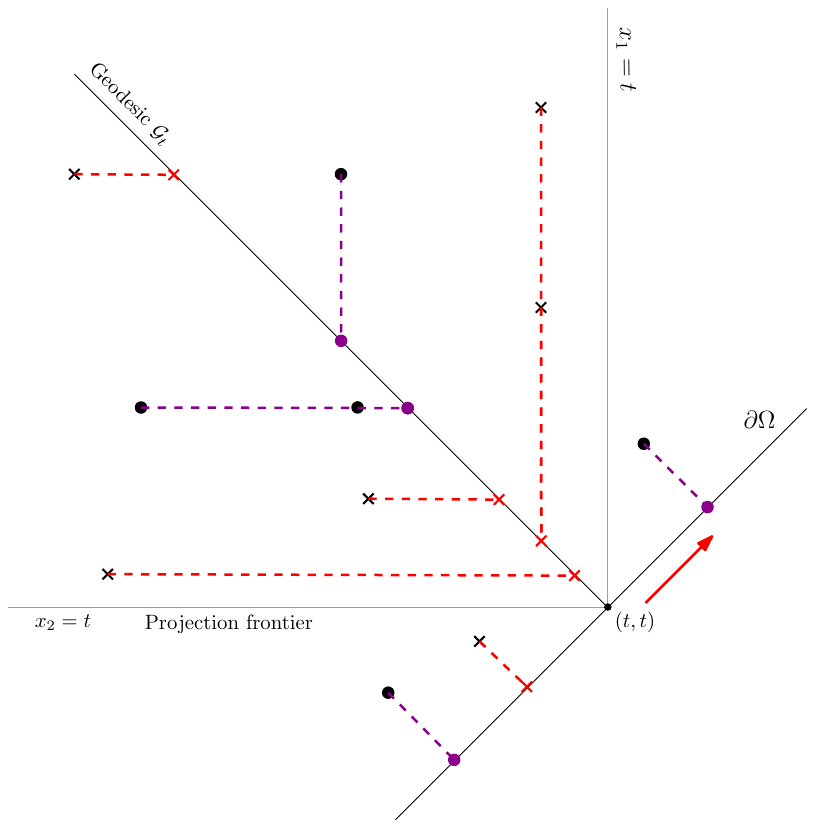}
    \caption{Continuous projection}
    \label{fig:proj-c-SFG}
\end{figure}

\begin{prop} \label{prop:sfgPersContinuous}
    Let $\mu \in \cM^p(\groundspace)$, and $\emptyset$ denote the empty diagram. One has
    \begin{equation} 
    \widehat\SFG_p(\mu, \emptyset) = \FG_p(\mu,\emptyset) = \mathrm{Pers}_p(\mu).
    \end{equation}
\end{prop}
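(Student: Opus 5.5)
The plan mirrors the computation \eqref{eq:sfg-empty} made for $\SFG_p$: reduce $\FG_p(\cdot,\emptyset)$ on each geodesic to a persistence, then swap integrals with Tonelli, and compute one explicit one-dimensional integral per point.

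\textbf{Step 1: the second equality.} The equality $\FG_p(\mu,\emptyset)=\mathrm{Pers}_p(\mu)$ is immediate. When the target is empty, the only admissible plans in $\Adm(\mu,\emptyset)$ send all the mass of $\mu$ to $\thediag$. The optimal way to do so is $(\mathrm{id},\pi)\#\mu$, whose cost is $\int d(x,\thediag)^p\dd\mu(x)$.

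\textbf{Step 2: reduce to a double integral.} The same observation, applied on each geodesic $\cG_t$, gives
\[
\FG_p^p(\tilde\pi_t\#\tilde\mu,\emptyset)=\int_\groundspace d(\tilde\pi_t(z),\thediag)^p\,\frac{\dd\mu(z)}{d(z,\thediag)} .
\]
Since every integrand is non-negative, Tonelli's theorem lets us exchange the $t$- and $z$-integrals. It therefore suffices to compute, for a fixed $z\in\groundspace$, the quantity $I(z):=\int_\bR d(\tilde\pi_t(z),\thediag)^p\dd t$.

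\textbf{Step 3: compute $I(z)$.} Write $h:=d(z,\thediag)=(z_2-z_1)/\sqrt2$ and $c:=(z_1+z_2)/2$. The proof of the projection proposition gives $\|z-\Pi_t(z)\|=\sqrt2\,|c-t|$. By \eqref{eq:pi_linear}, for $t\in[z_1,z_2]=[c-h/\sqrt2,\,c+h/\sqrt2]$ the point $\tilde\pi_t(z)$ lies at distance
\[
d(\tilde\pi_t(z),\thediag)=\sqrt2\cdot\frac{z_2-z_1}{2}\Bigl(1-\frac{\sqrt2|c-t|}{h}\Bigr)=h-\sqrt2\,|c-t|
\]
from $\thediag$. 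This expression is non-negative on the interval and vanishes at its endpoints, and the distance is $0$ outside the interval. Substituting $u=|c-t|$ gives
\[
I(z)=2\int_0^{h/\sqrt2}(h-\sqrt2 u)^p\dd u=\frac{\sqrt2\,h^{p+1}}{p+1}.
\]
Multiplying by the weight $1/h$ and integrating against $\mu$ yields
\[
\int_\bR\FG_p^p(\tilde\pi_t\#\tilde\mu,\emptyset)\dd t=\frac{\sqrt2}{p+1}\,\mathrm{Pers}_p^p(\mu).
\]

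\textbf{Step 4: normalization (the main obstacle).} The only delicate point is how the constant in \eqref{eq:def-SFG-continuous} is applied. The computation above shows that the factor $\frac{p+1}{\sqrt2}$ cancels the constant exactly when it multiplies the integral \emph{inside} the $p$-th root. This is the same convention that makes \eqref{eq:sfg-empty} give $\SFG_p(\mu,\emptyset)=\mathrm{Pers}_p(\mu)$, where the factor $1/\sqrt2$ was applied at the level of $\SFG_p^p$. With this reading one gets $\widehat\SFG_p^p(\mu,\emptyset)=\mathrm{Pers}_p^p(\mu)$, and taking $p$-th roots concludes.

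If instead the prefactor is read literally as sitting outside the root, then the identity holds only for $p=1$. In that case I would state the result with the normalization interpreted as above, and note explicitly that the two readings agree at $p=1$.
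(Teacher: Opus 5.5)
Your proposal is correct and follows the same route as the paper, which proves this by pointing back to the computation \eqref{eq:sfg-empty}: reduce $\FG_p^p(\cdot,\emptyset)$ on each geodesic to a persistence, swap the integrals by Tonelli, and integrate the tent profile $d(\tilde\pi_t(z),\thediag)=h-\sqrt2|c-t|$ to get $\frac{\sqrt2}{p+1}h^{p+1}$. Your normalization caveat is also well founded. The paper's own computation puts the prefactor on $\int\FG_p^p\,\dd t$ rather than outside the $p$-th root as written in \eqref{eq:def-SFG-continuous}, and under the literal definition the identity holds only for $p=1$.
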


\begin{proof}
The proof is similar to that of \cref{prop:sfg-pers}.
\end{proof}

We now prove that $\widehat\SFG$ is a distance on the space of persistence diagrams $\cD^p(\groundspace)$. 
Extending the result to $\cM^p(\groundspace)$ as we did for $\SFG$---which essentially boils down to prove injectivity of the corresponding transform---would require to adapt 
the proof technique of \cref{prop:SFG-dist}. 
It raises several challenges as the sublevel sets of that projection are not translation invariant. 
We left this possible extension for future work. 
\begin{prop}
    $\widehat\SFG_p$ is a distance on $\cD^p(\groundspace)$
\end{prop}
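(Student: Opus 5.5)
The plan mirrors the proof of \cref{prop:SFG-dist}. Non-negativity, symmetry and finiteness are handled first, then the triangle inequality, and finally separation, which is the real work.

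\textbf{Finiteness and triangle inequality.} Finiteness follows from \cref{prop:sfgPersContinuous} together with the same Minkowski argument used after \eqref{eq:def-SFG}: compare each of $\mu,\nu$ to $\emptyset$. Symmetry and non-negativity are inherited from $\FG_p$. For the triangle inequality, for each fixed $t$ we have
\[
\FG_p(\tilde\pi_t\#\tilde\mu,\tilde\pi_t\#\tilde\nu)\le \FG_p(\tilde\pi_t\#\tilde\mu,\tilde\pi_t\#\tilde\rho)+\FG_p(\tilde\pi_t\#\tilde\rho,\tilde\pi_t\#\tilde\nu),
\]
since $\FG_p$ is a distance on Radon measures on the half-line $\cG_t\setminus\{\thediag\}$. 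Here I will check, as in the Lemma preceding \cref{lem:injectivity-transform}, that preimages of compacts under $\tilde\pi_t$ are compact. Minkowski's inequality in $L^p(\dd t)$ then gives the result.

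\textbf{Separation.} Assume $\widehat\SFG_p(\mu,\nu)=0$ with $\mu,\nu\in\cD^p(\groundspace)$. Then $\tilde\pi_t\#\tilde\mu=\tilde\pi_t\#\tilde\nu$ on $\cG_t\setminus\{\thediag\}$ for a.e.\ $t$. Work in the coordinates $(u,v)$ of \cref{lem:injectivity-transform}, where a point $z$ has height $u$ and diagonal position $v$. One computes that $\|z-\pi_t(z)\|/\|z-\pi(z)\|=|v-t|/u$, so $\tilde\pi_t(z)$ sits at height $u-|v-t|$ when $|v-t|<u$, and at $\thediag$ otherwise. Thus the slice at $t$ is the tent function $t\mapsto (u-|v-t|)_+$ applied to each point, with weight $1/u$. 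I would then use the discreteness of diagrams. Because $\mu,\nu\in\cD^p$, each of them has only finitely many points with $u\ge\varepsilon$ in any bounded window. So I would recover the points of $\mu$ by decreasing height, following the tent profiles. Consider the quantity
\[
H(t)\coloneqq \sup\,\supp(\tilde\pi_t\#\tilde\mu)=\max_{z\in\supp\mu}(u_z-|v_z-t|)_+ .
\]
This is an upper envelope of tents, and it is determined by the (common) slices for a.e.\ $t$, hence for all $t$ by continuity. Its local maxima sit at $t=v_z$ with value $u_z$ for the dominant points. Comparing masses, the atom of the slice at height $H(t)$ has mass $\mu(z)/u_z$, which recovers the multiplicities of those points. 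I then remove the identified points from both $\mu$ and $\nu$ and iterate. Because the slice map is additive, subtracting equal measures preserves equality of the slices.

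\textbf{Main obstacle.} The hard step is making this peeling argument rigorous when $\mu$ has infinitely many points accumulating at $\thediag$, and when tents are hidden below others so that they are not local maxima of the envelope. To get around this, instead of the envelope I would use the slice measure itself. Near $t=v_z$, the atom at height $u_z-|v_z-t|$ traces a line of slope $\pm1$ in the $(t,\text{height})$ plane. Different points $z\neq z'$ give different lines, which cross at only finitely many points in any region with height $\ge\varepsilon$. Hence the two-dimensional measure $\dd t\otimes(\tilde\pi_t\#\tilde\mu)$, restricted to height $\ge\varepsilon$, is a finite sum of weighted tent graphs. Each graph has a unique apex, which determines $z$, and its weight determines $\mu(z)$. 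Since this two-dimensional measure coincides for $\mu$ and $\nu$, the restrictions of $\mu$ and $\nu$ to $\{u>\varepsilon\}$ agree. Letting $\varepsilon\to0$ gives $\mu=\nu$.
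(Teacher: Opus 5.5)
Your finiteness and triangle-inequality steps are fine. So is reducing separation to a.e.\ equality of the slices $\tilde\pi_t\#\tilde\mu$ and $\tilde\pi_t\#\tilde\nu$, hence to equality of the planar measures $\Lambda_\mu=\mathrm{d}t\otimes(\tilde\pi_t\#\tilde\mu)$ and $\Lambda_\nu$. The gap is in the identification step. Your claim that distinct points give lines meeting in only finitely many points is false. In the $(t,\text{height})$ plane, the left leg of the tent of $z=(z_1,z_2)$ lies on the line $s-t=-z_1$ and the right leg on $s+t=z_2$. So points sharing a birth (resp.\ death) coordinate have collinear legs that overlap on whole segments. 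Worse, a tent can be covered entirely by other tents. The tent of $z=(0,2)$, with apex $(1,1)$, lies inside the union of the left leg of the tent of $(0,4)$ and the right leg of the tent of $(-2,2)$. Its apex is merely the crossing point of those two legs. So the support of $\Lambda_\mu$ does not single out apexes. Your sentence ``each graph has a unique apex, which determines $z$, and its weight determines $\mu(z)$'' is exactly the injectivity to be proved, and it is left unproved.

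To close the gap, work with the signed measure $\Lambda_\mu-\Lambda_\nu$. Above height $\varepsilon$ it is a finite combination of truncated tents with coefficients proportional to $(\mu(z)-\nu(z))/u_z$; the common points cancel. If $\mu\neq\nu$, take $z^*$ maximizing $u$ among points where $\mu(z)\neq\nu(z)$. It exists because finite persistence leaves only finitely many points above any positive height. Every other tent with nonzero coefficient either has a strictly lower apex (with a uniform gap, since only finitely many are relevant) or has the same apex height at a different $v$. Hence a small box around $(v_{z^*},u_{z^*})$ meets only the tent of $z^*$, and the difference does not vanish there. Alternatively, read off the coefficients as the jumps of the density of $\Lambda_\mu$ along each line of slope $\pm1$. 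Note that the repaired argument no longer needs the planar measure. It only says that for $t$ in an interval around $v_{z^*}$, the difference of the two slices has a nonzero atom at height $u_{z^*}-|t-v_{z^*}|$. That is precisely the paper's proof, which localizes at the apex time of a point of maximal persistence where $\mu$ and $\nu$ differ.
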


\begin{proof}
    Let $\mu, \nu \in \cD^p(\groundspace)$ such that $\mu \neq \nu$ and define :
    $$A = \{x \in \supp(\mu) \cup \supp(\nu) : d(x, \thediag) = \max(\mathrm{Pers}_\infty(\mu), \mathrm{Pers}_\infty(\nu)) \text{ and } \mu(x) \neq \nu(x)\}$$
    Observe that $A$ is finite since we consider diagrams with finite persistence. Let $x \in A$ be such that $x_2$ is maximal and assume without loss of generality that $x \in \supp(\mu)$ and $\mu(x) > \nu(x)$. Then, there exists $\varepsilon > 0$ such that for all $t \in I := [\frac{x_1 + x_2}{2} - \varepsilon, \frac{x_1 + x_2}{2} + \varepsilon[, \pi_t\#\tilde\mu(x) > \pi_t\#\tilde\nu(x)$ and therefore $\FG_p(\tilde\pi_t \# \tilde\mu, \tilde\pi_t\#\tilde\nu) > 0$. Since $|I| >0$, we also deduce that $\widehat\SFG_p(\mu,\nu) \neq 0$.\\
\end{proof}

\textbf{Stability of $\widehat\SFG$}. Similarly to what was done for $\SFG_p$ we prove that $\widehat\SFG_p$ is sable with respect to $\FG_p$. We obtain the following:
\begin{prop} \label{prop:ineg-projc}
    Let $\mu, \nu \in \cM^{p}(\groundspace)\cap\cM^\infty(\groundspace)$ one has:
    \begin{equation} 
    \widehat\SFG_p^p(\mu,\nu) \leq \sqrt2^p(p +2)\FG_p^p(\mu,\nu) + M^{p-1}\iint_{\Omega\times \Omega} d(x,y)\dd \gamma(x,y),
    \end{equation}
    where $M = \max(\mathrm{Pers}_\infty(\mu), \mathrm{Pers}_\infty(\nu))$.
\end{prop}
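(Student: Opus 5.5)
The plan is to follow the proof of \cref{thm:ineg-projn}, replacing $\pi_t$ by $\tilde\pi_t$ throughout. Fix an optimal (or any admissible) plan $\gamma \in \Adm(\mu,\nu)$ and build the renormalized plan $\tilde\gamma \in \Adm(\tilde\mu,\tilde\nu)$ exactly as in \cref{prop:tilde-transport-plan}. For each $t$, the pushforward $(\tilde\pi_t\times\tilde\pi_t)\#\tilde\gamma$, restricted away from $\thediag\times\thediag$, is admissible between $\tilde\pi_t\#\tilde\mu$ and $\tilde\pi_t\#\tilde\nu$. Hence
\begin{equation*}
\widehat\SFG_p^p(\mu,\nu) \leq \frac{(p+1)^p}{\sqrt2^p}\iint \left(\int_\bR d(\tilde\pi_t(x),\tilde\pi_t(y))^p \dd t\right) \dd\tilde\gamma(x,y),
\end{equation*}
where I expect the normalizing constant to need adjusting so that \cref{prop:sfgPersContinuous} holds. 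Fubini reduces everything to a pointwise estimate on $\int_\bR d(\tilde\pi_t(x),\tilde\pi_t(y))^p\dd t$, the analogue of \cref{lem:int-proj-n}.

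First, the diagonal terms. For $x\in\groundspace$ and $y=\thediag$, note that $\tilde\pi_t(x)$ lies at distance $d(x,\thediag)\,(1-\sqrt2|t-\tfrac{x_1+x_2}{2}|/d(x,\thediag))$ from $\thediag$ along $G_t$, which is a tent function in $t$. Integrating its $p$-th power gives $\tfrac{\sqrt2\, d(x,\thediag)^{p+1}}{p+1}$. After weighting by $1/d(x,\thediag)$ in $\tilde\gamma$, this yields a multiple of $d(x,\thediag)^p$, i.e.\ the cost of sending $x$ to the diagonal. The same computation handles the extra ``$A_+$'' and ``$B_+$'' mass that $\tilde\gamma$ sends to $\thediag$: it is bounded by $d(x,\thediag)^{p+1}\bigl(\tfrac1{d(x,\thediag)}-\tfrac1{d(y,\thediag)}\bigr)\le d(x,\thediag)^{p-1}\,|d(y,\thediag)-d(x,\thediag)|\le M^{p-1}d(x,y)$. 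This is where the $M^{p-1}\iint d(x,y)\dd\gamma$ term comes from.

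Second, the off-diagonal pairs. The key advantage over $\pi_t$ is that $\tilde\pi_t$ is Lipschitz in $z$, with constant of order $1$ up to a factor depending on $p$. So whenever both $t$-windows contain $t$, we get $d(\tilde\pi_t(x),\tilde\pi_t(y))\le L\,\|x-y\|$. On the symmetric difference of the windows, which has length at most $\sqrt2\|x-y\|$, one point is sent to $\thediag$ while the other is within distance $O(\|x-y\|)$ of $\thediag$, again because of the tent shape. This gives a bound of $C\|x-y\|^p$ on the symmetric difference, plus $C\max(d(x,\thediag),d(y,\thediag))\|x-y\|^p$ on the common window. Multiplying by the density $1/\max(d(x,\thediag),d(y,\thediag))$ cancels the window length. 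The constant $\sqrt2^p(p+2)$ should then emerge from collecting the diagonal contributions, which carry a $1/(p+1)$ compensated by the prefactor, and the interior Lipschitz contribution.

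The main obstacle is this pointwise estimate with the exact constants. The Lipschitz constant of $\tilde\pi_t$ and the factor $(p+1)$ in the definition must combine into $\sqrt2^p(p+2)$. I would first compute $\int_\bR d(\tilde\pi_t(x),\tilde\pi_t(y))^p\dd t$ explicitly in the configuration $x_1\le y_1\le x_2\le y_2$, splitting $t$ into the three intervals used in \cref{lem:int-proj-n}. I would then bound each piece by tent-function integrals before summing over the three parts of $\tilde\gamma$.
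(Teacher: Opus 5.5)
Your plan is essentially the paper's proof: the renormalized plan $\tilde\gamma$ from \cref{prop:tilde-transport-plan}, Fubini, the tent integral $\frac{\sqrt2}{p+1}d(x,\thediag)^{p+1}$ for the diagonal and $A_+/B_+$ pieces (the latter giving $M^{p-1}\iint d(x,y)\,\dd\gamma$ with coefficient $1$), and the three-interval estimate of \cref{lemma-int-projc}, whose constant $\sqrt2^{p+1}\bigl(1+\frac{1}{p+1}\bigr)$ times $\frac{p+1}{\sqrt2}$ is exactly $\sqrt2^p(p+2)$. You are also right that this requires the normalization $\widehat\SFG_p^p=\frac{p+1}{\sqrt2}\int_\bR\FG_p^p(\tilde\pi_t\#\tilde\mu,\tilde\pi_t\#\tilde\nu)\,\dd t$; this is what \eqref{eq:ineg-projc-sharp} actually uses and what \cref{prop:sfgPersContinuous} needs.

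One slip to fix when you write it out: the symmetric-difference contribution is $O(\|x-y\|^{p+1})$, not $O(\|x-y\|^p)$. To absorb the weight $1/\max(d(x,\thediag),d(y,\thediag))$, bound the end intervals by the window lengths, as the explicit tent computation does, or invoke, as the paper does, that an optimal $\gamma$ never pairs points with disjoint windows.
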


\begin{rem} \label{rem:ineg-projn-1}
    In the case of $p=1$, it is not necessary to suppose that $\mu$ and $\nu$ are in $\cM^\infty(\groundspace)$ and the above inequality becomes:
\begin{equation}
    \forall\mu, \nu \in \cM^1(\groundspace), \:\widehat{\SFG}_1(\mu,\nu) \leq (3\sqrt2 + 1)\FG_1(\mu,\nu).
\end{equation}
\end{rem}

Before proving \ref{prop:ineg-projc}, we first need a few straightforward results about the projections $\tilde\pi_t$. Let $x, y \in \groundspace$ and $t \in \bR$, we distinguish two cases:
If $x_1 \leq t \leq x_2$ and $y_1 \leq t \leq y_2$:
\begin{align}
    \|\tilde\pi_t(x) - \tilde\pi_t(y)\| &= 
    \begin{cases}
    \sqrt{2}|y_1 - x_1| &\text{if } t\leq \frac{x_1+x_2}{2} \text{ and } t \leq \frac{y_1 + y_2}{2}\\
    \sqrt2|y_2 - x_2| &\text{if } t \geq \frac{x_1 + x_2}{2} \text{ and } t \geq \frac{y_1 + y_2}{2}\\
    \sqrt{2}|y_1 + x_2 - 2t| &\text{if } \frac{x_1 + x_2}{2} \leq t \leq \frac{y_1 + y_2}{2}\\
    \sqrt2|y_2 + x_1 - 2t|&\text{if } \frac{y_1 + y_2}{2} \leq t \leq \frac{x_1 + x_2}{2} 
    \end{cases} \\
    &\leq \sqrt2\|x - y\|
\end{align}
If $y_1 \leq t \leq y_2$ and $t \notin [x_1, x_2]$: 
\begin{equation}
    \|\tilde\pi_t(x) - \tilde\pi_t(y)\| =
    \begin{cases}
        \sqrt2(t-y_1) &\text{if } t \leq \frac{y_1+y_2}{2} \\
        \sqrt2(y_2 -t) &\text{if } t \geq \frac{y_1 + y_2}{2}
    \end{cases}
\end{equation}
From this, we deduce:
\begin{lemma} \label{lemma-int-projc}
    \begin{equation} 
    \int_\bR d(\tilde\pi_t(x), \tilde\pi_t(y))^p\dd t \leq \sqrt2^p\left(\frac{\sqrt2}{p+1} + \sqrt2\right)\max(d(x, \thediag), d(y, \thediag))\|x-y\|^p.
    \end{equation}
\end{lemma}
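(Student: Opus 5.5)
We compute the integral directly and split $\bR$ according to which of $x,y$ is ``active'', i.e.\ whether $t\in[x_1,x_2]$ and/or $t\in[y_1,y_2]$. If neither is active, both projections equal $\thediag$ and the integrand vanishes. So only two regions contribute:
\begin{itemize}
\item $I_2 \coloneqq [x_1,x_2]\cap[y_1,y_2]$, where both points are active;
\item $I_1 \coloneqq [x_1,x_2]\,\triangle\,[y_1,y_2]$, where exactly one point is active.
\end{itemize}
Throughout we use the case formulas for $\|\tilde\pi_t(x)-\tilde\pi_t(y)\|$ listed just before the statement. We also use the identity $x_2-x_1=\sqrt2\, d(x,\thediag)$, and likewise for $y$.

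\emph{The region $I_2$.} Here the first set of formulas gives the pointwise bound $d(\tilde\pi_t(x),\tilde\pi_t(y))\le\sqrt2\|x-y\|$. Also $|I_2|\le\min(x_2-x_1,y_2-y_1)\le\sqrt2\max(d(x,\thediag),d(y,\thediag))$. Hence the contribution of $I_2$ is at most $\sqrt2^p\cdot\sqrt2\,\max(d(x,\thediag),d(y,\thediag))\,\|x-y\|^p$. This is the second term of the claimed constant.

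\emph{The region $I_1$.} This set is a union of at most two ``gap'' intervals, one lying between $x_1$ and $y_1$ and one between $x_2$ and $y_2$; in the disjoint case it is the full active intervals. Say $y$ is active and $x$ is not. Then $d(\tilde\pi_t(x),\tilde\pi_t(y))$ is the distance of $\tilde\pi_t(y)$ to $\thediag$, which is the tent function $\sqrt2\min(t-y_1,\,y_2-t)$.
\begin{itemize}
\item On a left gap $[y_1,x_1]$ we bound the tent by $\sqrt2(t-y_1)$. Integrating gives $\sqrt2^p\,\ell^{p+1}/(p+1)$, where $\ell\le\min(|x_1-y_1|,\ y_2-y_1)$.
\item On a right gap we do the same with $\sqrt2(y_2-t)$.
\end{itemize}
We then write $\ell^{p+1}=\ell\cdot\ell^p$. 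The factor $\ell^p$ is at most $|x_1-y_1|^p$ or $|x_2-y_2|^p$, hence controlled by $\|x-y\|^p$. The factor $\ell$ is at most the length of an active interval, hence at most $\sqrt2\max(d(x,\thediag),d(y,\thediag))$. This produces the term $\sqrt2^p\frac{\sqrt2}{p+1}\max(\cdot)\|x-y\|^p$.

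\emph{Disjoint case.} When $x_2\le y_1$ (or symmetrically), each full active interval is a gap. We bound each tent by its half-tent integral, using that both $y_1-x_1$ and $y_2-x_2$ exceed the interval lengths there. This reduces to the same estimate.

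The main obstacle is the bookkeeping in $I_1$. There are two gaps, left and right, and we must avoid paying a factor $2$. The plan is to charge the left gap to $|x_1-y_1|^p$ and the right gap to $|x_2-y_2|^p$, while sharing the single length factor $\sqrt2\max d(\cdot,\thediag)$. We would then conclude with an inequality of the form $|x_1-y_1|^p+|x_2-y_2|^p\lesssim\|x-y\|^p$. That inequality is exact for $p\ge2$; for $1\le p<2$ we would instead keep the crude bound $\ell^p\le\|x-y\|^p$ and verify that the two gap lengths sum to at most one active-interval length. If this fails, we would fall back to a slightly larger constant.
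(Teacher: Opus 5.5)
\paragraph{Same decomposition as the paper.} Your split into one-active gaps and the both-active overlap is the paper's. On each gap you use the half-tent bound $\frac{\sqrt2^p}{p+1}\ell^{p+1}=\frac{\sqrt2^p}{p+1}\,\ell\cdot\ell^p$. On $I_2$ you use the pointwise bound $\sqrt2\|x-y\|$ together with $|I_2|\le\min(x_2-x_1,y_2-y_1)$. Each of these estimates is correct, and for $p\ge2$ the sum closes.

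\paragraph{The gap for $1\le p<2$.} Your fallback does not work. The claim that the two gap lengths sum to at most one active-interval length is false in the staggered configuration $x_1\le y_1\le x_2\le y_2$. There the gaps $[x_1,y_1]\subset[x_1,x_2]$ and $[x_2,y_2]\subset[y_1,y_2]$ lie in different active intervals. For instance, $x=(0,1)$ and $y=(0.9,1.9)$ have gap lengths $0.9+0.9$, while each active length is $1$. For $p=1$ in this example, your two ways of summing give about $3.07$ and $3.42$, whereas the right-hand side is $2.7$. Retreating to a larger constant does not prove the statement. The paper's own proof has the same hole: its closing step (``from which we deduce'') needs $|y_1-x_1|^p+|y_2-x_2|^p\le\|x-y\|^p$, which fails for $p<2$.

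\paragraph{The missing idea.} The overlap and the gaps are complementary in length. Charging $I_2$ with the full $\min(L_x,L_y)$, where $L_z=z_2-z_1=\sqrt2\,d(z,\thediag)$, wastes exactly the budget the second gap needs.

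Let $M=\max(|x_1-y_1|,|x_2-y_2|)\le\|x-y\|$. The map $z\mapsto\min(t-z_1,z_2-t)$ is $1$-Lipschitz for the sup norm, so the integrand on $I_2$ is at most $\sqrt2^pM^p$.

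\emph{Staggered case.} Put $a=y_1-x_1$, $b=y_2-x_2$ and $c=|I_2|=x_2-y_1$. Then $L_x=a+c$ and $L_y=b+c$, so $\max(L_x,L_y)=M+c$. Using $\frac{2}{p+1}\le1$,
\[
\int_\bR d(\tilde\pi_t(x),\tilde\pi_t(y))^p\,\dd t\le\sqrt2^p\left[\frac{a^{p+1}+b^{p+1}}{p+1}+cM^p\right]\le\sqrt2^p(M+c)M^p=\sqrt2^p\max(L_x,L_y)\,M^p .
\]

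\emph{Nested case} $x_1\le y_1\le y_2\le x_2$. Here $L_x=a+b'+L_y$ with $b'=x_2-y_2$, and the same pieces give $\sqrt2^pM^p\bigl[\frac{a+b'}{p+1}+L_y\bigr]\le\sqrt2^pL_xM^p$.

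\emph{Disjoint case.} Here $M\ge\max(L_x,L_y)$, and the two full tents give $\frac{\sqrt2^p}{2^p(p+1)}(L_x^{p+1}+L_y^{p+1})\le\sqrt2^p\max(L_x,L_y)M^p$.

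In every configuration the integral is therefore at most $\sqrt2^{p+1}\max(d(x,\thediag),d(y,\thediag))\|x-y\|^p$. This implies the stated bound for all $p\ge1$, with no case split on $p$.
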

\begin{proof}
Suppose that $x_1 \leq y_1 \leq x_2 \leq y_2$, we then get:
\begin{align}
    \int_\bR d(\tilde\pi_t(x), \tilde\pi_t(y))^p\dd t =\int_{x_1}^{y_1}\|\tilde\pi_t(x) - \pi(\tilde\pi_t(x))\|^p\dd t &+
    \int_{y_1}^{x_2}\|\tilde\pi_t(x) - \tilde\pi_t(y)\|^p\dd t \\ &+
    \int_{x_2}^{y_2}\|\tilde\pi_t(y) - \pi(\tilde\pi_t(y)\|^p\dd t.
\end{align}
Using the results above, if $y_1 \leq \frac{x_1+x_2}{2}$ we have
\begin{align} \int_{x_1}^{y_1}\|\tilde\pi_t(x) - \pi(\tilde\pi_t(x))\|^p\dd t &= \int_{x_1}^{y_1}[\sqrt2(t-x_1)]^p\dd t \\&= \frac{\sqrt{2}^p}{p+1}(y_1 - x_1)^{p+1} \leq \frac{\sqrt{2}^p}{p+1}(x_2-x_1)(y_1-x_1)^p,
\end{align}
and if $y_1 \geq \frac{x_1 + x_2}{2}$ we have
\begin{align} 
\int_{x_1}^{y_1}\|\tilde\pi_t(x) - \pi(\tilde\pi_t(x))\|^p\dd t &=
\frac{\sqrt{2}^p}{p+1}(\frac{x_2-x_1}{2})^{p+1} + \int_{\frac{x_1+x_2}{2}}^{y_1}[\sqrt2(x_2-t)]^p\dd t \\&\leq \frac{\sqrt{2}^p}{p+1}(x_2-x_1)(y_1-x_1)^p.
\end{align}
Similarly,
\begin{equation} 
\int_{x_2}^{y_2}\|\tilde\pi_t(y) - \pi(\tilde\pi_t(y))\|^p\dd t \leq \frac{\sqrt2^p}{p + 1}(y_2 - y_1)(y_2 - x_2)^p.
\end{equation}
Furthermore, using the results above we get:
$\int_{y_1}^{x_2}\|\tilde\pi_t(x) - \tilde\pi_t(y)\|^p\dd t \leq \sqrt2^p(x_2 - x_1)\|x-y\|^p$ from which we deduce the desired result.
The calculation is the same when $x_1 \leq y_1 \leq y_2 \leq x_2$.
\end{proof}

Then, let $\gamma$ be the optimal transport plan achieving the infimum in $\FG_p(\mu,\nu)$. We can then define $\tilde\gamma \in \Adm(\tilde\mu, \tilde\nu)$ using \cref{prop:tilde-transport-plan}, yielding he following result.
\begin{prop} \label{prop:ineg-projc-groundspace}
It holds that
    \begin{equation}
    \int_\bR\iint_{\Omega\times\Omega} d(\tilde\pi_t(x), \tilde\pi_t(y))^p\dd \tilde\gamma(x,y) \dd t \leq \sqrt{2}^{p+1}\left(1 + \frac{1}{p+1}\right)\iint_{\groundspace\times\groundspace}\|x - y\|^p\dd \gamma(x,y).
\end{equation}
\end{prop}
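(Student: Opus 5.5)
We start from the definition of $\tilde\gamma$ given before \cref{prop:tilde-transport-plan}: on $\groundspace\times\groundspace$ it has density $1/\max(d(x,\thediag),d(y,\thediag))$ with respect to $\gamma$. We then exchange the $t$-integral and the $\tilde\gamma$-integral using Fubini--Tonelli, which is legitimate because everything is non-negative. This gives
\begin{equation*}
\int_\bR\iint_{\Omega\times\Omega} d(\tilde\pi_t(x),\tilde\pi_t(y))^p\dd\tilde\gamma(x,y)\dd t
= \iint_{\Omega\times\Omega}\frac{1}{\max(d(x,\thediag),d(y,\thediag))}\left(\int_\bR d(\tilde\pi_t(x),\tilde\pi_t(y))^p\dd t\right)\dd\gamma(x,y).
\end{equation*}
The statement thus reduces to a pointwise bound on the inner integral, for each pair $(x,y)\in\groundspace\times\groundspace$ in the support of $\gamma$. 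The idea is that the $\max$ factor from \cref{lemma-int-projc} cancels exactly against the density of $\tilde\gamma$.

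For the pointwise bound we invoke \cref{lemma-int-projc}. It gives
\begin{equation*}
\int_\bR d(\tilde\pi_t(x),\tilde\pi_t(y))^p\dd t \le \sqrt2^p\left(\tfrac{\sqrt2}{p+1}+\sqrt2\right)\max(d(x,\thediag),d(y,\thediag))\,\|x-y\|^p .
\end{equation*}
The constant factors as $\sqrt2^{p+1}(1+\frac1{p+1})$. Dividing by the max and integrating against $\gamma$ yields exactly the claimed right-hand side.

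The main point to check is that \cref{lemma-int-projc} applies to every pair in the support of $\gamma$. Its proof treats only the overlapping configurations $x_1\le y_1\le x_2\le y_2$ and the nested one $x_1\le y_1\le y_2\le x_2$, together with their symmetric versions. The remaining case is the disjoint one, $x_1\le x_2\le y_1\le y_2$ (or its symmetric). For this case we use the observation made in the proof of \cref{thm:ineg-projn}: an optimal $\gamma$ never transports between such pairs inside $\groundspace\times\groundspace$. The reason is that when the intervals $[x_1,x_2]$ and $[y_1,y_2]$ are disjoint, $d(x,y)\ge d(x,\thediag)+d(y,\thediag)$. Routing both points through the diagonal is then at least as cheap, and by $\Delta_p^p$-cyclical monotonicity \citep[Prop.~2.3]{FG10} such pairs can be excluded. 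If ties make this delicate, we would instead check the disjoint case directly. There the integrand is non-zero only on $[x_1,x_2]\cup[y_1,y_2]$, and its $\int$ is bounded by
\begin{equation*}
\frac{\sqrt2^p}{p+1}\bigl((x_2-x_1)d_x^p+(y_2-y_1)d_y^p\bigr)\lesssim \max(d_x,d_y)\|x-y\|^p,
\end{equation*}
where $d_x=d(x,\thediag)$, $d_y=d(y,\thediag)$, and the final inequality uses $d_x,d_y\le\|x-y\|$. With either argument the pointwise estimate holds $\gamma$-a.e. on $\groundspace\times\groundspace$, and the proposition follows.
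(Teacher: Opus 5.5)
Your proof is correct and follows the paper's argument: Fubini with the density $1/\max(d(x,\thediag),d(y,\thediag))$ of $\tilde\gamma$ on $\groundspace\times\groundspace$, the pointwise bound of \cref{lemma-int-projc}, and the exclusion of disjoint-interval pairs from $\supp(\gamma)$. Your fallback direct check of the disjoint case is a worthwhile addition once you state its constant explicitly, namely $\frac{2\sqrt2^{p+1}}{p+1}\le\sqrt2^{p+1}(1+\frac{1}{p+1})$: it covers the $p=1$ tie configuration ($x_2=y_1$, $x_2-x_1=y_2-y_1$) that the paper's support observation glosses over, and it makes the bound valid for any admissible $\gamma$, not only an optimal one.
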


\begin{proof}
    Observe that $\supp(\gamma) \cap \{(x,y) \in \Omega^2 : x_1 \leq x_2 \leq y_1 \leq y_2 \text{ or } y_1 \leq y_2 \leq x_1 \leq x_2\} = \emptyset$, the result is then a consequence of \cref{lemma-int-projc} and of the definition of $\tilde\gamma$.
\end{proof}

We can then write:
\begin{align}
    \int_\bR\iint_{\groundspace\times\thediag}d(\tilde\pi_t(x), &\tilde\pi_t(y))\dd \tilde\gamma(x,y)\dd t = \frac{\sqrt2}{p+1}\iint_{\groundspace\times\thediag}d(x,\thediag)^{p+1}\frac{\dd\gamma(x,y)}{d(x, \thediag)} \\
    &\quad\quad+ \frac{\sqrt2}{p+1}\iint_{\groundspace_{1,+}}d(x, \thediag)^{p+1}\left(\frac{1}{d(x,\thediag)} - \frac{1}{d(y,\thediag)} \right)\dd \gamma(x,y)
\end{align}
And:
\begin{align}
    \int_\bR\iint_{\thediag\times\groundspace}d(\tilde\pi_t(x), &\tilde\pi_t(y))\dd \tilde\gamma(x,y)\dd t = \frac{\sqrt2}{p+1}\iint_{\thediag\times\groundspace}d(y,\thediag)^{p+1}\frac{\dd\gamma(x,y)}{d(y, \thediag)} \\
    &\quad\quad+ \frac{\sqrt2}{p+1}\iint_{\groundspace_{2,+}}d(y, \thediag)^{p+1}\left(\frac{1}{d(y,\thediag)} - \frac{1}{d(x,\thediag)} \right)\dd \gamma(x,y)
\end{align}
Where $\groundspace_{1,+} = \{(x, y) \in \groundspace\times\groundspace: d(y,\thediag) \geq d(x, \thediag)\}$ and $\groundspace_{2,+} = \{(x, y) \in \groundspace\times\groundspace: d(x,\thediag) \geq d(y, \thediag)\}$. Combining this with \cref{prop:ineg-projc-groundspace} we get:
\begin{align} \label{eq:ineg-projc-sharp}
    \begin{split}
    \frac{\sqrt2}{p+1}\widehat{\SFG}_p^p(\mu, \nu) \leq &\frac{\sqrt{2}^{p+1}}{p+1}\FG_p^p(\mu,\nu) + \sqrt{2}^{p+1}\iint_{\groundspace\times\groundspace}d(x,y)^p\dd \gamma(x,y) \\ &+ \frac{\sqrt2}{p+1}\iint_{\groundspace\times\groundspace}\frac{\min(d(x,\thediag), d(y,\thediag))^{p}}{\max(d(x,\thediag),d(y,\thediag))}d(x,y)\dd\gamma(x,y)
    \end{split}
\end{align}
And, if we further suppose that $\mu, \nu \in \cM^{\infty}(\Omega)$ we finally obtain the result of \ref{prop:ineg-projc}. \\

As in the case of $\SFG$, a uniform lower bound is not obtainable for $\widehat\SFG$. This time, the diagrams to consider are $\mu = \bigcup_{n=1}^p\{(-\alpha + 2n\delta, \alpha), (-\alpha + (2n +1)\delta, \alpha + \delta\}$ and $\nu = \bigcup_{n=1}^p\{(-\alpha + 2n\delta, \alpha + \delta), (-\alpha + (2n +1)\delta, \alpha\}$. One can then show that $\widehat{\SFG}(\mu, \nu)$ is on the order of $\delta^{p+1}$ when $\FG(\mu,\nu)$ is on the order of $\delta^p$. \\

\textbf{Topological equivalence of $\widehat\SFG$ and $\FG$}. We proceed in the same way as for $\SFG$ to show that convergence for $\FG_p$ implies convergence for $\widehat\SFG_p$.
\begin{prop}
    Let $\mu \in \cD^p(\groundspace)$ and $(\mu_n)$ be a sequence of $\cD^p(\groundspace)$ such that $\mu_n \xrightarrow[]{\FG_p} \mu$ then $\mu_n \xrightarrow[]{\widehat\SW_p} 0$.
\end{prop}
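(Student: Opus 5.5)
The statement has typos: I read it as ``$\mu_n \xrightarrow{\FG_p}\mu$ implies $\widehat\SFG_p(\mu_n,\mu)\to 0$''. The plan mirrors the proof given for $\SFG_p$. I will use the sharp inequality \eqref{eq:ineg-projc-sharp} with $\gamma_n$ an optimal plan for $\FG_p(\mu_n,\mu)$. Its first two terms are bounded by a constant times $\FG_p^p(\mu_n,\mu)$, since $\iint d(x,y)^p\dd\gamma_n \le \FG_p^p(\mu_n,\mu)$, so both tend to $0$. For $p=1$ the third term is at most $\iint d(x,y)\dd\gamma_n=\FG_1(\mu_n,\mu)$, because $\min(\cdot)/\max(\cdot)\le 1$. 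So for $p=1$ the result follows directly from \cref{rem:ineg-projn-1}.

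For $p>1$ the only remaining term is
\[
T_n \coloneqq \iint_{\groundspace\times\groundspace}\frac{\min(d(x,\thediag),d(y,\thediag))^p}{\max(d(x,\thediag),d(y,\thediag))}\,d(x,y)\,\dd\gamma_n(x,y).
\]
Fix $\varepsilon>0$ and split $\groundspace\times\groundspace$ into two regions.

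\textbf{Region A:} both points satisfy $d(\cdot,\thediag)\ge\varepsilon$. The integrand is at most $d(x,\thediag)^{p-1}d(x,y)$. I will apply Hölder with exponents $p/(p-1)$ and $p$, which gives the bound $\mathrm{Pers}_p(\mu)^{p-1}$ times $\big(\iint d^p\dd\gamma_n\big)^{1/p}$. This tends to $0$ and avoids the pointwise argument ``$d(x,y)<\varepsilon^2$ on the support'' used earlier. In fact the same Hölder bound applies on the whole space, via $\min^p/\max\le \max^{p-1}$, controlled by $\mathrm{Pers}_p(\mu_n)^{p-1}+\mathrm{Pers}_p(\mu)^{p-1}$. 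These persistences are uniformly bounded because $\mathrm{Pers}_p(\mu_n)\to\mathrm{Pers}_p(\mu)$ by \eqref{eq:convergence}. So $T_n \le C\,\FG_p(\mu_n,\mu)\to 0$ with no splitting at all.

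I will still double-check that the claimed Hölder bound holds, since $\gamma_n$ has possibly infinite mass. The quantity $\iint \max(d(x,\thediag),d(y,\thediag))^p\,\dd\gamma_n$ is at most $\mathrm{Pers}_p(\mu_n)^p+\mathrm{Pers}_p(\mu)^p$. This holds because each marginal integral of $d(\cdot,\thediag)^p$ equals the corresponding persistence; on pairs involving $\thediag$ the distance is zero.

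\textbf{Region B (fallback):} pairs with a point near the diagonal. If that check somehow fails, I will use the $\varepsilon$-splitting of the $\SFG$ proof. On the region where one point is within $2\varepsilon$ of $\thediag$, bound the integrand by $\min^p\le$ the persistence near the diagonal. This gives $\mathrm{Pers}_p^p(\mu_{\le2\varepsilon})+\mathrm{Pers}_p^p({\mu_n}_{\le2\varepsilon})$, which is small uniformly in $n$ by \eqref{eq:convergence}.

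I expect the main obstacle to be the bookkeeping: the inequality of \cref{prop:ineg-projc} is derived for plans $\tilde\gamma$ built from $\gamma$ via \cref{prop:tilde-transport-plan}. One must therefore make sure that \eqref{eq:ineg-projc-sharp} holds for any admissible $\gamma$, and hence for the optimal $\gamma_n$, without assuming $\mathrm{Pers}_\infty$ finite. I will verify this by rereading the derivation, which uses only \cref{lemma-int-projc} pointwise.
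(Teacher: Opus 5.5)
Your argument is correct, and it follows a genuinely different route from the paper. The paper gives no separate proof here. It only says it proceeds ``in the same way as for $\SFG_p$'': plug an optimal $\gamma_n$ into \eqref{eq:ineg-projc-sharp} and control the third term $T_n$ by an $\varepsilon$-splitting. Far from $\thediag$ it uses that, for diagrams, matched points are eventually $\varepsilon^2$-close. Near $\thediag$ it uses that the persistence of the parts of $\mu_n$ and $\mu$ within $2\varepsilon$ of the diagonal is uniformly small, via \eqref{eq:convergence}. You replace all of this by the single estimate
\[
T_n \le \Bigl(\mathrm{Pers}_p^p(\mu_n)+\mathrm{Pers}_p^p(\mu)\Bigr)^{\frac{p-1}{p}}\,\FG_p(\mu_n,\mu).
\]
It follows from $\min^p/\max\le\max^{p-1}$, Hölder's inequality (which is valid for infinite-mass $\gamma_n$) and the marginal constraints, exactly as you check. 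What this buys is a quantitative bound $\widehat\SFG_p^p(\mu,\nu)\lesssim \FG_p^p(\mu,\nu)+\bigl(\mathrm{Pers}_p(\mu)+\mathrm{Pers}_p(\nu)\bigr)^{p-1}\FG_p(\mu,\nu)$ valid on all of $\cM^p(\groundspace)$. It needs no $\cM^\infty$ assumption, unlike \cref{prop:ineg-projc}, whose term $\iint d(x,y)\,\dd\gamma$ can even be infinite when $p>1$. It also makes no use of the atomic structure of diagrams. The same trick, applied to the extra terms in \eqref{eq:inegProjnSharp}, would streamline the $\SFG_p$ case as well. The paper's splitting gives nothing extra here beyond matching the template of the $\SFG_p$ proof.

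Two small corrections. First, the derivation of \eqref{eq:ineg-projc-sharp} uses more than \cref{lemma-int-projc} pointwise. It also uses that an optimal plan never matches points whose intervals $[x_1,x_2]$ and $[y_1,y_2]$ are disjoint. This is harmless, since your $\gamma_n$ is optimal; in any case such pairs satisfy $d(x,\thediag)^p+d(y,\thediag)^p\le d(x,y)^p$, so the lemma extends to them. Second, the fallback's pointwise bound ``integrand $\le\min^p$'' is false for arbitrary pairs, because $d(x,y)/\max$ is unbounded. It holds, up to a factor $2^{1/p}$, only on the support of an optimal plan, where $d(x,y)^p\le d(x,\thediag)^p+d(y,\thediag)^p$. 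Your main argument does not need the fallback.
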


To prove the converse implication we will again show that convergence for $\widehat\SFG$ implies characterisation (\ref{eq:convergence}). Like previously, we easily get the convergence of persistences:
\begin{prop}
    Let $\mu \in \cD^p(\groundspace)$ and $(\mu_n)$ be a sequence of $\cD^p(\groundspace)$ such that $\mu_n \xrightarrow[]{\SW_p} \mu$. Then $\mathrm{Pers}_p(\mu_n) \to \mathrm{Pers}_p(\mu)$.
\end{prop}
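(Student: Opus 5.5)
The plan is to derive the convergence of persistences from convergence in the original Figalli--Gigli distance. I read $\SW_p$ as the exponent-$p$ version of the Sliced Wasserstein distance of \citet{pmlr-v70-carriere17a} recalled in \cref{subsubsec:kernel-for-PD}, namely $\SW_p(\mu,\nu)^p \coloneqq \int_\bS \rW_p^p(\mu_\theta,\nu_\theta)\dd\theta$ with $\mu_\theta = \pi_\theta\#(\mu+\pi_\Delta\#\nu)$ and $\nu_\theta = \pi_\theta\#(\nu+\pi_\Delta\#\mu)$. This object is only defined for finite diagrams. I will therefore work with $\mu$ and $(\mu_n)_n$ finite, with cardinalities bounded by a common $N$. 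This is the setting in which the comparison \eqref{eq:result-carriere-stability} holds, and I will state this restriction explicitly.

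\textbf{Step 1 (from $\SW_p$ to $\SW_1$).} For fixed $\theta$, the measures $\mu_{n,\theta}$ and $\mu_\theta$ have the same total mass $m \leq 2N$. By Jensen's inequality applied to an optimal plan for the normalised measures, $\rW_1(\mu_{n,\theta},\mu_\theta) \leq m^{1-1/p}\,\rW_p(\mu_{n,\theta},\mu_\theta)$. Integrating over $\theta \in \bS$ and using Hölder's inequality on the finite measure space $\bS$ then gives
\begin{equation*}
\SW(\mu_n,\mu) \leq C(N,p)\,\SW_p(\mu_n,\mu),
\end{equation*}
so $\SW(\mu_n,\mu) \to 0$.

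\textbf{Step 2 (back to $\FG_p$).} The left inequality of \eqref{eq:result-carriere-stability} gives $d_1(\mu_n,\mu) \leq (1+2N(N-1))\,\SW(\mu_n,\mu) \to 0$. By \cref{rem:comparison}, $d_1 = \FG_1$ on finite diagrams. Moreover, $d_p \leq d_1$: for any fixed bijection, the $\ell^p$ norm of the family of matching distances is bounded by its $\ell^1$ norm, and one then takes the infimum. Hence $\FG_p(\mu_n,\mu) \to 0$.

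\textbf{Step 3 (persistences).} By \cref{prop:sfg-pers}, $\FG_p(\nu,\emptyset) = \mathrm{Pers}_p(\nu)$. Since $\FG_p$ is a distance, the triangle inequality yields
\begin{equation*}
|\mathrm{Pers}_p(\mu_n) - \mathrm{Pers}_p(\mu)| \leq \FG_p(\mu_n,\mu) \to 0,
\end{equation*}
which is the claim. Alternatively, one can invoke the characterisation \eqref{eq:convergence} directly.

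The main obstacle is the dependence of the constants in Steps 1 and 2 on the cardinality $N$. Without a uniform bound on the number of points, $\SW_p$ is not even clearly defined, and the lower bound \eqref{eq:result-carriere-stability} degenerates. A more direct route would compare $\SW_p(\nu,\emptyset)$ with $\mathrm{Pers}_p(\nu)$ and apply the triangle inequality for $\SW_p$. However, $\rW_p^p(\pi_\theta\#\nu, \pi_\theta\#\pi_\Delta\#\nu)$ is only bounded above by $\sum_x d(x,\thediag)^p|\cos\theta_x|^p$ and is not a function of $\mathrm{Pers}_p$ alone. This is why I prefer to pass through $\FG_p$ under the bounded-cardinality assumption.
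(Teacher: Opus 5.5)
There is a genuine gap. You prove a different statement from the one the paper intends, and even your version needs an extra hypothesis.

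Here $\SW_p$ is leftover notation for the continuous sliced Figalli--Gigli distance $\widehat\SFG_p$ of the appendix, not for the sliced Wasserstein distance of \citet{pmlr-v70-carriere17a}. The proposition sits in the paragraph on the topological equivalence of $\widehat\SFG_p$ and $\FG_p$, right after the announcement that ``convergence for $\widehat\SFG$ implies characterisation \eqref{eq:convergence}''. Your literal reading is not well posed. $\cD^p(\groundspace)$ contains countably infinite diagrams, on which the $\SW$ construction is undefined (as you note yourself), and the paper never defines an exponent-$p$ version of $\SW$. Restricting to finite diagrams with uniformly bounded cardinality is therefore an added hypothesis, not a reduction. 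So Steps 1--2, although sound in that restricted setting, do not establish the proposition.

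Nor can the detour through $\FG_p$ be carried over to $\widehat\SFG_p$. The paper points out that no uniform bound $\FG_p \leq C\,\widehat\SFG_p$ holds. Moreover, showing that $\widehat\SFG_p$-convergence implies $\FG_p$-convergence is exactly the converse implication this proposition is meant to help prove.

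The missing idea is the ``more direct route'' you set aside, which works for the right distance. By \cref{prop:sfgPersContinuous}, $\widehat\SFG_p(\nu,\emptyset) = \mathrm{Pers}_p(\nu)$ for every $\nu \in \cM^p(\groundspace)$. The obstruction you found for $\SW$ disappears here. Slicing along geodesics issued from $\thediag$, combined with the $1/d(x,\thediag)$ renormalisation, makes the integrated cost of sending each point to the diagonal exactly its contribution $d(x,\thediag)^p$ to $\mathrm{Pers}_p^p$. Since $\widehat\SFG_p$ satisfies the triangle inequality (inherited from $\FG_p$ through Minkowski's inequality in $L^p(\dd t)$, as in \cref{prop:SFG-dist}), you get
\[
|\mathrm{Pers}_p(\mu_n) - \mathrm{Pers}_p(\mu)| = |\widehat\SFG_p(\mu_n,\emptyset) - \widehat\SFG_p(\mu,\emptyset)| \leq \widehat\SFG_p(\mu_n,\mu) \longrightarrow 0.
\]
This needs no cardinality assumption and covers infinite diagrams. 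It is the paper's two-line proof. Your Step 3 uses the same mechanism, but applies it to $\FG_p$ instead of the metric in which convergence is actually assumed.
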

\begin{proof}
    We have $\widehat\SW_p(\nu, \emptyset) = \mathrm{Pers}_p(\nu)$ for all $\nu \in \cM^p(\groundspace)$. The result then follows from the triangle inequality.
\end{proof}
In order to prove that the $\widehat\SFG_p$ convergence implies vague convergence we will proceed in a similar manner as for $\SFG$ by simply changing the shape of the region $B_\varepsilon$. 

\begin{definition}
    We now denote by $\hat B_{\varepsilon, t}$ the "elbow" of width $\varepsilon$ centered at $x + (t-t_x)(1,1)$ where $t_x = \frac{x_1 + x_2}{2}$ (see Figure 3). We denote by $\hat B_{\varepsilon, v, t}$ (resp. $\hat B_{\varepsilon, h, t}$) the vertical (resp. horizontal) part of $\hat B_{\varepsilon,t}$. We also define the following quantity (similar to $\tilde\SFG$):
    \begin{equation} 
    \bar\SFG_{\varepsilon, \eta}(\mu,\nu) \coloneqq \frac{1}{\sqrt2}\left(\int_\bR\tilde\FG_{B_{\varepsilon,t},B_{\eta, t}}^p(\tilde\pi_t\#\tilde\mu,\tilde\pi_t\#\tilde\nu)\dd t\right)^{\frac{1}{p}}.
    \end{equation}
    
    As before, we also define $\Delta_{t, \varepsilon} \coloneqq \tilde\mu_n(B_{\varepsilon,t}) - \tilde\mu(B_{\varepsilon,t})$, $\Delta_{v,t, \varepsilon} =\tilde\mu_n(B_{\varepsilon,v,t}) - \tilde\mu(B_{\varepsilon,v,t})$ and $\Delta_{h,t, \varepsilon} = \tilde\mu_n(B_{\varepsilon,h,t}) - \tilde\mu(B_{\varepsilon,h,t})$.
\end{definition}

\begin{figure}[!h] \label{fig:def-B-continuous}
    \centering
    \begin{subfigure}{0.32\textwidth}
        \centering
        \includegraphics[width = 0.92\linewidth]{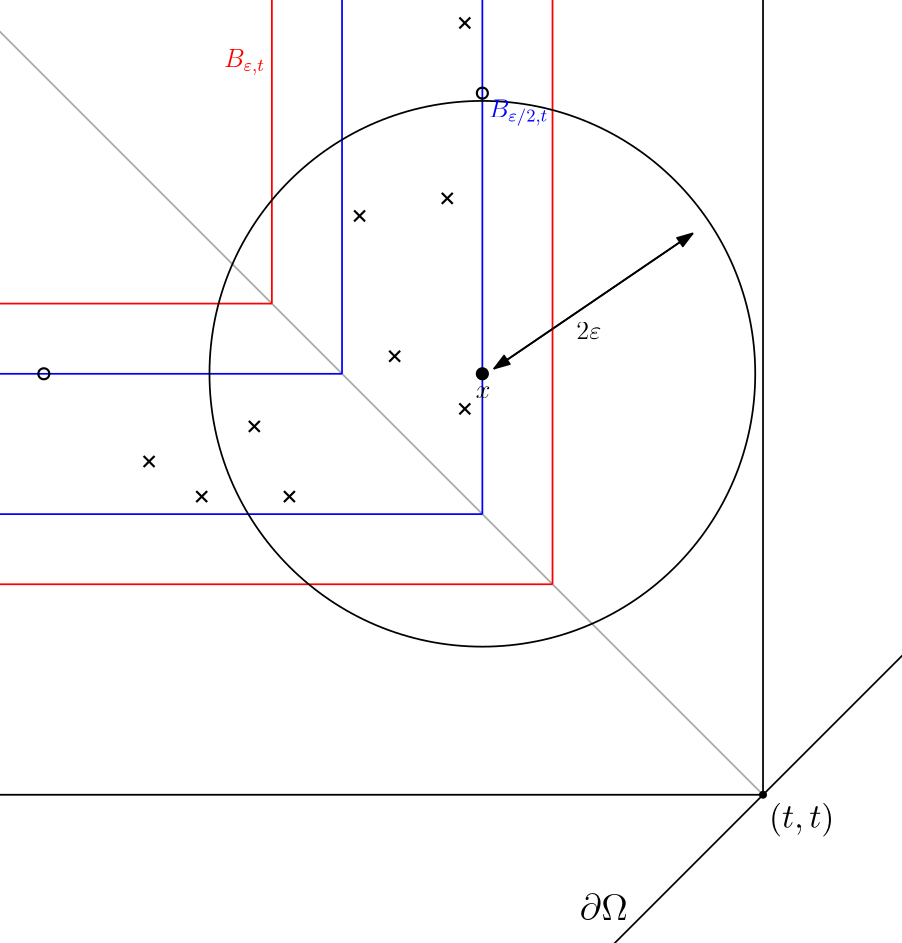}
        \caption{$B_{\varepsilon/2,t}$ at $t_0$}
        \label{fig:projc_t0}
    \end{subfigure}
    \hfill
    \begin{subfigure}{0.32\textwidth}
        \centering
        \includegraphics[width=0.9\linewidth]{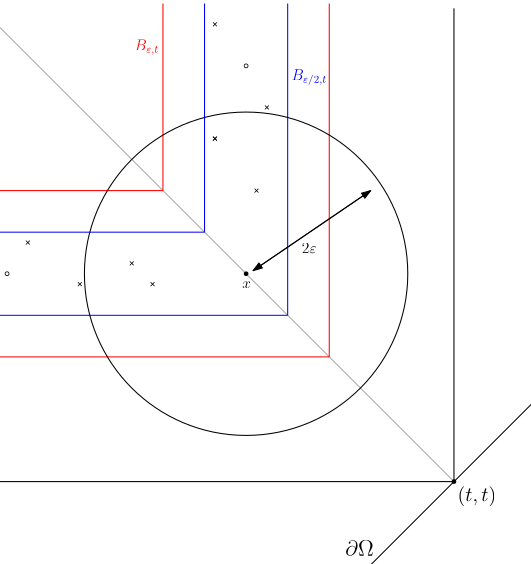}
        \caption{$B_{\varepsilon/2,t}$ at $t_x$}
        \label{fig:projc_tx}
    \end{subfigure}
    \hfill
    \begin{subfigure}{0.32\textwidth}
        \centering
        \includegraphics[width=0.9\linewidth]{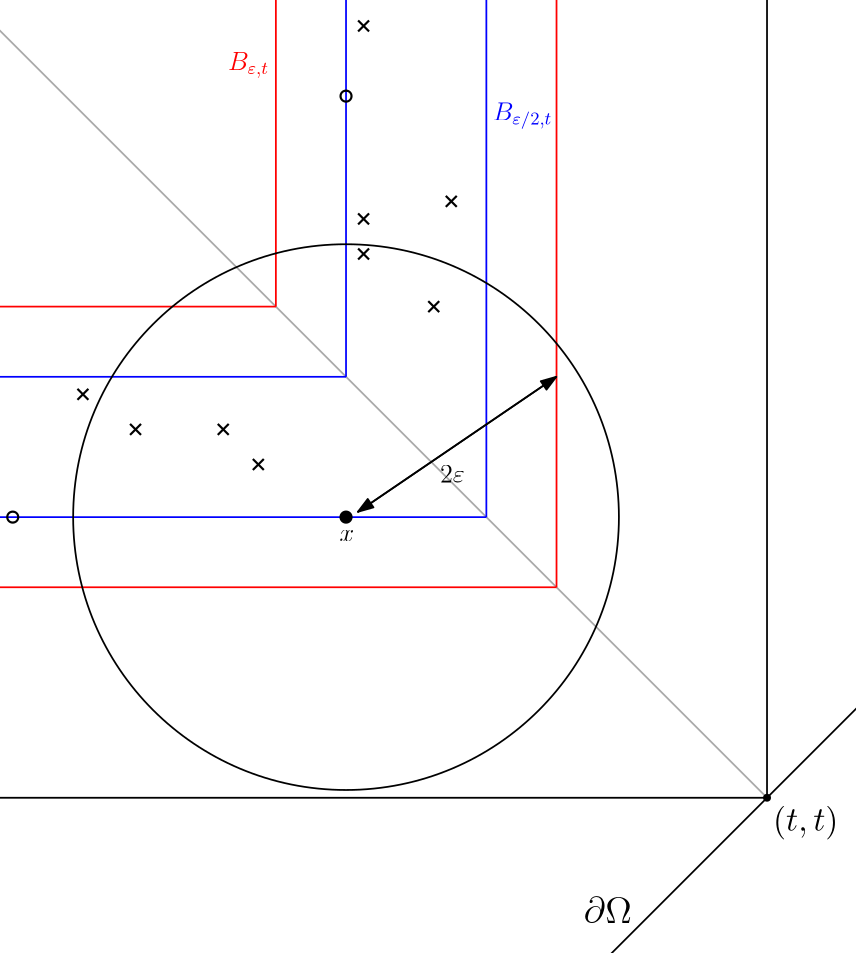}
        \caption{$B_{\varepsilon/2,t}$ at $t_1$}
        \label{fig:projx_t1}
    \end{subfigure}
    \caption{The construction we use for the proof of Proposition \ref{prop:top-SFG}. $B_{\varepsilon,t}$ is in red, $B_{\varepsilon/2,t}$ is in blue. Points of $\mu_n$ are represented by crosses, and those of $\mu$ by circles. $\varepsilon$ was chosen small enough so that all circles lie on $D_{x,v} \cup D_{x,h}$}
\end{figure}

 With these new definitions, the results of Lemmas \ref{lem:trop-projn} and \ref{lem:technical-top-n} still hold. The proof is then very similar to that of Proposition \ref{prop:top-SFG}, we argue by contradiction to lower bound $\bar\SFG$ by studying $\hat\Delta_t, \hat\Delta_{v,t}$ and $\hat\Delta_{h,t}$.

\begin{prop}
    Let $\mu \in \cD^p(\groundspace)$ and $(\mu_n)$ a sequence of $\cD^p(\groundspace)$ satisfying $\mu_n \xrightarrow[]{\widehat\SW_p}\mu$. Then, for all $x \in \supp(\mu)$, there exists $\eta > 0$ such that for any $0 < \varepsilon < \eta$, $\mu_n(1_{B(x, \varepsilon)}) \to \mu(x)$
\end{prop}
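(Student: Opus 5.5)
The plan is to mirror the proof of \cref{prop:top-SFG}, replacing the strips $B_\varepsilon$ by the elbows $\hat B_{\varepsilon,t}$ and working with $\bar\SFG$ instead of $\tilde\SFG$. We treat $p=1$ for readability; the general case changes only constants.

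The first reduction is the analogue of \cref{lem:trop-projn}. We check that $\bar\SFG_{\varepsilon/2,\varepsilon}(\mu_n,\mu)\le \widehat\SFG_p(\mu_n,\mu)$. For each $t$, take an optimal plan for $\FG_p(\tilde\pi_t\#\tilde\mu_n,\tilde\pi_t\#\tilde\mu)$. Mass leaving $\hat B_{\varepsilon/2,t}$ (resp. $\hat B_{\varepsilon,t}$) is rerouted to the boundary of the elbow, which does not increase the cost. This uses that $\tilde\pi_t$ is $\sqrt2$-Lipschitz on $\groundspace_t$, by the case computation preceding \cref{lemma-int-projc}.

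Next comes the analogue of \cref{lem:technical-top-n}. We choose $\eta$ so small that the points of $\supp(\mu)$ inside the elbows lie on $D_{x,v}\cup D_{x,h}$ and are $2\varepsilon$-separated. Then, whenever $\hat\Delta_{t}$ (or its vertical or horizontal part) exceeds $C$ on an interval $I$, we get $\bar\SFG\ge C|I|\varepsilon/2$. The reason is that the excess mass must be moved a distance of order $\varepsilon$ to the elbow boundary. Here we must check that the continuous projection does not shrink this distance to $0$ near $t=z_1$ or $t=z_2$, where $\tilde\pi_t(z)\to\thediag$. To avoid this, we restrict $I$ to a window around $t_x$ where the factor $1-\|z-\pi_t(z)\|/\|z-\pi(z)\|$ stays bounded below.

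With these two tools, we follow the two steps of \cref{prop:top-SFG}.
\begin{enumerate}
\item \emph{Lower bound: $\liminf\mu_n(1_{B(x,\varepsilon)})\ge\mu(x)$.} Argue by contradiction. As $t$ sweeps $[t_x-\varepsilon,t_x+\varepsilon]$, the point $x$ crosses from the vertical to the horizontal arm. We track $\hat\Delta_{v,t}$ and $\hat\Delta_{h,t}$ instead of a single $\Delta_t$. The quantization trick, that $\Delta$ stays within $\delta=1/(4d(x,\thediag))$ of a multiple of $1/d(x,\thediag)$, holds since $\mathrm{Pers}_p(\mu_n)\to\mathrm{Pers}_p(\mu)$ by the preceding proposition. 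A deficit of points of $\mu_n$ near $x$ then produces $|\hat\Delta|\ge 1/d(x,\thediag)-\delta$ on an interval of length of order $\varepsilon$, unless a neighbouring point $y$ of $\mu$ compensates. In that case we iterate on $y$ at scale $\varepsilon/4$, and finiteness of $D_x\cap\supp(\mu)$ stops the induction. This gives a uniform positive lower bound on $\bar\SFG$, contradicting $\widehat\SFG_p(\mu_n,\mu)\to0$.
\item \emph{Upper bound: $\limsup\mu_n(1_{B(x,\varepsilon)})\le\mu(x)$.} This is symmetric. Using step one for every $z\in D_x$, an excess of points near $x$ forces $\hat\Delta\ge 1/d(x,\thediag)-\delta$ on an interval of length at least $\varepsilon/4$, which again gives a contradiction.
\end{enumerate}

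The main obstacle is the case analysis in the first step. Unlike the strip, the elbow geometry makes points enter and leave through two different arms, and the arm a point lies in depends on the sign of $t-(z_1+z_2)/2$. We would therefore split each case of \cref{prop:top-SFG} according to whether the competing point $y$ lies on $D_{x,v}$ or $D_{x,h}$. We would then check that in each subcase one of $\hat\Delta_{v,t}$ or $\hat\Delta_{h,t}$ stays bounded away from zero on an interval of length comparable to $\varepsilon$.
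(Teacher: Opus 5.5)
Your plan has a genuine gap, located at the step your whole case analysis rests on: the vertical/horizontal version of \cref{lem:technical-top-n}. Under $\tilde\pi_t$, a point $z$ lies in the vertical arm of $\hat B_{\varepsilon/2,t}$ during its rising phase ($t<t_z$, height $\sqrt2(t-z_1)$ on $G_t$). It lies in the horizontal arm during its falling phase (height $\sqrt2(z_2-t)$). Both arms are sent onto the \emph{same} height band of $G_t$. So an excess in $\hat\Delta_{v,t}$ balanced by a deficit in $\hat\Delta_{h,t}$ can be transported at zero instantaneous cost. Nothing has to be pushed to the boundary of $\hat B_{\varepsilon,t}$, and the bound $C|I|\varepsilon/2$ is only available for the total imbalance $\hat\Delta_t$.

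The degeneracy you worry about near $t=z_1,z_2$ is not the real issue. The elbows are preimages of height bands, so distances along $G_t$ are exact height differences, and the Lipschitz property of $\tilde\pi_t$ plays no role in the restriction step.

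Cross-arm matchings are penalized only through the time evolution, because the two arms travel in opposite directions along $G_t$. This is exactly the paper's \cref{lem:min-delta-vh}. Its bound $(t_2-t_1)^2$ is quadratic in time, and it is useful only if the optimal plans change a bounded number of times. The paper therefore proceeds in three steps:
\begin{itemize}
\item it excludes points of $\mu_n$ with persistence above $\mathrm{Pers}_\infty(\mu)+1$;
\item it bounds by $K$ the number of points of $\mu_n$ entering the elbow during $[t_0,t_1]$;
\item it uses that the plans are piecewise constant with at most $4K$ changes, to show that $\Delta_{v,t}$ or $\Delta_{h,t}$ cannot exceed $\varepsilon/8$ for longer than $\varepsilon/8$.
\end{itemize}
The contradiction then comes from $x$ switching arms at $t_x$ while $r\neq\mu(x)$ points of $\mu_n$ switch in $[t_x-\varepsilon/2,t_x+\varepsilon/2]$. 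None of these ingredients is in your plan. Your final claim, that one of $\hat\Delta_{v,t}$, $\hat\Delta_{h,t}$ stays away from zero on an interval of length of order $\varepsilon$, only yields a contradiction through the invalid linear bound.

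The strip bookkeeping does not transfer either. The elbow arms are unbounded: every $z$ with $d(z,\thediag)>d(x,\thediag)$ crosses the band twice, carrying mass $1/d(z,\thediag)$, which can be arbitrarily small. Even the other atoms of $\mu$ on $D_{x,v}\cup D_{x,h}$ have strictly larger persistence than $x$. So $\hat\Delta_t$ has no reason to lie within $\delta$ of a multiple of $1/d(x,\thediag)$, whatever $\mathrm{Pers}_p(\mu_n)$ does.

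Moreover, atoms of $\mu$ on $D_{x,v}$ (resp.\ $D_{x,h}$) occupy the band during exactly the same time window as $x$. They also share its projection for $t\leq t_x$ (resp.\ $t\geq t_x$). Planar $2\varepsilon$-separation therefore gives no separation of entry and exit times, and the iteration over neighbouring atoms from \cref{prop:top-SFG} has nothing to act on.

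Concretely, take $x=(0,s)$, $x'=(s,2s)$, $y=(0,2s)$, $\mu=\delta_x+\delta_{x'}$ and $\nu=2\delta_y$. Then $\tilde\nu$ gives $y$ mass $\sqrt2/s$, the same as $\tilde\mu$ gives $x$ and $x'$. The point $y$ crosses the elbow around height $d(x,\thediag)$ rising at $t_x$ and falling at $t_{x'}$, so the total imbalance vanishes for every $t$. Yet $\nu$ has no point near $x$. Only the arm switch of $x$ at $t_x$, and hence the quadratic mechanism above, detects the discrepancy.
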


\begin{proof}
We again argue by contradiction and assume there exists a point $x$ such that for all $\eta > 0$, there exists $\varepsilon > 0$ verifying $\mu_n(1_{B(x, 2\varepsilon)}) \neq \mu(x)$ for infinitely many $n$. We denote by $t_0$ the instant at which $x$ appears in $B_{\varepsilon/2, t}$ and $t_1$ the instant at which it disappears. We also set $B_{\varepsilon,x} \coloneqq\bigcup_{t\in [t_0,t_1]} B_{\varepsilon,t}$. Similarly to what we did before, we can reduce $\varepsilon$ so that the only points of $\supp(\mu) \cap B_{\varepsilon,x}$ are contained in $D_{x,v} = \{y \in \groundspace : y_1 = x_1 \text{ and } y_2 > x_2\}$ or $D_{x,h} = \{y \in \groundspace : y_2 = x_2\text{ and } y_1 < x_1 \}$. We will now lower bound $\bar\SFG_{\varepsilon/2, \varepsilon}(\mu_n,\mu)$ to obtain a contradiction.\\ \\
First, there cannot be $y \in \supp(\mu_n)$ verifying $d(y,\thediag) > \mathrm{Pers}_\infty(\mu) + 1$ or we would have $\widehat\SFG(\mu_n,\mu) > \frac{C}{\mathrm{Pers}_\infty(\mu) + 1}$. Lastly any point appearing in $B_{\varepsilon/2, t}$ between $t_0$ and $t_1$ cannot disappear in that same time frame. As such, we deduce that at most $K \coloneqq (\mathrm{Pers}_\infty(\mu) + 1)(\mu(B_{\varepsilon}) + 1)$ points of $\mu_n$ appear in $B_{\varepsilon/2, t}$ between $t_0$ et $t_1$.

We will now show that we cannot have $\Delta_{v,t} > \varepsilon/8$ or $\Delta_{h,t} > \varepsilon/8$ for a duration longer than $\varepsilon/8$. For this, we need the following lemma
\begin{lemma} \label{lem:min-delta-vh}
    Let $t_1 < t_2 \in \bR, x \in B_{\varepsilon,h, t_1}$ and  $y\in B_{\varepsilon,v, t_2}$ then:
    \begin{equation} 
    \int_{t_1}^{t_2}d(\tilde\pi_t(x), \tilde\pi_t(y))\dd t \geq (t_2 - t_1)^2.
    \end{equation}
\end{lemma}

\begin{proof}
    We denote $x = (x_1, x_2)$ and $y= (y_1, y_2)$. We have
    \begin{align}
        \int_{t_1}^{t_2}d(\tilde\pi_t(x), \tilde\pi_t(y))\dd t &= 2\int_{t_1}^{t_2}|x_1 + y_2 - 2t|\dd t \\
        & \geq 2\left[(t_1 - \frac{x_1 + y_2}{2})^2 + (t_2 - \frac{x_1 + y_2}{2})^2\right] \\
        &\geq (t_2 -t_1)^2
    \end{align}
\end{proof}

Suppose now for a contradiction that there exists $\alpha \in \bR$ such that $\Delta_{v,t} > \varepsilon/8$ or $\Delta_{h,t} > \varepsilon/8$ for all $t \in [\alpha, \alpha+\varepsilon/8]$. Observe that the optimal transport plan $\pi_{n,t}$ from $\pi_t\#\tilde\mu_n$ to $\pi_t\#\tilde\mu$ only changes in finitely many instants $t \in [\alpha, \alpha+\varepsilon/8]$: those where a point of $\mu_n$ and a point of $\mu$ have the same projection under $\pi_t$ and those where a point of $\mu_n$ disappears/appears in $B_{\varepsilon/2,t}$. This can happen at most $4K$ times. Thus, there exists $\alpha = t_1 \leq \dots \leq t_{4K} = \alpha + \varepsilon/8$ such that for all $1 \leq i \leq 4K$ and all $t \in [t_i, t_{i+1}[$ we have $\pi_{n,t} = \pi_{n,t_i}$. Furthermore, since we are supposing that $\Delta_{v,t} > \varepsilon/8$ or $\Delta_{h,t} > \varepsilon/8$, we get the existence of $A_t \subset B_{\varepsilon/2,t,h} \times B_{\varepsilon/2,t,v} \cup B_{\varepsilon/2,t,v} \times B_{\varepsilon/2,t,h}$ and $C_t \subset B_{\varepsilon/2, t} \times \partial B_{\varepsilon,t} \cup \partial B_{\varepsilon/2,t} \times (D_{x,v}\cup D_{x,h})$ such that $\pi_{n,t}(A_t \cup C_t) > \varepsilon/8$ for all $t \in [\alpha, \alpha + \varepsilon/8]$. We can then write:
\begin{align}
    \int_\alpha^{\alpha + \varepsilon/8}\iint_{\overline{B_{\varepsilon/2,t}}\times \overline{B_{\varepsilon/2,t}}}d(\tilde\pi_t(x), &\tilde\pi_t(y))\dd \pi_{n,t}(x,y) \dd t \\&= 
    \sum_{i=1}^{4K}\int_{t_i}^{t_{i+1}} \iint_{\overline{B_{\varepsilon/2,t}}\times \overline{B_{\varepsilon/2,t}}}d(\tilde\pi_t(x), \tilde\pi_t(y))\dd \pi_{n,t_i}(x,y) \dd t \\
    &\geq \sum_{i=1}^{4K}\iint_{A_{t_i}\cup C_{t_i}}\int_{t_i}^{t_{i+1}}d(\tilde\pi_t(x), \tilde\pi_t(y))\dd t \dd \pi_{n,t_i}(x,y) \\
    &\geq \sum_{i=1}^{4K}\pi_{n,t_i}(A_{t_i})(t_{i+1} - t_i)^2 + \pi_{n,t_i}(C_{t_i})\times\varepsilon/2\tag{according to \cref{lem:min-delta-vh}} \\
    & \ge \frac{\varepsilon}{8}\times \frac{(\varepsilon/8)^2}{4K} \tag{by possibly reducing $\varepsilon$ below 1}
\end{align}

As such we cannot have $\Delta_{v,t} > \varepsilon/8$ or $\Delta_{h,t} > \varepsilon/8$ for longer than $\varepsilon/8$. However at the instant $t_x = \frac{x_1 + x_2}{2}$, the point $x$ of multiplicity $\mu(x)$ goes from $B_{\varepsilon/2,v}$ to $B_{\varepsilon/2,h}$ whereas between $t_x - \varepsilon/2$ and $t_x + \varepsilon/2$, $r \neq \mu(x)$ points of $\mu_n$ go from $B_{\varepsilon/2,v}$ to $B_{\varepsilon/2,h}$. As such, around $t_x$, at least one point of $\mu_n$ must disappear in $B_{\varepsilon/2, v}$ and appear in $B_{\varepsilon/2,h}$ (or the other way around). But then, this would imply that $\Delta_{3\varepsilon/4,t} > \frac{3}{4(\mathrm{Pers}_\infty(\mu) + 1)}$ for a duration at least $\varepsilon/4$, and thus we would get $\bar\SFG \geq \frac{3\varepsilon^2}{32(\mathrm{Pers}_\infty(\mu)+1)}$ which yields the desired contradiction.
\end{proof}

\end{document}